\DeclareMathOperator*{\argmax}{arg\,max}
\DeclareMathOperator*{\Var}{Var}
\newcommand{\reals}{\mathbb{R}}
\newcommand{\E}{\mathop{\mathbb{E}}}
\newcommand{\X}{\mathcal{X}}
\renewcommand{\H}{\mathcal{H}}
\newcommand{\empiricalquadscore}{Q}
\newcommand{\expectedquadscore}{S}
\newcommand{\expectedtruescore}{S^*}
\newcommand{\elf}{M_{\mathrm{ELF}}}
\newcommand{\Reg}{\mathrm{Reg}^*}
\newcommand{\R}{\mathcal{R}}
\newcommand{\uco}{c}  %
\newcommand{\strat}{\sigma}  %
\newcommand\numberthis{\addtocounter{equation}{1}\tag{\theequation}}
\newtheorem{theorem}{Theorem}
\newtheorem{corollary}{Corollary}
\newtheorem{condition}{Condition}
\newtheorem{lemma}{Lemma}
\newtheorem{definition}{Definition}
\newtheorem{claim}{Claim}
\newtheorem{prop}{Proposition}
\begin{abstract}
Winner-take-all competitions in forecasting and machine-learning suffer from distorted incentives.
\citet{witkowski2101incentive} identified this problem and proposed ELF, a truthful mechanism to select a winner.
We show that, from a pool of $n$ forecasters, ELF requires $\Theta(n\log n)$ events or test data points to select a near-optimal forecaster with high probability.
We then show that standard online learning algorithms select an $\epsilon$-optimal forecaster using only $O(\log(n) / \epsilon^2)$ events, by way of a strong approximate-truthfulness guarantee.
This bound matches the best possible even in the nonstrategic setting.
We then apply these mechanisms to obtain the first no-regret guarantee for non-myopic strategic experts.
\end{abstract}
\begin{document}

\title{Efficient Competitions and Online Learning with  Strategic Forecasters}

\author{Rafael Frongillo, Robert Gomez, Anish Thilagar, Bo Waggoner}
\date{\today}

\begin{titlepage}

\maketitle

\end{titlepage}

\section{Introduction}
In forecasting competitions, a winner is selected among a pool of contestants based on the accuracy of their predictions.
Examples include the Good Judgement Project~\cite{gjp}, ProbabilitySports~\cite{probsports}, the Hybrid Forecasting Competition~\cite{hybridforecasting}, and also machine-learning competitions such as Kaggle~\cite{kaggle}.
The typical mechanism used is quite simple: tally the empirical accuracy of each forecaster (or machine-learning model, as the case may be) and select the highest.
Yet, the winner-take-all nature of this mechanism can skew incentives dramatically: forecasters may misreport to increase the variance of their score, even at the cost of its expectation, to improve their chance of being selected~\cite{lichtendahl2007probability,witkowski2018incentive,witkowski2101incentive,aldous2019prediction}.
Moreover, this incentive problem is not just theoretical: the winner of Kaggle's March Mania 2017 competition admitted to skewing their ``true'' predictions~\cite{kaggle2017march}.

To address this incentive problem \citet{witkowski2101incentive, witkowski2018incentive} propose the Event Lotteries Forecast (ELF) mechanism.
For binary events (or labels), which we also study in this paper, ELF probabilistically awards one point per event and selects the forecaster with the most points at the end.
The authors prove that ELF is dominant-strategy truthful and that it chooses an $\epsilon$-optimal forecaster among $n$ contestants when there are at least $O(n^2 \log n/\epsilon^2)$ events.
Unfortunately, this bound can be prohibitively large: with $n=100$ and $\epsilon=0.1$, we need on the order of 1 million events or test data points.
(For comparison, Kaggle's March Mania 2017 competition had $n=442$ forecasters, and the equivalent of just 2278 binary events.)
The main open question has therefore been whether a mechanism can select an $\epsilon$-optimal forecaster using far fewer events---perhaps ELF with a tighter analysis, or some entirely new mechanism.

We answer both parts of this question affirmatively.
First, we show that ELF needs only $O(n \log n / \epsilon^2)$ events, a quadratic improvement, and show the dependence on $n$ to be tight (\S~\ref{sec:elf}).
In the nonstrategic setting, however, only $\Theta(\log(n) / \epsilon^2)$ events are required (\S~\ref{subsec:nonstrat-lower}), raising the question of whether a new mechanism can achieve this bound with strategic forecasters.
Indeed one can: we give a new $O(\log n/\epsilon^2)$-event mechanism (in \S~\ref{sec:main-accuracy}) based on Follow the Regularized Leader (FTRL), a common class of online machine learning algorithms.
A crucial ingredient of our analysis is that, under some curvature assumptions on the regularizer, FTRL satisfies a strong notion of $\gamma$-approximate truthfulness: reports more than $\gamma$-far from one's belief are strictly dominated (\S~\ref{sec:approx-truthful}). 
While our results could pave the way to exactly-truthful mechanisms, we instead focus on our more resilient solution concept: our guarantees hold as long as forecasters play undominated strategies.

Our work also has implications for online learning from strategic experts, as studied by \citet{roughgarden2017online} and \citet{freeman2020no-regret}.
Each round, the experts make forecasts; the learning algorithm chooses an expert based on the prior rounds; and then the outcome of the round's event is revealed.
Experts are strategic, and seek (roughly) to maximize their probability of being chosen by the algorithm.
Despite this strategic behavior, we would like the algorithm's chosen forecast reports to have vanishing regret \emph{relative to the internal beliefs} of the experts.
\citet{freeman2020no-regret} give a mechanism that is no-regret and truthful for \emph{myopic} agents, which only maximize their chance of being chosen on the subsequent round and do not strategize otherwise.
The main open question has been to give a no-regret learning algorithm in the presence of general strategic experts.

Using our solution concept of undominated strategies, we show that FTRL achieves this goal (\S~\ref{sec:no-regret}).
Specifically, if forecasters wish to maximize any positive linear combination of their chances of being selected in the rounds, and they play undominated strategies, then FTRL achieves $O(\sqrt{T})$ regret after $T$ rounds.
We emphasize that the regret is to the \emph{beliefs} of the optimal forecaster, even though they may misreport; approximate truthfulness ensures that the reports are accurate enough.

The broad takeaway from our results is that, in both of these settings, perhaps surprisingly, there is no price of strategic behavior under the solution concept of undominated strategies.
For forecasting and machine learning competitions, our results suggest a benefit of relatively small changes in competition protocols.
For online learning, moreover, popular learning algorithms are essentially already robust to the type of strategic behavior we consider.

\section{Model and Preliminaries}
\label{sec:model-preliminaries}
There are $m$ independent binary events indexed by $t$, each associated with an independent random variable $y_t \in \{0,1\}$.
We assume there is an unknown ``ground truth'' probability of event $t$ occurring, $\theta_t = \Pr[y_t = 1]$.
We write $\vec{\theta} = (\theta_1,\dots,\theta_m)$ and $\vec{y} = (y_1,\dots,y_m)$.

There are $n \geq 2$ forecasters, indexed by $i$ and $j$.
On each event $t$, forecaster $i$ has an immutable belief $p_{it} \in [0,1]$ of the probability of event $t$.
Forecasters believe all events are independent.
Meanwhile, $r_{it} \in [0,1]$ will denote $i$'s reported probability of event $t$.
Let $P \in [0,1]^{n \times m}$ and $R \in [0,1]^{n \times m}$ be the matrices of all beliefs and reports, respectively.
We write $p_i = (p_{i1},\dots,p_{im})$ for the row consisting of $i$'s beliefs, and similarly for $r_i$.

\subsection{Mechanisms and truthfulness}
\label{sec:mech-truthful}
A mechanism will first solicit reports $R$, then observe outcomes $\vec{y}$, and select exactly one of the $n$ forecasters as the winner.
We write $\Delta_n$ for the probability simplex over $\{1,\ldots,n\}$.

\begin{definition}
  \label{mechanism}
   A \textbf{forecasting competition mechanism} $M$ is a family of functions $M_{n, m}:[0,1]^{n\times m}\times\{0,1\}^m \to \Delta_n$, for all $n,m \in \mathbb{N}$, where $M_{n, m}(R,\vec{y})_i$ is the probability with which the mechanism picks forecaster $i$ on reports $R$ and observed outcomes $\vec{y}$.
   As $R$ determines $n$ and $m$, we suppress the subscripts.
  For a belief $p_i \in [0,1]^m$, we write $M(R;p_i) := \E_{\vec{y} \sim p_i}[ M(R,\vec{y})]$.
\end{definition}

The utility of forecaster $i$ is simply $1$ if they are selected, $0$ otherwise.
Therefore, we define their expected utility over the randomness of the mechanism as $M(R,\vec{y})_i$; and their expected utility over the randomness of the events as well, according to their internal beliefs, is $M(R;p_i)_i$.
We write $M(\hat r_i, R_{-i}, \vec{y})$ to denote running the mechanism with $i$'s report replaced by some vector $\hat r_i \in [0,1]^m$.

\begin{definition}
  \label{def:truthfullness}
  $M$ is \textbf{truthful} if for all $R$, all $p_i$, all $r_i \neq p_i$,
    \[ M(p_i, R_{-i}; p_i)_i \geq M(r_i, R_{-i}; p_i)_i .\]
  $M$ is \textbf{strictly truthful} if the inequality is always strict.
\end{definition}
Verbally, the mechanism is (strictly) truthful if the probability of selecting $i$ is (uniquely) maximized when $i$ reports $r_i = p_i$, fixing all others' reports.
Note the probability here is taken both over the randomness of the mechanism and the randomness of the events, according to $i$'s beliefs.

Meanwhile, we say a mechanism is $\gamma$-approximately truthful if it is a strictly dominated strategy to make any report $r_{it}$ with $|r_{it} - p_{it}| > \gamma$.
Recall that, for a fixed $p_i$, the report $\hat{r}_i$ \emph{strictly dominates} $r_i$ if for all $R_{-i}$, we have $M(\hat{r}_i,R_{-i};p_i) > M(r_i, R_{-i};p_i)$.
We say that $r_i$ is \emph{strictly dominated} if such an $\hat{r}_i$ exists, and $r_i$ is \emph{undominated} otherwise.
Observe that any mechanism is vacuously $\gamma$-approximately truthful if $\gamma \geq 1$.
\begin{definition}
  \label{def:approx-truthful}
  A mechanism $M$ is \textbf{$\gamma$-approximately truthful} if for all $p_i$, (i) there exists an undominated report, and (ii) for all undominated reports $r_i$, we have $\|r_i - p_i\|_{\infty} \leq \gamma$.
\end{definition}

Our notion of approximate truthfulness is stronger than the typical one which only requires the utility to be approximately optimized by truthful reporting (e.g. Dwork and Roth \citet[Definition 10.2]{dwork2014algorithmic}).
This type of approximate truthfulness is relatively easy to achieve in our context, as one could simply mix $M$ with the uniform distribution to dampen the incentives.
By contrast, our definition requires the approximation to be in the report space itself, rather than the utility.
This stronger condition is crucial to our results, as our accuracy and no-regret guarantees rely heavily on any undominated report being close to truthful.
When $M$ is continuous in the reports, our definition implies the weaker version.

\subsection{Accuracy}
We use the (unknown) ground truth probabilities $\vec{\theta}$ to define the accuracy of the forecasters.
The goal of the mechanism is to pick a forecaster with approximately optimal accuracy.
Following \citet{witkowski2101incentive}, we use the following measure of accuracy.
Note that a forecaster's accuracy is not dependent on the outcomes of the events.
\begin{definition}
  \label{def:accuracy}
  Each forecaster $i$'s \textbf{accuracy} is $a_i = 1 - \frac{1}{m}\sum_{t=1}^m (p_{it} - \theta_t)^2$.
  We say a forecaster $i$ is \textbf{$\epsilon$-optimal} if $a_i \geq \max_j a_j - \epsilon$.
\end{definition}

We call a mechanism $(\epsilon,\delta)$-accurate if it selects an $\epsilon$-optimal forecaster except with probability $\delta$.
For a non-truthful mechanism, this definition is subtle for two reasons.
First, we would like to select a forecaster whose true beliefs $p_i$ are accurate, regardless of their strategic reports $r_i$.
Second, the mechanism's accuracy guarantee presumably assumes something about what forecasters are reporting, even if not truthful.
It depends on the solution concept of the game, e.g. ``the mechanism is accurate in equilibrium''.
Here, we will only assume undominated strategies as a solution concept.

\begin{definition}
  \label{def:mech-accurate}
   A mechanism $M$ is \textbf{$(\epsilon,\delta)$-accurate} in the setting defined by $(n,m,P,\vec{\theta})$ if for all $R$ consisting of undominated strategies, with probability at least $1-\delta$ over event outcomes $\vec y \sim \vec{\theta}$ and $i \sim M(R,\vec{y})$, the winner $i$ is $\epsilon$-optimal.
\end{definition}
   
The key question of this paper is: given $n$ forecasters and an accuracy goal of $(\epsilon,\delta)$, how many events $m$ are needed?
\begin{definition}
  \label{def:mdeltaeps}
   The \textbf{event complexity} of a mechanism $M$ is the function $m^*: \mathbb{N} \times [0, 1] \times [0, 1] \rightarrow \mathbb{N}$ such that, for all $n,\epsilon,\delta$,
   the output $m = m^*(n,\epsilon,\delta)$ is the smallest integer such that, for all $(P,\vec{\theta})$, the mechanism $M$ is $(\epsilon,\delta)$-accurate in the setting $(n,m,P,\vec{\theta})$.
\end{definition}

The \emph{nonstrategic event complexity} of a mechanism is its event complexity assuming access to the true beliefs $P$.
In other words, this is the ``first-best'' that could be achieved if there were no strategic considerations.
A central question for forecaster selection is the cost of informational asymmetry, i.e. the event complexity gap between the strategic and nonstrategic settings.

The task in this paper of selecting an $\epsilon$-optimal forecaster is slightly different from the task in \citet{witkowski2101incentive}.
There, it is assumed that there exists an ``$\epsilon$-dominant'' forecaster $i$ with $a_i - \epsilon \geq \max_{j \neq i} a_j$.
The task here is weakly more difficult: if we have an $(\epsilon,\delta)$-accurate mechanism, it will necessarily select an $\epsilon$-dominant forecaster with probability $1-\delta$, satisfying the goal in that paper.
It turns out that their mechanism, ELF, solves the harder problem of selecting an $\epsilon$-optimal forecaster (\S~\ref{sec:elf}).
The biggest impact of this change is that our lower bound of $m^* = \Omega\left({\log(n)}/{\epsilon^2}\right)$, Theorem \ref{thm:main-lower-bound}, will apply to selecting $\epsilon$-optimal forecasters.
We do not have a lower bound for the \citet{witkowski2101incentive} $\epsilon$-dominant variant of the problem.

\subsection{The quadratic scoring rule}
As in \citet{witkowski2101incentive}, we will focus on the quadratic scoring rule for assessing forecasts.
In principle our results could be extended to other scoring rules, a question we leave for future work.%
\begin{definition}
  \label{quadratic_scoring_rule}
  The \textbf{quadratic scoring rule} is the function $S: [0,1] \times \{0,1\} \to [0,1]$ defined by $S(q,y) = 1 - (y - q)^2$.
\end{definition}
The quadratic score is an example of a \emph{strictly proper scoring rule} $S': [0,1] \times \{0,1\} \to \reals$.
Such rules guarantee that one maximizes expected score, according to a belief $p_{it}$, by reporting $p_{it}$.
The quadratic or ``Brier'' score was introduced by \citet{brier1950verification}, and more on proper scoring rules can be found in \citet{gneiting2007strictly}.
As in \citet{witkowski2101incentive}, the quadratic score is closely related to our definition of accuracy and to our mechanisms, which reward forecasters based on their quadratic scores.
It is also a simple transformation of the squared loss, one of the most common losses in machine learning.
In particular, a forecaster's expected average quadratic score is equal to their accuracy $a_i$ (Definition \ref{def:mech-accurate}), up to a constant depending on the event variances.
\begin{lemma}[\citet{witkowski2101incentive}] \label{lemma:quad-accuracy}
  From a ground truth perspective, forecaster $i$'s expected average quadratic score when truthful is
    \[ \E_{\vec y\sim \vec\theta}\left[ \frac{1}{m} \sum_{t=1}^m S(p_{it}, y_t) \right] = a_i - C_{\vec{\theta}} , \]
  where $C_{\vec{\theta}} = \frac{1}{m} \sum_{t=1}^m \theta_t (1 - \theta_t)$.
\end{lemma}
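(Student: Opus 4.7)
The plan is to carry out a direct calculation, using the bias-variance decomposition of squared error for Bernoulli random variables, and then average over $t$.

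First, I would fix a single event $t$ and compute $\E_{y_t \sim \theta_t}[S(p_{it}, y_t)] = 1 - \E_{y_t\sim\theta_t}[(y_t - p_{it})^2]$. Since $y_t \in \{0,1\}$ we have $y_t^2 = y_t$, so $\E[y_t^2] = \theta_t$ and $\E[y_t] = \theta_t$. Expanding gives
\[
\E_{y_t\sim\theta_t}[(y_t - p_{it})^2] = \theta_t - 2p_{it}\theta_t + p_{it}^2.
\]
Next I would verify the identity $\theta_t - 2p_{it}\theta_t + p_{it}^2 = (p_{it} - \theta_t)^2 + \theta_t(1-\theta_t)$, which is immediate since $(p_{it}-\theta_t)^2 = p_{it}^2 - 2p_{it}\theta_t + \theta_t^2$ and $\theta_t(1-\theta_t) = \theta_t - \theta_t^2$. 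This is the standard bias-variance decomposition: the expected squared error equals the squared bias plus the variance of the Bernoulli outcome.

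Substituting back, $\E_{y_t\sim\theta_t}[S(p_{it}, y_t)] = 1 - (p_{it} - \theta_t)^2 - \theta_t(1-\theta_t)$. Averaging over $t = 1, \dots, m$ and using linearity of expectation together with independence of the $y_t$,
\[
\E_{\vec y \sim \vec\theta}\!\left[\frac{1}{m}\sum_{t=1}^m S(p_{it}, y_t)\right] = 1 - \frac{1}{m}\sum_{t=1}^m (p_{it} - \theta_t)^2 - \frac{1}{m}\sum_{t=1}^m \theta_t(1-\theta_t) = a_i - C_{\vec\theta},
\]
by the definitions of $a_i$ and $C_{\vec\theta}$.

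There is no real obstacle; the only thing to be slightly careful about is applying $y_t^2 = y_t$ (which holds precisely because outcomes are binary), and noting that independence of the $y_t$ is not actually needed since we only use linearity of expectation term-by-term.
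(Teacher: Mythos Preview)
Your proof is correct and follows essentially the same direct computation as the paper: both compute $\E_{y_t\sim\theta_t}[S(p_{it},y_t)]$ for a single event, rewrite it as $1 - (p_{it}-\theta_t)^2 - \theta_t(1-\theta_t)$, and then average over $t$. The only cosmetic difference is that you use $y_t^2 = y_t$ and a bias--variance framing, whereas the paper expands the expectation as $\theta_t[1-(1-p_{it})^2] + (1-\theta_t)[1-p_{it}^2]$ before simplifying; the algebra is the same.
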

\begin{proof}
By definition, the expected score of $i$ on event $t$ is
\begin{align*}
  \E [S(p_{it},y_t)]
  &= \theta_t \left[1 - (1 - p_{it})^2\right] + (1 - \theta_t) \left[1 - p_{it}^2\right]  \\
  &= \theta_t \left[2p_{it} - p_{it}^2\right] + (1-\theta_t)\left[1 - p_{it}^2\right]  \\
  &= 1 - \theta_t + 2 \theta_t p_{it} - p_{it}^2  \\
  &= 1 - \theta_t + \theta_t^2 - (p_{it} - \theta_t)^2 ~.
\end{align*}
Averaging over the $m$ events gives the result.
\end{proof}

This result suggests that the accuracy goal is perfectly aligned with selecting a forecaster based on total quadratic score.
We next discuss the baseline of directly using this total score.

\subsection{The Simple Max baseline}
\label{sec:simple-max}
A straightforward selection mechanism often used in practice is the \emph{Simple Max mechanism}. For this mechanism, we assign each forecaster a score $f_{it} = S(r_{it}, y_t)$ for each event. Then, we assign their final score by summing these over all events, $F_i = \sum_{t=1}^m f_{it}$. Finally, we choose the forecaster with the highest cumulative score as the overall winner.

\paragraph{Truthfulness}
As observed by \citet{witkowski2101incentive} and \citet{aldous2019prediction} and discussed in depth by \citet{lichtendahl2007probability}, this mechanism is generally not truthful.
To illustrate, consider three forecasters and one event whose true probability is $0.5$.
Alice predicts $0.5$, but Bob predicts $0.9$ and Charlie predicts $0.1$.
Observe that Alice \emph{cannot win}, despite being the best forecaster by far: either the event occurs (Bob wins) or it doesn't (Charlie wins).
Alice can only win by predicting either $0$ or $1$, raising her probability of winning from $0$ to $0.5$.

\paragraph{Event complexity}
Although the simple max mechanism is not truthful, it is worth studying its nonstrategic event complexity as a baseline.
The nonstrategic event complexity $m^*$, by analogy to Definition~\ref{def:mdeltaeps}, denotes the minimum number of events required to select an $\epsilon$-optimal forecaster with probability $1-\delta$, but now assuming access to the true beliefs $P$.

\begin{prop} \label{prop:simple-max}
  The Simple Max mechanism has a nonstrategic event complexity of 
  \[ m^* \leq \frac{2\log\left(\frac{n}{\delta}\right)}{\epsilon^2} ~.\]
\end{prop}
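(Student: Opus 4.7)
My plan is to use Lemma~\ref{lemma:quad-accuracy} together with Hoeffding's inequality and a careful union bound. In the nonstrategic setting the mechanism uses $r_{it} = p_{it}$, so each $F_i/m$ is an average of $m$ independent variables $S(p_{it}, y_t) \in [0,1]$, and by Lemma~\ref{lemma:quad-accuracy} we have $\E[F_i/m] = a_i - C_{\vec\theta}$ where $C_{\vec\theta}$ does not depend on $i$. Thus differences $F_i/m - F_j/m$ concentrate around $a_i - a_j$, and showing that the argmax of $F_i/m$ is $\epsilon$-optimal reduces to a uniform concentration statement.

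The core step: fix the truly best forecaster $j^* \in \argmax_j a_j$. By Hoeffding's inequality applied one-sidedly, for every $i$ we have $\Pr[F_i/m - \E[F_i/m] > \epsilon/2] \leq \exp(-m\epsilon^2/2)$, and likewise $\Pr[F_{j^*}/m - \E[F_{j^*}/m] < -\epsilon/2] \leq \exp(-m\epsilon^2/2)$. I would union bound over the $n-1$ upper-tail events (one for each $i \neq j^*$) together with the single lower-tail event for $j^*$, giving total failure probability at most $n \exp(-m\epsilon^2/2)$. Setting this to $\delta$ and solving yields $m \geq 2\log(n/\delta)/\epsilon^2$.

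Under the complement good event, if any $i$ with $a_i < a_{j^*} - \epsilon$ were the winner, then
\[ F_i/m \;\leq\; (a_i - C_{\vec\theta}) + \epsilon/2 \;<\; (a_{j^*} - C_{\vec\theta}) - \epsilon/2 \;\leq\; F_{j^*}/m, \]
contradicting the definition of the Simple Max winner. Hence the selected forecaster is $\epsilon$-optimal except with probability $\delta$, establishing the bound.

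The analysis itself is routine concentration, so the only mild subtlety is bookkeeping to obtain exactly $n$ (rather than $2n$ or $n+1$) bad events inside the union bound—this is why I handle $j^*$ with a one-sided lower tail and the remaining forecasters with one-sided upper tails, rather than applying a two-sided Hoeffding to every $i$. That accounting is what gives the stated constant $2\log(n/\delta)/\epsilon^2$ on the nose.
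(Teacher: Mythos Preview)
Your proposal is correct and matches the paper's proof essentially line for line: both fix the optimal forecaster, apply a one-sided Hoeffding lower tail to it and one-sided upper tails to the remaining $n-1$ forecasters, and union-bound over the resulting $n$ events to obtain exactly $m \geq 2\log(n/\delta)/\epsilon^2$.
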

\begin{proof}
We have $F_i = \sum_t S(p_{it}, y_t)$. By Lemma \ref{lemma:quad-accuracy}, $\E[F_i] = m a_i - c$ for some constant $c = m C_{\vec \theta}$.
Let $i^*$ be the highest accuracy forecaster.
For any non-$\epsilon$-optimal forecaster $j$ we have $a_i > a_j + \epsilon$, so $\E[F_i] - \E[F_j] > m \epsilon$.
Therefore, a non-$\epsilon$-optimal forecaster cannot win if $F_{i^*} \geq \E[F_{i^*}] - \tfrac{m\epsilon}{2}$ and $(\forall j \neq i^*)$ $F_j \leq \E[F_j] + \tfrac{m\epsilon}{2}$.
We therefore show this fails to happen with probability at most $\delta$.

Because $F_i$ is the sum of independent variables bounded in $[0,1]$, by Hoeffding's inequality, we have for all $i$
\begin{align*}
  \Pr\left[F_i - \E[F_i] > \frac{m\epsilon}{2}\right] < e^\frac{-m \epsilon^2}{2} ~, \\
  \Pr\left[\E[F_i] - F_i > \frac{m\epsilon}{2}\right] < e^\frac{-m \epsilon^2}{2} ~.
\end{align*}
Setting $m \geq \frac{2\log(n/\delta)}{\epsilon^2}$, we have $\Pr[F_{i^*} < \E[F_{i^*}] - \tfrac{m\epsilon}{2}] \leq \tfrac{\delta}{n}$; and for each $j \neq i^*$, $\Pr[F_j > \E[F_j] + \tfrac{m\epsilon}{2}] \leq \tfrac{\delta}{n}$.
By the union bound, a non-$\epsilon$-optimal forecaster is able to win with probability at most $\delta$.
\end{proof}
Meanwhile, in Theorem \ref{thm:main-lower-bound}, we will use a reduction from agnostic PAC learning to prove that all mechanisms have nonstrategic event complexity $m^* = \Omega\left(\frac{\log(n)}{\epsilon^2}\right)$.
Therefore, Simple Max is essentially the best possible.
In particular, we can select the best forecaster using a number of events only logarithmic in $n$, the number of forecasters---if truthfulness is not required.
However, the state of the art for truthful mechanisms is significantly worse: \citet{witkowski2101incentive} gives the only bound to our knowledge, showing that their truthful ELF mechanism achieves $m^* = O\left(\frac{n^2 \log(n)}{\epsilon^2}\right)$ events.
We next attempt to close the gap.

\section{A Tight Analysis of ELF} \label{sec:elf}
The ELF mechanism $\elf$ is a truthful forecaster selection mechanism introduced by \citet{witkowski2101incentive}. 
For each event $t=1,\dots,m$, we use a lottery to award a point to a single forecaster.
Forecaster $i$ is chosen with probability
\begin{equation}
  f_{it} = \frac{1}{n} + \frac{1}{n} \left(S(r_{it}, y_t) - \frac{\sum_{j\neq i} S(r_{jt}, y_t)}{n-1} \right) ~.
  \label{eqn:wagering}
\end{equation}
Let $F_{it}$ be the indicator function for forecaster $i$ getting the point for event $t$, and $F_i = \sum_t F_{it}$ be the random variable equal to the number of points forecaster $i$ obtains. Then, ELF chooses $\argmax_i F_i$ as the winner, breaking ties uniformly. 

Equation \ref{eqn:wagering} is adapted from single-round wagering mechanisms \cite{lambert2008self}, with the idea that each forecaster wagers $\tfrac{1}{n}$ units on her report, with the chance to win back between $0$ and $\tfrac{2}{n}$ units (using that scores are in $[0,1]$).
The units are then converted into a probability of winning the point.
By increasing her own quadratic score a forecaster increases the chances she wins the point for round $t$, while uniformly decreasing the chances any other forecaster wins that point.

Theorem 6 of \citet{witkowski2101incentive} shows $\elf$ to be strictly truthful.
Theorem 8 of the same paper also shows\footnote{More precisely, their result is slightly weaker as stated: if there exists an $\epsilon$-dominant forecaster $i$, i.e. one with $a_i > \max_{j\neq i} a_j + \epsilon$, then it is selected with probability $1-\delta$. But essentially the same argument shows that $\elf$ unconditionally guarantees to select an $\epsilon$-optimal forecaster, with the same number of events.} that $\elf$ chooses an $\epsilon$-optimal forecaster with probability $1 - \delta$ for all 
\[ m \geq \frac{2(n-1)^2}{\epsilon^2} \log \left(\frac{4(n-1)}{\delta}\right) ~.\]
We begin by showing that $\elf$'s event complexity can be lowered from a quadratic dependence on $n$ to a linear one.

\subsection{Upper bound}
Our proof of the upper bounds follows the same outline as that of \citet{witkowski2101incentive}, but uses a tighter concentration bound at a key moment. 
Let $i$ be the best forecaster and $j$ be any non-$\epsilon$-optimal forecaster.
\citet{witkowski2101incentive} use Hoeffding's inequality to bound the probability that $i$'s total score is much below its expectation, or $j$'s is much above.
Hoeffding's is tight when the variance of each independent variable in a sum is $\Omega(1)$.
However for ELF, the probability of $i$ winning a point on round $t$ is bounded in $[0,\tfrac{2}{n}]$.
In such cases, Bernstein's inequality gives a tighter bound than Hoeffding's, because it takes into account the variance of the sum as well as the bound on the individual variables.
In particular, while a general sum of $m$ Bernoullis could have worst-case variance $\Omega(m)$, $i$'s total score $F_i$ has a variance bounded by $\tfrac{2m}{n}$, a factor of $n$ smaller.
This translates to a factor-$n$ improvement in the bound, which we prove in Appendix \ref{sec:appendix-elf-upper}.

\begin{theorem}
  For $n \geq 3$, $\elf$ has an event complexity given by
  \label{thm:elf-upper-bound}
    $m^*(n, \epsilon, \delta) \leq \frac{5(n-1)}{\epsilon^2}\log\left(\frac{4(n-1)}{\delta}\right)$.
\end{theorem}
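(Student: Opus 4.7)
The plan is to follow the outline of the original proof by \citet{witkowski2101incentive} but replace Hoeffding's inequality with Bernstein's, taking advantage of the fact that each per-round indicator $F_{it}$ is a Bernoulli with success probability at most $2/n$. Because $\elf$ is strictly truthful (Theorem~6 of \citet{witkowski2101incentive}), the only undominated strategy is truthful reporting, so we may assume $R=P$ throughout, which matches the requirement in Definition \ref{def:mech-accurate}.

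First, I would compute the per-round expected gap between the best forecaster $i^*$ and any non-$\epsilon$-optimal $j$. Using the definition of $f_{it}$ in Equation~\ref{eqn:wagering}, the ``others'' sums $\sum_{k\neq i^*}\E[S(p_{kt},y_t)]$ and $\sum_{k\neq j}\E[S(p_{kt},y_t)]$ differ only in the $j$ versus $i^*$ term, so the subtraction collapses to
\[
  \E[f_{i^*t}] - \E[f_{jt}] \;=\; \frac{1}{n-1}\bigl(\E[S(p_{i^*t},y_t)] - \E[S(p_{jt},y_t)]\bigr).
\]
Summing over $t$ and applying Lemma~\ref{lemma:quad-accuracy} gives $\E[F_{i^*}] - \E[F_j] = \frac{m}{n-1}(a_{i^*}-a_j) \geq \frac{m\epsilon}{n-1}$.

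Next, the key concentration step. For fixed $i$, the indicators $F_{i1},\dots,F_{im}$ are independent across $t$ (since the lotteries in different events use fresh randomness), each a Bernoulli with success probability $f_{it}\in[0,2/n]$. Thus $\Var(F_i)=\sum_t f_{it}(1-f_{it}) \leq \frac{2m}{n}$. Applying Bernstein's inequality with deviation $\Delta = \frac{m\epsilon}{2(n-1)}$ yields a tail bound of the form $\exp\bigl(-\Delta^2/(2(\sigma^2 + \Delta/3))\bigr)$, which simplifies (using $\sigma^2 \leq 2m/n$) to $\exp(-cm\epsilon^2/(n-1))$ for an absolute constant $c$. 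This is an $n$-fold improvement over the $\exp(-cm\epsilon^2/(n-1)^2)$ bound that Hoeffding would give. I then bound $\Pr[F_{i^*} < \E[F_{i^*}] - \Delta]$ and, for each of the at most $n-1$ non-$\epsilon$-optimal $j$, $\Pr[F_j > \E[F_j] + \Delta]$, union-bound over the $\leq n$ events, and solve for $m$ so the total failure probability is at most $\delta$.

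The main obstacle is purely bookkeeping: Bernstein's bound has two regimes (variance-dominated and range-dominated), and one must verify that under $n\geq 3$ and the target $m$, the variance term $2/n$ dominates $\Delta/3 = \frac{m\epsilon}{6(n-1)}$ in the denominator throughout the range of $\epsilon$ of interest. Tuning the split of $\Delta$ between $i^*$ and $j$ and tracking the constants carefully is what pins down the factor $5$ in the bound $m \geq \frac{5(n-1)}{\epsilon^2}\log(4(n-1)/\delta)$.
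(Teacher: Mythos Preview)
Your proposal is correct and matches the paper's proof essentially step for step: the same expected-gap computation yielding $\E[F_{i^*}]-\E[F_j]\ge \tfrac{m\epsilon}{n-1}$, the same variance bound $\Var(F_i)\le \tfrac{2m}{n}$ from $f_{it}\in[0,\tfrac{2}{n}]$, Bernstein with deviation $\Delta=\tfrac{m\epsilon}{2(n-1)}$, and a union bound over the $n$ forecasters. One minor point: the paper does not separately check that the variance term dominates $\Delta/3$; it simply bounds the sum $\tfrac{2m}{n}+\tfrac{\Delta}{3}$ directly using $\tfrac{2m}{n}\le\tfrac{2m}{n-1}$ and $\epsilon\le 3$, which is a bit cleaner than the two-regime analysis you anticipate.
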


\subsection{Lower bound}
To prove a lower bound for ELF, it suffices to consider a case with a single perfect forecaster with $p_1 = \vec{\theta} = (1,\dots,1)$, and $n-1$ terrible forecasters with $p_j = (0,\dots,0)$.
Unfortunately, despite forecaster $1$'s clear advantage, she is only chosen to gain a point with probability $\frac{2}{n}$ per round, while all other forecasters have a chance of slightly under $\frac{1}{n}$.
Using a balls-in-bins result implies that, for $m < O(n \log n)$, some terrible (and lucky) forecaster is likely to have more points than forecaster $1$.
This scenario yields the following bound.

\begin{theorem}
\label{thm:elf-lower-bound}
 For any $\delta < \frac{1}{2}$, $\epsilon < 1$, and all sufficiently large $n$, $\elf$ has event complexity $m^*(n, \epsilon, \delta) > \frac{n}{4} \log n$.
\end{theorem}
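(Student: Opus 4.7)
I will exhibit a hard instance on which $\elf$ fails with probability strictly greater than $\delta$ whenever $m \le \tfrac{n}{4}\log n$, which proves the lower bound. Take $\vec{\theta} = (1,\dots,1)$, a ``perfect'' forecaster $1$ with $p_1 = \vec{\theta}$, and $n-1$ ``terrible'' forecasters $j\ge 2$ with $p_j = (0,\dots,0)$. Since $\elf$ is strictly truthful, truthful reports form the unique undominated profile; and since $a_1 = 1$ while $a_j = 0 < 1 - \epsilon$ for every $j \ge 2$, the only $\epsilon$-optimal forecaster is $1$, so it suffices to show $\elf$ picks some $j \ge 2$ with probability $> \delta$. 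Plugging $r_{1t} = 1$, $r_{jt} = 0$, and $y_t = 1$ into \eqref{eqn:wagering}, the round-$t$ lottery awards the point to forecaster $1$ with probability $\tfrac{2}{n}$ and to each terrible forecaster with probability $\tfrac{n-2}{n(n-1)}$; equivalently, each round independently throws one ball, which lands on forecaster $1$ with probability $\tfrac{2}{n}$ and is otherwise placed uniformly among the $n-1$ ``bins'' $j\ge 2$.

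Set $\mu := 2m/n \le \tfrac{1}{2}\log n$. The argument combines two quantitative estimates. First, $F_1 \sim \mathrm{Bin}(m, 2/n)$ has mean $\mu$, and a local-CLT/Paley--Zygmund-style bound yields $\Pr[F_1 \le \mu] \ge \tfrac{1}{2} - o_n(1)$. Second, each $F_j \sim \mathrm{Bin}(m, (n-2)/(n(n-1)))$ has mean $\approx \tfrac{\mu}{2}$, and a Stirling/Poisson-density calculation at the point $k = \mu+1 \approx 2\,\E F_j$ yields a constant-factor deviation probability of the form $\Pr[F_j \ge \mu+1] \ge n^{-\alpha}/\mathrm{polylog}(n)$ for some constant $\alpha < 1$. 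Since the bin loads $(F_2,\dots,F_n)$ are negatively associated in this process,
\[
\Pr\Bigl[\max_{j\ge 2} F_j \ge \mu+1\Bigr] \;\ge\; 1 - \prod_{j\ge 2}\Pr[F_j < \mu+1] \;\ge\; 1 - \bigl(1 - n^{-\alpha}/\mathrm{polylog}(n)\bigr)^{n-1} \;=\; 1 - o_n(1).
\]
Intersecting the two events, with probability at least $\tfrac{1}{2} - o_n(1)$ some terrible forecaster strictly beats forecaster $1$ in the point count, in which case $\elf$ must select a non-$\epsilon$-optimal forecaster. For any fixed $\delta < \tfrac{1}{2}$ and all sufficiently large $n$, this failure probability exceeds $\delta$, which yields $m^*(n,\epsilon,\delta) > \tfrac{n}{4}\log n$.

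The main obstacle is the per-bin anti-concentration: we need $\Pr[F_j \ge 2\,\E F_j] \ge n^{-\alpha}$ for some $\alpha < 1$, and this sits in a moderate-deviation regime where upper-tail Chernoff bounds are too loose to invert, requiring instead a direct Stirling estimate on the Poisson probability near $2\,\E F_j$. Once this per-bin estimate is in hand, the negative-association step closing the maximum bound is standard (e.g., from the negative association of multinomial/balls-in-bins loads, via FKG on the monotone events $\{F_j < \mu+1\}$), and the Binomial concentration for $F_1$ is routine.
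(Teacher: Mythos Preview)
Your construction and decomposition match the paper exactly: the same instance ($\vec\theta=\vec 1$, forecaster~$1$ perfect, the rest predicting $0$), the same per-round lottery probabilities $f_{1t}=2/n$ and $f_{jt}=(n-2)/(n(n-1))$, and the same plan of showing $\Pr[F_1\le\mu]$ bounded below by roughly $1/2$ together with $\Pr[\max_{j\ge2}F_j>\mu]=1-o(1)$. The difference is only in how the max-load step is executed. The paper first conditions on $F_1$, so that the remaining $m-F_1$ points land uniformly in $n-1$ bins, and then invokes the Raab--Steger balls-in-bins theorem as a black box (Lemmas~\ref{lemma:balls_bins_root} and~\ref{lemma:balls_bins_bound}); for the $F_1$ step it cites the Kaas--Buhrman mean--median inequality for the binomial. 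You instead work with the unconditional marginals $F_j\sim\mathrm{Bin}(m,(n-2)/(n(n-1)))$, lower-bound the single-bin tail $\Pr[F_j\ge\mu+1]\ge n^{-\alpha}$ directly via a Stirling/Poisson estimate, and close with negative association of the multinomial counts. Both routes are sound; yours is more self-contained, while the paper's is shorter by deferring to the balls-in-bins literature. One small terminological point: Paley--Zygmund is not the right tool for the $\Pr[F_1\le\mu]$ step; what you actually need is either the binomial median${}\le{}$mean fact for $p<1/2$ or the normal approximation (valid since $\mu\to\infty$).
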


We present the proof in Appendix \ref{sec:appendix-elf-lower}.
Additionally, in Appendix \ref{sec:appendix-general-elf}, we study a broader class of truthful and symmetric "ELF-like" mechanisms that independently award a point to a forecaster for each event, and then choose the one with the highest score.
Leveraging connections to wagering mechanism, we can extend the lower bound of Theorem \ref{thm:elf-lower-bound} to this entire class of mechanisms.

The best-known exactly-truthful mechanisms therefore achieve an event complexity of $\Theta(n \log n)$.
To improve on this bound, we first introduce a relaxation of the truthfulness requirement.

\section{An Approximately Truthful Mechanism: FTRL}
\label{sec:approx-truthful}

In this section, we show how to achieve approximate truthfulness in the strong sense of Definition~\ref{def:approx-truthful}: when $R$ consists of undominated reports, $|r_{it} - p_{it}| \leq \gamma$ for all $i,t$.
To do so, we turn to machine learning algorithms, a natural choice given that our definition of accuracy (Definition~\ref{def:mech-accurate}) strongly resembles PAC (probably approximately correct) learning guarantees.

The simplest learning algorithm would be Simple Max, which picks the forecaster with the best total quadratic score.
A key incentive problem with Simple Max, exhibited in \S~\ref{sec:simple-max}, is its sensitivity to the input: a small change in a report $r_{it}$ can completely change a forecaster's probability of winning.
In machine learning, \emph{regularization} is often used to make an algorithm's decisions less sensitive while still retaining accuracy.
Combining a regularizer with Simple Max yields a Follow the Regularized Leader (FTRL) algorithm (e.g., \cite{hazan2019introduction,shalev2011online}), the class we consider.

A canonical example of an FTRL algorithm is Multiplicative Weights.\footnote{The role of regularization in Multiplicative Weights is not clear from the definition, but will be discussed in the next section.}
Given a tunable parameter $\eta>0$, Multiplicative Weights $M^*_\eta$ selects forecaster $i$ with probability
\begin{equation}
  \label{eq:mult-weights}
  M^*_\eta(R,\vec y)_i = \frac{\exp\left(\eta \sum_{t=1}^m S(r_{it},y_t)\right)}{\sum_{j=1}^n \exp\left(\eta \sum_{t=1}^m S(r_{jt},y_t)\right)} ~.
\end{equation}
For intuition on its approximate truthfulness, consider what happens when we fix everything but $r_{it}$ and $y_t$, and take the expected value with respect to $y_t\sim p_{it}$.
For small enough $\eta$, the denominator of (\ref{eq:mult-weights}) barely changes, and the numerator is nearly linear in $S(r_{it},y_t)$.
So forecaster $i$'s utility function will behave similarly to $\E_{y_t\sim p_{it}} S(r_{it},y_t)$, which is maximized by truthful reporting by properness of the quadratic scoring rule.
Furthermore, if $S$ is concave enough (and it is), then the utility-maximizing report $r^*_{it}$ on round $t$ will satisfy $|r^*_{it}-p_{it}|\leq O(\eta)$.
Further characterizing the optimal \emph{ex ante} report (in expectation over all outcomes) is nontrivial, but a small extension suffices to to show that any undominated report $r_i$ satisfies $\|r_i - p_i\|_{\infty} \leq \gamma$.

We next formalize and generalize this analysis approach.
With a curvature assumption on the regularizer, Condition \ref{cond:regularizer}, we show that all FTRL algorithms yield $O(\eta)$-approximate truthfulness up to constants depending on the regularizer.
See \S~\ref{sec:discussion} for a discussion of the related algorithm, Follow the Perturbed Leader.

\subsection{FTRL}
Follow the Regularized Leader (FTRL) is a common class of learning algorithms for prediction with expert advice~\cite{shalev2011online}.
Although these algorithms are designed to select a sequence of experts (forecasters) over a series of rounds, we will also be able to use them as a static selection mechanism by simply applying them to the entire batch of $m$ events.

A \emph{regularizer} is a strictly convex, differentiable%
\footnote{Following e.g.~\citet{mhammedi2018constant}, we say a regularizer $\R$ is differentiable on $\Delta_n$ if its directional derivative $\R'(\pi;x)$ is linear in $x$ on the subspace $\{x\in\reals^n \mid \sum_i x_i = 0\}$.}
function $\R:\Delta_n\to\reals$.
For $\eta>0$, the FTRL mechanism $M_{\R,\eta}$ chooses the forecaster distribution according to
\begin{equation}
  \label{eq:1}
  M_{\R,\eta}(\R,\vec y) \in \argmax_{\pi\in\Delta_n}\left\{ \eta \sum_{i=1}^n \pi_i \sum_{t=1}^m S(r_{it},y_i) - \R(\pi) \right\}~.
\end{equation}
The conditions on $\R$ above imply that the choice of $M_{\R,\eta}(R,\vec y)$ is unique and can be written
\begin{equation}
  \label{eq:2}
  M_{\R,\eta}(R,\vec y) = \nabla C(\eta q)~,
\end{equation}
where the convex, differentiable function $C = \R^*$ is the convex conjugate of $\R$ and $q = q(R,\vec y) \in \reals^n$ with $q_i = \sum_{t=1}^m S(r_{it},y_t)$~\cite[consequence of Theorems 26.3 and 26.1]{rockafeller1997convex}.
Verbally, the mechanism considers the vector $q$ of total quadratic scores, scales it by $\eta$, and takes the gradient of $C$ at this point, yielding a distribution on forecasters.

An important example is Multiplicative Weights, given by $M^*_\eta := M_{\R,\eta}$ where the regularizer is negative entropy, $\R(\pi) = \sum_{i=1}^n \pi_i \log \pi_i$.
One can verify that here $C(x) = \R^*(x) = \log \sum_{i=1}^n \exp(x_i)$.
Furthermore, taking $\nabla C(\eta q)$ gives back exactly eq.~\eqref{eq:mult-weights}.

\subsection{Approximate Truthfulness} \label{subsec:ftrl-approx-truth}

We now show that FTRL with certain regularizers satisfies the strong notion of approximate truthfulness in Definition~\ref{def:approx-truthful}.
As we will see, while weaker than truthfulness, this guarantee is strong enough to ensure that the mechanism is accurate, and even no-regret in an online setting.

Fixing others' reports $R_{-i}$ and the realized outcomes $\vec{y}$, define $U_i(r_i) = M_{\R,\eta}(r_i, R_{-i},\vec y)_i$ for the probability forecaster $i$ wins the competition as a function of their report vector.
Then, we write $\partial_i C(\cdot)$ to refer to the partial derivative of $C$ with respect to its $i$th argument; $\partial_i^2 C(\cdot)$ for its second partial derivative with respect to the $i$th argument, and so on.

We therefore have $U_i(r_i) = \partial_i C(\eta \cdot q(r_i, R_{-i},\vec{y}))$, where $q$ is defined above.
Our proof relies on carefully controlling the curvature of $U_i$ as a function of each individual report $r_{it}$.
Consider the first two derivatives of $U_i$:
\begin{align}
  \nabla U_i &= \eta \cdot \partial^2_i C(\eta q) \cdot \nabla q~, \label{eq:grad-U}
  \\
  \nabla^2 U_i &= \eta^2 \cdot \partial^3_i C(\eta q) \cdot \nabla q \nabla q^\top + \eta \cdot \partial^2_i C(\eta q) \cdot \nabla^2 q ~. \label{eq:hessian-U}
\end{align}
For the quadratic score, we have $(\nabla q)_t = 2(y_t - r_{it})$ and $\nabla^2 q = -2I$, where $I$ is the $m\times m$ identity matrix.

To control the curvature of $U_i$, therefore, we must control the curvature of $C$, which leads to the following condition:

\begin{condition}\label{cond:regularizer}
  Given regularizer $\R$, let $C=\R^*$.
  Then $C$ is thrice differentiable, and:
  \begin{itemize}
      \item[(i)] There exists $\alpha>0$ such that $\partial^2_i C(x) \geq \alpha \, |\partial^3_i C(x)|$ for all $x\in\reals^n$ and $i\in \{1,\dots,m\}$.
      \item[(ii)] There exists $\beta>0$ such that $\log \left(\partial^2_i C(x)\right)$ is $\beta$-Lipschitz in $\|\cdot\|_{\infty}$ as a function of $x$, i.e. $\left|\log\frac{\partial^2_i C(x)}{\partial^2_i C(x')}\right| \leq \beta \|x - x'\|_{\infty}$.
  \end{itemize}
\end{condition}

Before continuing, let us verify that Multiplicative Weights $M^*_\eta$, which is the FTRL algorithm $M_{\R,\eta}$ with $\R(\pi) = \sum_{i=1}^n \pi_i \log \pi_i$, satisfies the condition.
\begin{lemma}\label{lem:mult-weights}
  $M^*_\eta$ satisfies Condition~\ref{cond:regularizer} with $\alpha = 2$ and $\beta = 3$.
\end{lemma}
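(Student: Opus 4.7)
The plan is to unwind the formula $C(x) = \R^*(x) = \log \sum_j e^{x_j}$ already noted in the paper, then directly compute the first three partial derivatives of $C$ along a single coordinate $x_i$ and verify both parts of Condition~\ref{cond:regularizer} by elementary algebra. Writing $p_i(x) := e^{x_i}/\sum_j e^{x_j}$, the standard softmax identities $\partial_i p_i = p_i(1-p_i)$ and $\partial_j p_i = -p_i p_j$ (for $j \neq i$) yield
\[
\partial_i C(x) = p_i, \qquad \partial_i^2 C(x) = p_i(1-p_i), \qquad \partial_i^3 C(x) = (1 - 2 p_i)\, p_i (1-p_i).
\]
Both expressions needed for condition (i) are thus simple polynomials in the single scalar $p_i \in [0,1]$.

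For part (i), I form the ratio $|\partial_i^3 C(x)| / \partial_i^2 C(x) = |1 - 2 p_i|$, which is bounded by $1$ on all of $\reals^n$ and for every $i$, supplying the required constant. For part (ii), I rewrite $\log \partial_i^2 C(x) = \log p_i + \log(1 - p_i)$ and compute its full gradient in $x$ using the softmax identities above. The diagonal partial is $\partial_i \log \partial_i^2 C = 1 - 2 p_i$, and the off-diagonal partial is $\partial_j \log \partial_i^2 C = p_j (2 p_i - 1)/(1 - p_i)$ for $j \neq i$. I then use the duality that the $\|\cdot\|_\infty$-Lipschitz constant of a smooth function equals the supremum of the $\ell_1$ norm of its gradient (via the mean value theorem along a line segment), sum absolute values of the partials, and invoke the constraint $\sum_{j \neq i} p_j = 1 - p_i$ to reduce the sum to an absolute constant uniformly in $x$ and $i$.

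The one genuinely delicate step is the off-diagonal cancellation in part (ii): the factor $1/(1-p_i)$ appearing in $\partial_j \log(1 - p_i)$ would naively blow up when $p_i$ is near $1$, and $\log \partial_i^2 C$ itself is unbounded below along such sequences. The key observation is that this dangerous denominator is cancelled exactly by the sum $\sum_{j \neq i} p_j = 1 - p_i$, so the $\ell_1$ norm of the gradient (and hence the Lipschitz constant) stays bounded globally on $\reals^n$. The remainder of the argument is routine softmax bookkeeping, and no restriction on $x$ or the forecaster count $n$ is required.
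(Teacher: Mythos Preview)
Your argument is correct and in fact sharper than the paper's in both parts, though it takes a different route. The paper computes $\partial_i C,\partial_i^2 C,\partial_i^3 C$ directly in the exponentials $e^{x_j}$ and, for part~(ii), writes out the ratio $\partial_i^2 C(x)/\partial_i^2 C(x')$ and invokes the $1$-Lipschitzness of $C$ in $\|\cdot\|_\infty$ to get $\beta=3$. You instead work in the softmax variable $p_i$ and, for~(ii), compute the full gradient of $\log\partial_i^2 C$ and bound its $\ell_1$ norm. Your cancellation $\sum_{j\neq i}p_j = 1-p_i$ collapses that $\ell_1$ norm to $2|1-2p_i|\le 2$, yielding $\beta=2$, which of course establishes the stated $\beta=3$ a fortiori; the paper's approach does not see this cancellation.

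One point you should make explicit: your part~(i) bound $|\partial_i^3 C|/\partial_i^2 C = |1-2p_i|\le 1$ delivers $\alpha=1$, not the $\alpha=2$ claimed in the lemma. This is not a gap in your reasoning---$\alpha=1$ is the sharp constant, since $|1-2p_i|\to 1$ as $p_i\to 0$; equivalently, the paper's asserted inequality $\bigl(\sum_j e^{x_j}\bigr)\big/\bigl|\sum_{j\ne i}e^{x_j}-e^{x_i}\bigr|\ge 2$ fails for $n=2$ with $x_1\ll x_2$. So the statement as written cannot be proved, and you should record $\alpha=1$. This does not disturb anything downstream: $\alpha$ enters only via the hypothesis $\eta<\min(\alpha/2,1/\beta)$ in Theorem~\ref{thm:approx-truthful}, and with your pair $(\alpha,\beta)=(1,2)$ that constraint is $\eta<1/2$, which is no more restrictive than the paper's $\min(1,1/3)=1/3$.
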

\begin{proof}
  We have $C(x) = \R^*(x) = \log \sum_{i=1}^n \exp(x_i)$.
  Computing,
  \begin{align*}
    \partial_i C(x) &= \exp(x_i) \left(\textstyle\sum_{j=1}^n \exp(x_j)\right)^{-1},
    \\
    \partial^2_i C(x) &= \exp(x_i) \left(\textstyle\sum_{j\neq i}^n \exp(x_j)\right) \left(\textstyle\sum_{j=1}^n \exp(x_j)\right)^{-2},
    \\
    \partial^3_i C(x) &= \exp(x_i) \left(\textstyle\sum_{j\neq i}^n \exp(x_j)\right) \left(\textstyle\sum_{j\neq i}^n \exp(x_j) - \exp(x_i)\right) \left(\textstyle\sum_{j=1}^n \exp(q_j)\right)^{-3}.
  \end{align*}

  To check the conditions,
  \begin{align*}
    \frac {\partial^2_i C(x)} {|\partial^3_i C(x)|} &= \left(\textstyle\sum_{j=1}^n \exp(x_j)\right) \left|\textstyle\sum_{j\neq i}^n \exp(x_j) - \exp(x_i)\right|^{-1} \geq  2~,
    \\
    \frac {\partial^2_i C(x)} {\partial^2_i C(x')} &= \exp(x_i-x'_i) \left(\exp(x'_i) + \textstyle\sum_{j\neq i}^n \exp(x_j)\right)^2 \left(\exp(x_i) + \textstyle\sum_{j\neq i}^n \exp(x_j)\right)^{-2}~.
  \end{align*}
  Thus we may take $\alpha = 2$.
  For $\beta$, observe that the dual $C$ of any regularizer is 1-Lipschitz: $\|\nabla C\|_1 = 1$ as $\mathrm{dom}\,\R = \Delta_n$, and since $\|\cdot\|_1$ and $\|\cdot\|_\infty$ are dual norms, e.g.\ \citet[Lemma 2.6]{shalev2011online} gives $|C(x)-C(x')| \leq 1 \|x-x'\|_\infty$.
  Thus, we have $\left|\log \frac{\partial^2_i C(x)}{ \partial^2_i C(x')} \right| = \left|x_i-x_i' + 2C(x)-2C(x')\right| \leq |x_i - x_i'| + 2|C(x) - C(x')| \leq 3$ for $\|x-x'\|_{\infty} \leq 1$.
\end{proof}

\newcommand{\Ubari}[2][]{\overline U_i#1(#2)}  %
\newcommand{\Ubarit}[2][]{\overline U_{it}#1(#2)} %

To reason about the incentives of forecaster $i$, it will be convenient to write $\Ubari{r_i} := \E_{\vec{y} \sim p_i} U_i(r_i)$, that is, $i$'s expected utility over her beliefs on $\vec{y}$ and the mechanism's randomness.
We will also use versions of $U_i$ and $\overline U_i$ when we restrict attention to only round $t$.
Specifically, let $U_{it}(r_{it}):=U_i(r_i)$ just as function of $r_{it}$, for fixed values of $r_{it'}$, $t'\neq t$, and define $\Ubarit{r_{it}} := \E_{y_t\sim p_{it}} U_{it}(r_{it})$.
We suppress the dependence on $p_i$ as beliefs will be fixed and arbitrary throughout this section.
\begin{lemma}\label{lem:FTRL-concave}
  Let $\R$ satisfy Condition~\ref{cond:regularizer}(i) for $\alpha$, and suppose $C$ is strictly convex.
  For $\eta < \tfrac{\alpha}{2}$, for all $i\in[n],t\in[m]$, and all $R_{-i}$, the functions $\Ubarit{r_{it}}$ and $\Ubari{r_i}$ are strictly concave in $r_{it}$.
\end{lemma}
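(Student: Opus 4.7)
The plan is to show that, pointwise in $\vec y$, the second partial derivative $\partial^2_{r_{it}} U_i$ is strictly negative. Strict concavity of $\Ubarit{r_{it}}$ and $\Ubari{r_i}$ in $r_{it}$ then follows because each is a nonnegative-weighted expectation over (some subset of) $\vec y$, and such expectations preserve strict concavity.

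To carry this out I would read off the $(t,t)$-entry of the Hessian in eq.~(\ref{eq:hessian-U}). For the quadratic score, $(\nabla q)_t = 2(y_t - r_{it})$ and $(\nabla^2 q)_{tt} = -2$, so only the $t$-th coordinate contributes and
$$\partial^2_{r_{it}} U_i \;=\; 4\eta^2 (y_t - r_{it})^2 \, \partial^3_i C(\eta q) \;-\; 2\eta \, \partial^2_i C(\eta q).$$
Applying Condition~\ref{cond:regularizer}(i) in the form $|\partial^3_i C(\eta q)| \leq \partial^2_i C(\eta q)/\alpha$, together with $(y_t - r_{it})^2 \leq 1$ since both values lie in $[0,1]$, I would bound
$$\partial^2_{r_{it}} U_i \;\leq\; 2\eta \, \partial^2_i C(\eta q)\Bigl(\tfrac{2\eta}{\alpha} - 1\Bigr).$$
The hypothesis $\eta < \alpha/2$ makes the parenthesized factor strictly negative, and $\partial^2_i C(\eta q) > 0$ by strict convexity of $C$, so the whole expression is strictly negative for every realization of $\vec y$. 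Averaging over $y_t \sim p_{it}$ (resp.\ $\vec y \sim p_i$) then gives strict concavity of $\Ubarit{r_{it}}$ (resp.\ $\Ubari{r_i}$) in $r_{it}$.

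I do not expect any serious obstacle; the one subtlety worth flagging is that the argument uses $\partial^2_i C > 0$ pointwise, which is the operative content of ``$C$ strictly convex'' in this context, and is easily verified for the Multiplicative Weights regularizer by the computation in Lemma~\ref{lem:mult-weights}. The role of Condition~\ref{cond:regularizer}(ii) does not enter here; it will only be needed later for the quantitative $\gamma$-approximate truthfulness bound, where one also has to control how the optimum of $\Ubarit{\cdot}$ moves with $p_{it}$.
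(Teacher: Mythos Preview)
Your computation and the use of Condition~\ref{cond:regularizer}(i) are exactly how the paper proceeds, and the bound $\partial^2_{r_{it}} U_i \leq 2\eta\,\partial^2_i C(\eta q)\bigl(\tfrac{2\eta}{\alpha}-1\bigr)$ is the right one. The gap is the step you flag yourself: strict convexity of $C$ does \emph{not} give $\partial^2_i C(x) > 0$ pointwise. A twice-differentiable strictly convex function can have vanishing second derivative at isolated points (think $x \mapsto x^4$), so from your bound you only get $\partial^2_{r_{it}} U_i \leq 0$, i.e.\ concavity, not strict concavity. Since the lemma is stated for general regularizers satisfying Condition~\ref{cond:regularizer}(i), verifying positivity for Multiplicative Weights does not close the gap.

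The paper patches this as follows. From strict convexity of $C$, the one-variable slice $g(a) = C(v + a e_i)$ is strictly convex, and by a standard characterization (e.g.\ \cite[Corollary~1.3.10]{niculescu2006convex}) this is equivalent to $g'' \geq 0$ with $\{a : g''(a) = 0\}$ containing no open interval. Your bound shows that $\partial^2_{r_{it}} U_i < 0$ whenever $g''(h(r_{it})) = \partial^2_i C(\eta q) > 0$, where $h(r) = \eta S(r,y_t)$. Since $h$ is strictly monotone on $[0,1]$ for $y_t \in \{0,1\}$, the preimage under $h$ of a set with no open interval again contains no open interval; hence $\{r : \partial^2_{r_{it}} U_i = 0\}$ contains no open interval, and the same characterization applied in reverse gives strict concavity of $U_i$ in $r_{it}$. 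Averaging over outcomes then proceeds exactly as you say.
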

\begin{proof}
  Let $f(r_{it}) = U_i(r_{it},r_{i,-t})$, i.e., $U_i$ as a function of $r_{it}$.
  We have
  \begin{align*}
    f''(r_{it}) = \tfrac {d^2} {dr_{it}^2} U_i
    &= \eta^2 \partial^3_i C(\eta q) 4(y_t - r_{it})^2 + \eta \partial^2_i C(\eta q) (-2)
    \\
    &\leq \eta^2 |\partial^3_i C(\eta q)| 4 - 2 \eta \partial^2_i C(\eta q)
    \\
    &\leq 4\eta^2 \tfrac{1}{\alpha} \partial^2_i C(\eta q) - 2 \eta \partial^2_i C(\eta q)
    \\
    &=  2\eta(2\tfrac \eta \alpha - 1) \partial^2_i C(\eta q)~.
  \end{align*}
  Letting $z = 2\eta(2\tfrac \eta \alpha - 1)$, and noting $z<0$ by assumption on $\eta$, we have $f''(r_{it}) = z \partial^2_i C(\eta q)$.

  Let $v = q - \eta S(r_{it},y_t) e_i$, where $e_i$ is the $i$th indicator vector.
  By definition of $q$, $v$ is a constant with respect to $r_{it}$.
  As $C$ is strictly convex, the function $g:[0,\eta]\to\reals, a\mapsto C(v + a e_i)$ is strictly convex.
  From \cite[Corollary 1.3.10]{niculescu2006convex}, we conclude $g''(a) \geq 0$ for all $a\in[0,\eta]$, and the set $\{a \in [0,\eta] : g''(a)=0 \}$ cannot contain any open intervals.
  By construction, $g''(a) = \partial^2_i C(v + a e_i)$.
  Letting $h:[0,1]\to[0,\eta], r\mapsto \eta S(r,y_t)$, observe that $f''(r) = z g''(h(r))$ for all $r\in[0,1]$.
  By definition of the quadratic score, the $h$-image of any open interval contains an open interval.
  Thus, were $\{r\in[0,1] : f''(r) = 0\}$ to contain any open intervals, so would $\{a \in [0,\eta] : g''(a)=0 \}$, a contradiction.
  From \cite[Corollary 1.3.10]{niculescu2006convex}, we conclude strict concavity of $f$.

  Applying the above to $U_i$, we have strict concavity of $U_i$ as a function of $r_{it}$.
  Taking an expected value over $y_t$ and over all outcomes, respectively, $\Ubarit{r_{it}}$ and $\Ubari{r_i}$  are strictly concave in $r_{it}$ as the convex combination of strictly concave functions.
\end{proof}

The next result shows ``leave-one-out'' approximate truthfulness, i.e., that if a forecaster knew and fixed in advance everything about rounds other than $t$, she would still report $r_{it}$ approximately truthfully.
\begin{lemma}\label{lem:FTRL-approx}
  Let $\R$ satisfy Condition~\ref{cond:regularizer} for $\alpha,\beta$.
  Fix all reports but $r_{it}$ and all outcomes but $y_t$.
  Then for $\eta < \min(\tfrac{\alpha}{2},\tfrac{1}{\beta})$, letting $r^*_{it} = \argmax_{r\in[0,1]} \Ubarit{r}$, we have
  $|r^*_{it} - p_{it}| \leq \beta\eta+(\beta\eta)^2 < (\beta+1)\eta$.

\end{lemma}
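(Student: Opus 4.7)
The plan is to exploit the strict concavity of $\Ubarit{\cdot}$ established in Lemma \ref{lem:FTRL-concave} to pin down a unique maximizer $r^*_{it}$ via a first-order condition, and then use Condition \ref{cond:regularizer}(ii) to bound how far that maximizer can be from $p_{it}$.

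First I would compute the derivative. Since $U_{it}(r) = \partial_i C(\eta q)$ and only the $i$-th coordinate of $q$ depends on $r$, via $S(r,y_t)$, we have $U_{it}'(r) = 2\eta\,\partial^2_i C(\eta q)(y_t - r)$. Writing $q_b$ for $q$ evaluated at $y_t = b$ and taking expectation under $y_t \sim p_{it}$ yields
\[
\Ubarit[']{r} = 2\eta\left[p_{it}(1-r)\,\partial^2_i C(\eta q_1) - (1-p_{it})\,r\,\partial^2_i C(\eta q_0)\right].
\]
For $p_{it} \in \{0,1\}$ this expression has constant sign on $[0,1]$, immediately pinning the maximizer at $r^*_{it} = p_{it}$. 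For $p_{it}\in(0,1)$ the derivative is strictly positive at $r=0$ and strictly negative at $r=1$, so by strict concavity the maximizer is interior and the first-order condition rearranges to
\[
r^*_{it} = \frac{p_{it}\,\rho}{1 - p_{it} + p_{it}\,\rho}, \qquad \rho := \frac{\partial^2_i C(\eta q_1)}{\partial^2_i C(\eta q_0)}.
\]

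Next I would bound $\rho$ using Condition \ref{cond:regularizer}(ii). The vectors $q_1$ and $q_0$ differ only through their treatment of round $t$, and in the $j$-th coordinate the difference is $S(r_{jt},1) - S(r_{jt},0) = 2r_{jt}-1 \in [-1,1]$. Thus $\|\eta q_1 - \eta q_0\|_\infty \leq \eta$, and Condition \ref{cond:regularizer}(ii) gives $|\log \rho| \leq \beta\eta$, i.e.\ $\rho \in [e^{-\beta\eta}, e^{\beta\eta}]$. Crucially, this bound is uniform in $r^*_{it}\in[0,1]$, which sidesteps the fact that $\rho$ implicitly depends on $r^*_{it}$ through the $i$-th coordinate of $q$.

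Finally, subtracting $p_{it}$ from both sides of the first-order condition gives
\[
r^*_{it} - p_{it} = \frac{p_{it}(1-p_{it})(\rho - 1)}{1 - p_{it} + p_{it}\,\rho}.
\]
A brief case split on $\rho \geq 1$ versus $\rho \leq 1$ --- using $1 - p_{it} + p_{it}\rho \geq 1$ in the former case and $1 - p_{it} + p_{it}\rho \geq \rho$ in the latter (since $(1-p_{it})(1-\rho)\geq 0$) --- bounds the absolute value by $\max(\rho-1,\;\rho^{-1}-1) \leq e^{\beta\eta}-1$. Since $\beta\eta<1$ by hypothesis, the inequality $e^x \leq 1 + x + x^2$ on $[0,1]$ yields $|r^*_{it} - p_{it}| \leq \beta\eta + (\beta\eta)^2$, as claimed. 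The main obstacle is really a layer of self-reference --- $r^*_{it}$ appears on both sides of the FOC, and $\rho$ depends on it as well --- which is neutralized by the uniform bound on $\rho$ in the previous step; the remaining algebra is routine.
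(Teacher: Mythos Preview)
Your proof is correct and follows essentially the same approach as the paper: compute the first-order condition, bound the ratio $\rho$ (the paper uses $a=1/\rho$) via Condition~\ref{cond:regularizer}(ii) and $\|q_1-q_0\|_\infty\le 1$, and conclude $|r^*_{it}-p_{it}|\le e^{\beta\eta}-1\le \beta\eta+(\beta\eta)^2$. The only cosmetic differences are that the paper reaches the last estimate via $|\log r^*_{it}-\log p_{it}|\le|\log a|$ rather than your direct subtraction, and that you handle the boundary cases $p_{it}\in\{0,1\}$ explicitly; one small point you leave implicit is that Condition~\ref{cond:regularizer}(ii) forces $\partial_i^2 C>0$ and hence the strict convexity of $C$ needed to invoke Lemma~\ref{lem:FTRL-concave}.
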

\begin{proof}
  Because $\R$ satisfies Condition~\ref{cond:regularizer}(ii), $\partial^2_i C(x) > 0$ for all $i$ and $x$, so $C$ is strictly convex.
  Then by Lemma~\ref{lem:FTRL-concave}, it suffices to find the zero of the first derivative of $\E_{p_{it}} U_i$.
  \begin{align*}
    \E_{y_t \sim p_{it}} \tfrac {d} {dr_{it}} U_i
    &= \E_{y_t \sim p_{it}} \eta \partial^2_i C(\eta q) 2(y_t - r_{it})
    \\
    &= 2\eta \left( (1-p_{it}) \partial^2_i C(\eta q^0) ( - r_{it}) + p_{it} \partial^2_i C(\eta q^1) (1 - r_{it})\right)~.
  \end{align*}
  where $q^0,q^1$ are the values of $q$ when $y_t=0$ and $y_t=1$ respectively. 
  Setting the derivative to zero, we have
  \begin{align*}
    r_{it} &= \frac
             { p_{it} \partial^2_i C(\eta q^1) }
             { (1-p_{it}) \partial^2_i C(\eta q^0) +  p_{it} \partial^2_i C(\eta q^1) }
             = p_{it} \left(
             (1-p_{it}) \frac {\partial^2_i C(\eta q^0)} {\partial^2_i C(\eta q^1)} +  p_{it} \right)^{-1}~.\numberthis\label{eq:rit-pit-opt}
  \end{align*}
  Let $a = \partial^2_i C(\eta q^0) / \partial^2_i C(\eta q^1)$.
  By definition of $q$, $\|q^0 - q^1\|_{\infty} \leq 1$, i.e. if $y_t$ changes from $0$ to $1$ or vice versa, each person's total quadratic score changes by at most $1$.
  Condition \ref{cond:regularizer}(ii) now gives $|\log a| \leq \beta\eta$.
  From eq.~\eqref{eq:rit-pit-opt} we have $|\log r^*_{it} - \log p_{it}| \leq \max_{p\in[0,1]} |\log (p+(1-p)a)| \leq |\log a| \leq \beta\eta$.
  Without loss of generality, suppose $r^*_{it} \geq p_{it}$.
  Then $|r^*_{it} - p_{it}| \leq p_{it}(r^*_{it}/p_{it}-1) \leq p_{it} (e^{\beta\eta}-1) \leq e^{\beta\eta}-1 \leq \beta\eta + (\beta\eta)^2$,
  where we use the inequality $e^x \leq 1 + x + x^2$ for $x\in[0,1]$.
\end{proof}

Using leave-one-out approximate truthfulness, we can extend to her \emph{ex ante} preferences, obtaining our main approximate truthfulness result.
\begin{theorem}\label{thm:approx-truthful}
  Let $\R$ be any regularizer satisfying Condition~\ref{cond:regularizer} with $\alpha,\beta>0$.
  Then $M_{\R,\eta}$ is $(\beta+1)\eta$-approximately truthful for any $\eta < \min(\tfrac{\alpha}{2},\tfrac{1}{\beta})$.
\end{theorem}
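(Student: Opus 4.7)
The plan is to verify the two clauses of Definition~\ref{def:approx-truthful} separately. Clause (i), existence of an undominated report, follows by a soft compactness argument: fix any reference profile $R^0_{-i}$ and let $r^*_i \in \argmax_{r_i \in [0,1]^m} \Ubari[]{r_i}$ (with the fixed profile $R^0_{-i}$ in the background); this maximizer exists because $\Ubari[]{\cdot}$ is continuous in $r_i$ and $[0,1]^m$ is compact. If some $\hat{r}_i$ strictly dominated $r^*_i$, then in particular $\Ubari[]{\hat{r}_i} > \Ubari[]{r^*_i}$ at the profile $R^0_{-i}$, contradicting optimality. So $r^*_i$ is undominated.

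For clause (ii), I will prove the contrapositive: every $r_i$ with $\|r_i - p_i\|_\infty > (\beta+1)\eta$ is strictly dominated. Pick a coordinate $t$ with $|r_{it} - p_{it}| > (\beta+1)\eta$; WLOG $r_{it} > p_{it} + (\beta+1)\eta$ (the other direction is symmetric). Define $\hat{r}_i$ to agree with $r_i$ in every coordinate except $t$, and set $\hat{r}_{it}$ to any point strictly between $p_{it} + (\beta+1)\eta$ and $r_{it}$ (e.g.\ their midpoint). This $\hat{r}_{it}$ lies in $[0,1]$ because $r_{it} \leq 1$ forces $p_{it} + (\beta+1)\eta < 1$. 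To show $\hat{r}_i$ strictly dominates $r_i$, I fix an arbitrary $R_{-i}$ and condition on $y_{-t}$ to write
$$ \Ubari{\hat r_i} - \Ubari{r_i} \;=\; \E_{y_{-t} \sim p_{i,-t}}\!\left[\,\Ubarit{\hat r_{it}} - \Ubarit{r_{it}}\,\right], $$
where on the right $\overline U_{it}$ is evaluated with $r_{i,-t}$, $R_{-i}$, and $y_{-t}$ all held fixed. Lemma~\ref{lem:FTRL-approx} applied in this leave-one-out setting yields a maximizer $r^*_{it}$ with $r^*_{it} \leq p_{it} + (\beta+1)\eta < \hat{r}_{it} < r_{it}$, uniformly over the fixed data. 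Strict concavity of $\overline U_{it}$ from Lemma~\ref{lem:FTRL-concave} then forces $\overline U_{it}$ to be strictly decreasing on $[r^*_{it},1]$, so $\Ubarit{\hat r_{it}} > \Ubarit{r_{it}}$ pointwise in $y_{-t}$, and taking expectations preserves the strict inequality. This holds for every $R_{-i}$, giving strict domination.

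The main conceptual obstacle is the quantifier mismatch: strict domination demands a uniform improvement across all $R_{-i}$, whereas Lemma~\ref{lem:FTRL-approx} is a statement about a single pre-specified profile. The resolution is that the $(\beta+1)\eta$ bound on the leave-one-out optimum is itself uniform in $(R_{-i}, y_{-t}, r_{i,-t})$, so a single-coordinate perturbation past this threshold beats $r_{it}$ pointwise for every configuration of the other reports and remaining outcomes; ex-ante aggregation over $y_{-t}$ and any $R_{-i}$ is then automatic. Restricting the deviation to one coordinate, rather than attempting to dominate with a globally optimal vector (which would depend on $R_{-i}$), is what lets the $R_{-i}$-quantifier pass through cleanly.
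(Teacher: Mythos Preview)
Your proof is correct and follows essentially the same route as the paper's own argument: establish (i) via existence of a best response to a fixed profile, and establish (ii) by modifying a single offending coordinate, invoking Lemma~\ref{lem:FTRL-approx} pointwise in $y_{-t}$ together with the strict concavity from Lemma~\ref{lem:FTRL-concave}, and then averaging. The only cosmetic difference is that the paper moves the offending coordinate all the way to the boundary $p_{it}+\gamma$ (and handles the resulting weak-inequality derivative at that endpoint via strict concavity of $\overline U_i$), whereas you move it to a point strictly between $p_{it}+\gamma$ and $r_{it}$, which lets you read off the strict inequality directly from monotonicity past the maximizer; both choices work for the same reason.
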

\begin{proof}
  We first prove Definition \ref{def:approx-truthful}(i), existence of an undominated report.
  We will show that for any $p_i$, a best response to any $R_{-i}$ always exists.
  In particular, any best response is an undominated strategy.
  The convex function $C$ is differentiable, hence continuously differentiable~\cite[Theorem 25.5]{rockafeller1997convex}.
  By Equation (\ref{eq:2}), $i$'s utility is the $i$th component of $\nabla C(\eta q)$.
  Meanwhile, $q$ is a continuous function of $i$'s strategy $r_i$ for any fixed $R_{-i}$ and $\vec{y}$.
  Then, $i$'s expected utility is the $p$-convex combination of her utility for each $\vec{y}$, so it is also continuous.
  A continuous function on the compact set $[0,1]^m$, which is $i$'s strategy space, attains its maximum on that set.
  So $i$ has a best response.
  
  Now we show Definition \ref{def:approx-truthful}(ii), that all undominated reports are within $\gamma$ of $p_i$.
  Let $\gamma = (\beta+1)\eta$.
  Let $\hat r_i \in [0,1]^m$ with $\|p_i - \hat r_i\|_\infty > \gamma$.
  By definition of $\|\cdot\|_\infty$ we must have some $1\leq t\leq m$ such that $|\hat r_{it} - p_{it}| > \gamma$.
  Without loss of generality, assume $\hat r_{it} > p_{it} + \gamma$; the other case follows symmetrically.
  We will show that $\hat r_i$ is strictly dominated by $r_i^*$ given by $r^*_{it} = p_{it} + \gamma$ and $r_{i,-t}^* = \hat r_{i,-t}$, where we use ``$-t$'' to denote all entries of a vector except $t$.

  Let us now set $r_{i,-t} = \hat r_{i,-t}$ and fix all reports of $R$ except for $r_{it}$.
  Recalling the definitions of $\overline U_{it}$ and $\overline U_i$ above, observe that we have
  \begin{equation}
    \label{eq:bar-U}
    \Ubari{r_{it}, r_{i,-t}} = \E_{\vec y_{-t} \sim p_{i,-t}} \Ubarit{r_{it}|\vec y_{-t}}~,
  \end{equation}
  where we emphasize the dependence on $\vec y_{-t}$.
  In other words, as a function of just $r_{it}$, forecaster $i$'s probability of winning is simply the expected value of their probability of winning on round $t$ once the outcomes of all other rounds are revealed.

  For each $\vec y_{-t} \in\{0,1\}^{m-1}$, Lemma~\ref{lem:FTRL-approx} states that $\Ubarit{\cdot\,|\vec y_{-t}}$ is maximized by some $r(\vec y_{-t})$ with $|r(\vec y_{-t}) - p_{it}| \leq \gamma$.
  In particular, as $\Ubarit{\cdot\,|\vec y_{-t}}$ is continuously differentiable and strictly concave for all $\vec y_{-t}$, we must have $\Ubarit[']{r|\vec y_{-t}} < 0$ for all $r > p_{it}+\gamma$.
  In particular,
  $\Ubarit[']{\hat r_{it}|\vec y_{-t}} < 0$ for all $\vec y_{-t}$, yielding $\Ubari[']{\hat r_{it},r_{i,-t}} < 0$.
  The same logic and continuity of the derivative shows
  $\Ubari[']{p_{it}+\gamma,r_{i,-t}} \leq 0$.
  By strict concavity of $\overline U_i$, we conclude $\Ubari{r^*_i} = \Ubari{p_{it}+\gamma,r_{i,-t}} > \Ubari{\hat r_{it},r_{i,-t}} = \Ubari{\hat r_i}$.
  As this assertion holds for all values of $R$, the report $\hat r_i$ is strictly dominated by $r^*_i$.
\end{proof}

In particular, via Lemma \ref{lem:mult-weights}, we obtain approximate truthfulness for Multiplicative Weights.
\begin{corollary} \label{cor:MW-approx-truth}
  $M^*_\eta$ is $4\eta$-approximately truthful for any $\eta < \frac{1}{4}$.
\end{corollary}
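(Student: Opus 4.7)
The plan is to apply Theorem~\ref{thm:approx-truthful} directly, supplying its hypothesis via Lemma~\ref{lem:mult-weights}. Concretely, Lemma~\ref{lem:mult-weights} establishes that the negative-entropy regularizer $\R(\pi) = \sum_i \pi_i \log \pi_i$ underlying $M^*_\eta$ satisfies Condition~\ref{cond:regularizer} with constants $\alpha = 2$ and $\beta = 3$. Plugging these into the conclusion of Theorem~\ref{thm:approx-truthful} immediately gives that $M^*_\eta$ is $(\beta+1)\eta = 4\eta$-approximately truthful for any $\eta < \min(\alpha/2,\, 1/\beta) = \min(1,\, 1/3) = 1/3$.

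The corollary restricts further to $\eta < 1/4$, which lies inside $(0, 1/3)$, so the hypothesis of Theorem~\ref{thm:approx-truthful} is certainly satisfied. The reason to state the bound as $\eta < 1/4$ rather than $\eta < 1/3$ is that the guarantee $\|r_i - p_i\|_\infty \leq 4\eta$ is only non-vacuous when $4\eta < 1$, since by Definition~\ref{def:approx-truthful} any mechanism is trivially $\gamma$-approximately truthful once $\gamma \geq 1$. So the right-hand endpoint $1/4$ is the natural one to emphasize in the corollary.

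There is no substantive obstacle here: all of the real work—existence of an undominated best response via continuity and compactness on $[0,1]^m$, strict concavity of $\overline U_i(r_{it})$ via Lemma~\ref{lem:FTRL-concave}, and the leave-one-out bound $|r^*_{it} - p_{it}| \leq (\beta+1)\eta$ from Lemma~\ref{lem:FTRL-approx}—is already packaged inside Theorem~\ref{thm:approx-truthful}, while the regularizer-specific differentiation needed to compute $\alpha$ and $\beta$ is done in Lemma~\ref{lem:mult-weights}. The proof of the corollary is essentially a one-line substitution of these constants.
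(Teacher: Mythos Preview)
Your proposal is correct and matches the paper's own reasoning exactly: the corollary is obtained by plugging the constants $\alpha=2$, $\beta=3$ from Lemma~\ref{lem:mult-weights} into Theorem~\ref{thm:approx-truthful}, and the restriction $\eta<1/4$ (rather than the $\eta<1/3$ that the theorem would allow) simply ensures the $4\eta$ bound is non-vacuous.
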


\section{A Forecasting Mechanism with Optimal Event Complexity} \label{sec:main-accuracy}

We now utilize our approximate truthulness results for FTRL to obtain an order-optimal event complexity result.
In particular, Multiplicative Weights, when used as a forecasting competition mechanism, selects an $\epsilon$-optimal forecaster with $m = O\left({\log(n/\delta)}/{\epsilon^2}\right)$ events.
Our proof heavily relies on the results from Section~\ref{sec:approx-truthful} showing FTRL to be approximately truthful.
Put together, we show that, as long as forecasters play undominated strategies, FTRL satisfies two properties:
\begin{enumerate}
    \item[1.] For all $i,t$, $|r_{it} - p_{it}| \leq O(\epsilon)$.
    \item[2.] With $m \geq m^*$ events, the winner's accuracy is within $\epsilon$ of the best, with probability $1-\delta$.
\end{enumerate}
We emphasize that accuracy is defined in terms of true beliefs, regardless of reports.

Up to constant factors, this event complexity bound matches the nonstrategic event complexity of the Simple Max mechanism.
It also matches (for fixed $\delta$) the nonstrategic event complexity lower bound of Theorem  \ref{thm:main-lower-bound}.
In other words, if the goal is to select the best forecaster, then up to constant factors there is no cost of strategic behavior in this setting.
One can view approximate truthfulness as a nice-to-have guarantee that ultimately serves the goal of the competition in selecting a winner.
Of course, one may additionally seek exact truthfulness, which we discuss further in \S~\ref{sec:discussion}.

\subsection{Event Complexity Upper Bound}
We are now ready to prove the main result, an order-optimal event complexity in the presence of strategic behavior.
For this section, define:
\begin{itemize}
  \item $\empiricalquadscore_i = \frac{1}{m}\sum_{t=1}^m S(r_{it},y_t)$, i.e. $i$'s average quadratic score.
  \item $S_i = \E_{\vec{y} \sim \vec{\theta}}[\empiricalquadscore_i]$, i.e. expected average quadratic score.
  \item $S_i^* = \E_{\vec{y} \sim \vec{\theta}} \frac{1}{m} \sum_{t=1}^m S(p_{it}, y_t)$, the expected average quadratic score of $i$'s beliefs. Recall that $S^*_i$ is forecaster $i$'s accuracy $a_i^*$ plus a constant dependent only on $\theta$.
\end{itemize}

To show the event complexity, we show that if a forecaster's expected score $S^*_i$ with respect to their beliefs is far from the most accurate forecaster's expected score, then with high probability their actual score will also be far from the best forecaster's.
In particular, for sufficiently large $m$, their actual scores $Q_i$ are close to their expected scores $S_i$, and that for approximately truthful reports, $S_i$ is also close to $S^*_i$.
We first bound the deviation of each forecaster's actual score from their expected score given their reports.
\begin{lemma} \label{lemma:hoeff-conc}
  For any $i$, with probability at least $1 - \frac{\delta}{2n}$, we have $\empiricalquadscore_i - S_i \leq \sqrt{\log\left(\frac{2 n}{\delta}\right)/2m}$.
   The same statement holds replacing $\empiricalquadscore_i - S_i$ with $S_i - \empiricalquadscore_i$.
\end{lemma}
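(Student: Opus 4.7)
The plan is to apply Hoeffding's inequality directly to $\empiricalquadscore_i$ viewed as an average of $m$ independent bounded random variables. Specifically, $\empiricalquadscore_i = \frac{1}{m}\sum_{t=1}^m S(r_{it},y_t)$ is a sum of terms $S(r_{it},y_t)$ that are independent across $t$ (since the outcomes $y_t$ are independent, and $r_{it}$ is fixed once reports are submitted) and each bounded in $[0,1]$ (since the quadratic scoring rule takes values in $[0,1]$ by Definition of $S$). Its expectation under $\vec y\sim\vec\theta$ is exactly $S_i$ by definition.

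Hoeffding's inequality then yields, for every $\epsilon'>0$,
\[
\Pr\bigl[\empiricalquadscore_i - S_i \geq \epsilon'\bigr] \leq \exp(-2m{\epsilon'}^2), \qquad \Pr\bigl[S_i - \empiricalquadscore_i \geq \epsilon'\bigr] \leq \exp(-2m{\epsilon'}^2).
\]
Setting $\exp(-2m{\epsilon'}^2) = \delta/(2n)$ and solving gives $\epsilon' = \sqrt{\log(2n/\delta)/(2m)}$, which matches the claimed bound.

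There is no real obstacle here; the only thing worth flagging is that, although the reports $r_i$ may themselves be chosen as a function of something, for this concentration statement we condition on the report vector $r_i$, so the summands $S(r_{it},y_t)$ are independent across $t$ and the boundedness constants are the deterministic bounds of $S$ on $[0,1]\times\{0,1\}$. The argument is symmetric for the other tail, so both conclusions follow from a single application of Hoeffding (once per side).
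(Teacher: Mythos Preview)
Your proposal is correct and is essentially the same as the paper's own proof: the paper simply observes that $\empiricalquadscore_i$ is an average of $m$ independent scores in $[0,1]$ with mean $S_i$ and invokes a standard Hoeffding bound. Your additional remark about conditioning on the reports is a reasonable clarification but not needed beyond what the paper provides.
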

\begin{proof}
  $\empiricalquadscore_i$ is an average of $m$ independent random scores in $[0,1]$, with $\E[\empiricalquadscore_i] = S_i$.
  The result follows immediately from a standard Hoeffding bound \cite{mohri2018foundations}.
\end{proof}

Let $i = \argmax_j Q_j$ be the forecaster with the highest expected quadratic score. We can also bound the probability that the winner's score is far from the expected winner's score.

\begin{lemma} \label{lemma:mw-acc}
  With probability at least $1 - \frac{\delta}{2}$, the winner $i^*$ selected by Multiplicative Weights satisfies
    \[ \empiricalquadscore_{i^*} \geq \empiricalquadscore_i - \frac{\log(2n/\delta)}{m \cdot \eta} . \]
\end{lemma}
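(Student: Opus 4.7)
The plan is to exploit the closed form of Multiplicative Weights' selection distribution directly. Recall from \eqref{eq:mult-weights} that, conditional on the outcomes $\vec y$, the mechanism picks forecaster $j$ with probability $\exp(\eta m Q_j)/\sum_k \exp(\eta m Q_k)$. So the selection probability of any $j$ can be bounded against the maximizer $i=\argmax_k Q_k$ by dropping all but the $i$th term in the denominator, giving $\Pr[\text{select }j] \leq \exp(\eta m (Q_j - Q_i))$.

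First, I would set the threshold $c = \log(2n/\delta)/(\eta m)$, which is exactly the gap appearing on the right-hand side of the lemma. Then, for any ``bad'' forecaster $j$ with $Q_j < Q_i - c$, the above bound evaluates to at most $\exp(-\eta m c) = \delta/(2n)$. Summing over at most $n$ forecasters via a union bound, the total probability that the mechanism selects any forecaster $j$ with $Q_j < Q_i - c$ is at most $n \cdot \delta/(2n) = \delta/2$.

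Taking the complementary event, with probability at least $1 - \delta/2$ over the mechanism's internal randomness the winner $i^*$ satisfies $Q_{i^*} \geq Q_i - \log(2n/\delta)/(\eta m)$, which is the claim. The statement holds pointwise in $\vec y$, so integrating over event outcomes preserves it.

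There is no real obstacle here: the proof is a one-line consequence of the exponential-weights form of $M^*_\eta$ together with a union bound. The only mild subtlety is interpretational — ensuring that the probability in the lemma is taken over the mechanism's randomness (conditional on $\vec y$), since $Q_i$ and $Q_{i^*}$ are both functions of $\vec y$ and the lemma is a statement about the randomized argmax. This interpretation is consistent with how the bound will be chained with Lemma~\ref{lemma:hoeff-conc} in the subsequent accuracy argument.
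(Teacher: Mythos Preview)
Your proposal is correct and matches the paper's proof essentially line for line: the paper also fixes the scores, writes $\pi_j = \pi_i \exp(\eta m(Q_j - Q_i)) \leq \exp(\eta m(Q_j - Q_i))$ (equivalent to your ``drop all but the $i$th term''), bounds this by $\delta/(2n)$ for each bad $j$, and sums. Your remark that the bound holds pointwise in $\vec y$ is exactly the paper's ``fix the total quadratic scores'' framing.
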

\begin{proof}
  Fix the total quadratic scores $\empiricalquadscore_1,\dots,\empiricalquadscore_n$ and let $\pi$ be the distribution over winners.
  Recall that $\pi_i := M^*_\eta(R,\vec y) = \frac{\exp(\eta m \empiricalquadscore_i)}{\sum_j \exp(\eta m \empiricalquadscore_j)}$.
  Consider any $j$ with $\empiricalquadscore_j < \empiricalquadscore_{i} - \frac{\log(2n/\delta)}{m \cdot \eta}$:
  \[
    \pi_j = \pi_{i} \exp\left(\eta m (\empiricalquadscore_j - \empiricalquadscore_{i})\right)  \leq \exp\left(\eta m (\empiricalquadscore_j - \empiricalquadscore_{i})\right)  \leq \frac{\delta}{2n} ~.
  \]
  Therefore, the total probability of selecting any such $j$ is bounded by $\frac{\delta}{2}$.
\end{proof}

Finally, we show that $S_i$ is close to $S^*_i$ for approximately truthful experts.
\begin{lemma} \label{lem:appx-truth-acc}
  For any $\gamma > 0$, if $\|r_j - p_j\|_{\infty} \leq \gamma$, then $|S_j - S^*_j| \leq 2\gamma$.
\end{lemma}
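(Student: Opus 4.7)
The plan is to reduce the claim to a per-event bound on how much the expected quadratic score changes when the report moves by at most $\gamma$, and then average. Specifically, writing out the definitions, we have
\[ S_j - S^*_j = \frac{1}{m}\sum_{t=1}^m \left( \E_{y_t \sim \theta_t}[S(r_{jt},y_t)] - \E_{y_t \sim \theta_t}[S(p_{jt},y_t)] \right)~, \]
so it suffices to show that each summand is bounded in absolute value by $2\gamma$.

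For the per-event bound, I would invoke the computation already carried out in the proof of Lemma \ref{lemma:quad-accuracy}, which gives the clean identity $\E_{y_t \sim \theta_t}[S(q,y_t)] = 1 - \theta_t + \theta_t^2 - (q - \theta_t)^2$ for any $q \in [0,1]$. Substituting $q = r_{jt}$ and $q = p_{jt}$ and differencing, the constant terms cancel and we are left with
\[ \E_{y_t}[S(r_{jt},y_t)] - \E_{y_t}[S(p_{jt},y_t)] = (p_{jt} - \theta_t)^2 - (r_{jt} - \theta_t)^2 = (p_{jt} - r_{jt})(p_{jt} + r_{jt} - 2\theta_t)~. \]
Since $p_{jt}, r_{jt}, \theta_t \in [0,1]$, the factor $p_{jt} + r_{jt} - 2\theta_t$ lies in $[-2,2]$, and by hypothesis $|p_{jt} - r_{jt}| \leq \gamma$, so the product has absolute value at most $2\gamma$.

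Averaging this pointwise bound over $t=1,\dots,m$ gives $|S_j - S^*_j| \leq 2\gamma$, as required. There is no real obstacle here: the proof is a direct consequence of the quadratic-score identity from Lemma \ref{lemma:quad-accuracy} together with the factorization of a difference of squares, and the triangle inequality preserves the bound through the averaging step.
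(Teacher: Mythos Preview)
Your proof is correct and follows essentially the same path as the paper's: bound the per-event change in score by $2\gamma$ and then average. The paper phrases the key step slightly differently, simply noting that $S(r,y)$ is $2$-Lipschitz in $r$ (since $|\tfrac{d}{dr}S(r,y)| = |2(y-r)| \le 2$), which yields the same per-event bound pointwise in $y$ rather than via your difference-of-squares factorization of the expected score; either route is immediate.
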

\begin{proof}
  First observe that the quadratic score is 2-Lipschitz in the report, as for all $r\in[0,1],y\in\{0,1\}$ we have $|\frac{d}{dr} S(r,y)| = |2(y - r)| \leq 2$.
  The result then follows.
\end{proof}

Now, we combine these bounds to show that an $\epsilon$-suboptimal forecaster's score will be far from the optimal forecaster's score with high probability.
\begin{theorem}\label{thm:ftrl-accuracy}
  The Multiplicative Weights mechanism $M^*_\eta$ with $\eta \leq \frac{\epsilon}{40}$ is $4\eta$-approximately truthful and selects an $\epsilon$-optimal forecaster with probability at least $1-\delta$, provided $m \geq \frac{5 \log(2n/\delta)}{\eta \cdot \epsilon}$.
  
  In particular, by choosing $\eta = \frac{\epsilon}{40}$, we obtain an event complexity bound of
    \[ m^* \leq \frac{200 \log(2n/\delta)}{\epsilon^2} . \]
\end{theorem}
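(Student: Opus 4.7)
The approximate truthfulness half of the theorem follows immediately from Corollary~\ref{cor:MW-approx-truth}, since $\eta \leq \epsilon/40 < 1/4$. For the accuracy half, my plan is to chain the three ingredients already established in this section: approximate truthfulness (to bridge reports and beliefs), Hoeffding concentration from Lemma~\ref{lemma:hoeff-conc}, and the Multiplicative Weights selection bound from Lemma~\ref{lemma:mw-acc}.

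Let $j^{\circ} = \argmax_j S^*_j$ denote the most accurate forecaster and let $i^*$ be the winner sampled from $M^*_\eta$; the goal is $S^*_{i^*} \geq S^*_{j^\circ} - \epsilon$. The key step is the five-term telescoping decomposition
\[
S^*_{j^\circ} - S^*_{i^*} = \bigl(S^*_{j^\circ} - S_{j^\circ}\bigr) + \bigl(S_{j^\circ} - Q_{j^\circ}\bigr) + \bigl(Q_{j^\circ} - Q_{i^*}\bigr) + \bigl(Q_{i^*} - S_{i^*}\bigr) + \bigl(S_{i^*} - S^*_{i^*}\bigr).
\]
The two outer gaps are each at most $2 \cdot (4\eta) = 8\eta$ by Lemma~\ref{lem:appx-truth-acc} applied to the $4\eta$-approximately truthful reports; the two middle Hoeffding gaps are each at most $\sqrt{\log(2n/\delta)/(2m)}$ on the good event; and the central gap is controlled by Lemma~\ref{lemma:mw-acc}, since $Q_{i^*} \geq Q_{\argmax_k Q_k} - \log(2n/\delta)/(m\eta) \geq Q_{j^\circ} - \log(2n/\delta)/(m\eta)$.

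For the union bound, Hoeffding must hold in both tails for every forecaster (since $i^*$ is random), yielding $2n$ events; together with the single Lemma~\ref{lemma:mw-acc} event, I allocate $\delta/2$ to each group, which only changes a constant inside the log. Assuming the good event, the total suboptimality is bounded by
\[
16\eta + 2\sqrt{\log(2n/\delta)/(2m)} + \log(2n/\delta)/(m\eta).
\]
Substituting $\eta = \epsilon/40$ and $m \geq 200\log(2n/\delta)/\epsilon^2$, these three terms become at most $(2/5)\epsilon$, $(1/10)\epsilon$, and $(1/5)\epsilon$ respectively, summing to $7\epsilon/10 < \epsilon$.

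The conceptually interesting step is the outer-gap bound via Lemma~\ref{lem:appx-truth-acc}: it is exactly the strong approximate-truthfulness notion from Section~\ref{sec:approx-truthful} that converts the empirical score $Q_i$, based on strategic reports $r_i$, into a statement about the accuracy $a_i$, which depends on beliefs $p_i$. The main trade-off dictating the choice $\eta = \Theta(\epsilon)$ is that misreport bias grows linearly in $\eta$ while Multiplicative Weights selection noise scales as $1/\eta$; balancing these two error sources is what pins down $\eta = \Theta(\epsilon)$ and yields the advertised $O(\log(n)/\epsilon^2)$ event complexity.
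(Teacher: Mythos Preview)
Your proposal is correct and follows essentially the same approach as the paper: both chain approximate truthfulness (Lemma~\ref{lem:appx-truth-acc}), Hoeffding concentration (Lemma~\ref{lemma:hoeff-conc}), and the selection bound (Lemma~\ref{lemma:mw-acc}) with the same $\epsilon/5$-per-step budget. The paper organizes the chain as showing every non-$\epsilon$-optimal $j$ satisfies $Q_j < Q_i - \epsilon/5$ while the winner has $Q_{i^*} \geq Q_i - \epsilon/5$, which is just the contrapositive of your direct telescope on $S^*_{j^\circ} - S^*_{i^*}$ (and happens to make the union bound land exactly at $\delta$ without the constant adjustment you flagged).
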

\begin{proof}
  Let $B = \{ j : a_j < a_{i^*} - \epsilon \}$ be the set of non-$\epsilon$-optimal forecasters.
  
  Suppose $\eta \leq \frac{\epsilon}{40}$.
  Because Multiplicative Weights is $4\eta$-approximately truthful (Corollary \ref{cor:MW-approx-truth}), Lemma \ref{lem:appx-truth-acc} implies, for all $j$,
  \begin{equation}
     |S_j - S^*_i| \leq 8\eta \leq \frac{\epsilon}{5} ~.
     \label{eqn:appx-truth-bound}
  \end{equation}
  
  If $m \geq \frac{25 \log(2n/\delta)}{2 \epsilon^2}$, then by Lemma \ref{lemma:hoeff-conc} and a union bound, we have, except with probability $\frac{\delta}{2}$, the following: $\empiricalquadscore_{i} > S_{i} - \frac{\epsilon}{5}$ and, for all $j \in B$, $\empiricalquadscore_j < S_j + \frac{\epsilon}{5}$.
  In this event, we have for all $j \in B$:
  \begin{align*}
  \empiricalquadscore_j &< S_j + \frac{\epsilon}{5}  & \text{Lemma \ref{lemma:hoeff-conc}} \\
    &\leq S_j^* + \frac{2\epsilon}{5}  & \text{by (\ref{eqn:appx-truth-bound})}  \\
    &<    S^*_{i} - \frac{3\epsilon}{5}  & \text{definition of $B$, Lemma \ref{lemma:quad-accuracy}}  \\
    &\leq S_{i} - \frac{2\epsilon}{5} & \text{by (\ref{eqn:appx-truth-bound})}  \\
    &<    \empiricalquadscore_{i} - \frac{\epsilon}{5}    & \text{Lemma \ref{lemma:hoeff-conc}} .
  \end{align*}
  By Lemma \ref{lemma:mw-acc}, for $m \geq \frac{5 \log(2n/\delta)}{\eta \cdot \epsilon}$, except with probability $\frac{\delta}{2}$, the winner $i^*$ satisfies $\empiricalquadscore_{i^*} \geq \empiricalquadscore_i - \frac{\epsilon}{5}$.
  So by a union bound, with probability $1-\delta$, no member of $B$ is selected.
  We find that we just require $m \geq \frac{5 \log(2n/\delta)}{\eta \cdot \epsilon} \geq \frac{200 \log(2n/\delta)}{\epsilon^2}$.
\end{proof}

\subsection{Event complexity lower bound} \label{subsec:nonstrat-lower}
We now provide a matching lower bound (up to dependence on $\delta$) for the number of events required to select an $\epsilon$-optimal forecaster.
This lower bound applies to the non-strategic setting, where only the ground truth $\vec \theta$ is unknown.
In other words, the bound applies to ``first-best mechanisms'' that know all of the forecasters' true beliefs without needing to ask and are not constrained by truthfulness.

\begin{theorem} \label{thm:main-lower-bound}
  There exists $C > 0$ such that, for any mechanism $M$ and for all small enough $\epsilon$, the nonstrategic event complexity satisfies
   \[ m^*(n, \epsilon, \tfrac{1}{8}) \geq \frac{C~\log(n)}{\epsilon^2} . \]
\end{theorem}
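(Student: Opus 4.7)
The plan is to prove the lower bound via a classical Fano-style hypothesis-testing argument, essentially an embedded reduction from the $\Omega(\log n/\epsilon^2)$ sample complexity lower bound for agnostic PAC learning of a finite hypothesis class. I will exhibit $n$ distinct ``states of nature'' (distinct $\vec{\theta}$ vectors sharing a common belief matrix $P$) such that each state has a unique $\epsilon$-optimal forecaster while the induced distributions on the observations $\vec y$ are close in KL divergence, so that reliable identification requires $m=\Omega(\log n/\epsilon^2)$ observations.

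For the construction, fix $\rho = 3\epsilon$ and invoke Gilbert--Varshamov to obtain sign vectors $\vec\tau^{(1)},\ldots,\vec\tau^{(n)}\in\{-1,+1\}^m$ with pairwise relative Hamming distance at least $1/4$; such codes exist whenever $m\geq c_0\log n$ for a universal constant $c_0$, which is compatible with the target regime provided $\epsilon$ is small. Let state $k$ correspond to $\theta^{(k)}_t = \tfrac12 + \rho\,\tau^{(k)}_t$, and let $p_{it} = \tfrac12(1+\tau^{(i)}_t)\in\{0,1\}$, so $P$ is common to all states. Expanding $(p_{it}-\theta^{(k)}_t)^2 = \tfrac14+\rho^2-\rho\,\tau^{(i)}_t\tau^{(k)}_t$ gives $a_k - a_i = 2\rho\,\Delta_{ik}$, where $\Delta_{ik}$ is the relative Hamming distance between $\vec\tau^{(i)}$ and $\vec\tau^{(k)}$. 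For $i\neq k$ this yields $a_k - a_i \geq 2\rho\cdot\tfrac14 = \tfrac{3\epsilon}{2} > \epsilon$, so forecaster $k$ is the unique $\epsilon$-optimal forecaster in state $k$.

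For the information-theoretic step, place a uniform prior $K$ on $\{1,\ldots,n\}$, draw $\vec y$ coordinatewise from $\mathrm{Bern}(\vec\theta^{(K)})$, and observe that any $(\epsilon,1/8)$-accurate mechanism must recover $K$ from $\vec y$ with probability at least $7/8$. Fano's inequality yields $\Pr[\hat K\neq K]\geq 1-(I(K;\vec y)+\log 2)/\log n$, and by convexity $I(K;\vec y)\leq\max_{k,l}D_{\mathrm{KL}}(P_{\vec y\mid k}\,\|\,P_{\vec y\mid l}) \leq C_1\rho^2 m$, using the standard bound $D_{\mathrm{KL}}(\mathrm{Bern}(p)\|\mathrm{Bern}(q))\leq(p-q)^2/(q(1-q))$ per coordinate (each differing coordinate contributes $O(\rho^2)$, over at most $m$ coordinates). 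Hence if $C_1\rho^2 m < \tfrac{7}{8}\log n - \log 2$, the error probability exceeds $1/8$, contradicting accuracy; rearranging gives $m = \Omega(\log n/\epsilon^2)$. I expect the main obstacle to be routine bookkeeping: verifying that the code-existence constraint ($m\gtrsim\log n$) is compatible with the Fano-forbidden regime ($m\lesssim\log n/\epsilon^2$), which holds precisely when $\epsilon\leq\epsilon_0$ for a small absolute constant, matching the theorem's ``for all small enough $\epsilon$'' clause; handling the mechanism's internal randomization under Fano is standard.
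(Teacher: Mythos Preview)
Your proposal is correct but takes a different route from the paper. The paper reduces to the agnostic PAC lower bound: it partitions the events into $d=\lfloor\log_2 n\rfloor$ equal-sized \emph{types}, identifies the $n$ forecasters with the $2^d$ binary functions on those types, and shows (via a ``trimming'' argument that forces the empirical distribution over types to be exactly uniform) that an accurate competition mechanism can be converted into an agnostic PAC learner for the class of all functions $\{1,\dots,d\}\to\{0,1\}$, thereby inheriting the $\Omega(d/\epsilon^2)$ sample lower bound as a black box. Your approach is more direct and self-contained: you build the hard family of ground truths from a Gilbert--Varshamov code in $\{-1,+1\}^m$ and invoke Fano's inequality, with no appeal to an external learning-theory theorem. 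The tradeoff is that the paper's construction uses all $2^d$ binary strings and hence has no code-existence constraint, whereas your GV packing requires $m\geq c_0\log n$. Your description of the remaining bookkeeping is slightly optimistic: observing that the interval $[c_0\log n,\,C\log n/\epsilon^2)$ is nonempty for small $\epsilon$ is not by itself enough, since to conclude $m^*\geq C\log n/\epsilon^2$ you must exhibit a failing instance for \emph{every} $m<c_0\log n$ as well. This is still routine---for such $m$, run the identical argument with only $n'\approx 2^{m/c_0}$ codeword forecasters and $n-n'$ duplicates, which for $\epsilon\leq\epsilon_0$ forces $m$ below a universal constant, after which a two-hypothesis coin-bias instance handles the finitely many remaining values of $m$---but it deserves an explicit line in the write-up rather than being folded into ``compatible regimes''.
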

We next sketch the main idea of the proof.
The full proof appears in Appendix \ref{sec:appendix-lower-bound}.

\paragraph{Warmup: two forecasters.}
Suppose Alice predicts $1$ and Bob predicts $0$ on all rounds.
The ground truth distributions are either all $\frac{1}{2} + \epsilon$ or all $\frac{1}{2} - \epsilon$.
As is well known from bounds on determining the bias of a coin, $\Omega(\frac{1}{\epsilon^2})$ events are required to determine which ground truth is the case.
The accuracy gap is $\Omega(\epsilon)$, giving an order-optimal $\frac{1}{\epsilon^2}$ lower bound.
(One may be tempted to have Alice and Bob predict $\frac{1}{2} \pm \epsilon$, but in such examples, the accuracy gap is generally $O(\epsilon^2)$ instead of $\Omega(\epsilon)$, yielding a suboptimal bound.)

\paragraph{Extension to $n$ forecasters: overview and challenges.}
To extend the approach to $n$ forecasters, we will turn to agnostic PAC learning with $n$ hypotheses and accuracy $\epsilon$, for which there is a suggestive sample complexity bound of $\Omega\left(\frac{\log n}{\epsilon^2}\right)$.
We use a natural approach of reducing from PAC learning: a dataset is like a set of events, where the label in $\{0,1\}$ is like an event outcome; and a hypothesis is like a forecaster who assigns a prediction to each data point (event).
If we have a forecasting competition that is efficient in picking the best forecaster, we should be able to use it to pick the best hypothesis and PAC-learn from a small amount of data.

The main obstacle to carrying through this reduction is that in PAC learning, accuracy of a hypothesis class is defined \emph{a priori} on the distribution over features $x$ and labels $y$.
In forecasting, the accuracy depends on which events are being predicted.
This is analogous to realizing all of the $x$ values first, then redefining the accuracy levels of all the hypotheses.
A forecasting competition may be able to identify the best forecaster conditioned on the realized events quite easily, although the original PAC problem is hard.\footnote{Suppose events are of two types, $A$ and $B$. Alice is always perfectly correct on events of type $A$ and perfectly wrong on type $B$, and vice versa for Bob. One distribution on event types is $(0.5 + \epsilon, 0.5 - \epsilon)$, and the other is the reverse. Now PAC learning is as difficult as distinguishing the bias of a coin. But if we are only given one randomly-drawn event, then selecting the best forecaster is easy: the accuracy gap is $1$.}

Fortunately for us, the main agnostic PAC lower bound distribution is uniform on $\X$, the space of possible data points.
So our PAC learner draws $2m$ samples from this hard uniform-marginal distribution, then trims the empirical distribution down so that it is of size $m$ and is perfectly uniform.
This implies that the forecasters' \emph{ex interim} accuracy, i.e. after defining the $m$ events but before the labels/event outcomes are realized from their conditional distributions, is equal to the \emph{a priori}.
So selection of a good forecaster is equivalent to selection of a good hypothesis -- which requires $m \geq \Omega\left(\frac{\log(n)}{\epsilon^2}\right)$.

Unfortunately, there are not enough samples $m$ relative to $|\X|$ for the argument just described to actually work.
We adapt the argument, showing that one can throw away the members of $\X$ that do not receive enough samples, and create an empirical distribution of events that is uniform on the remainder, while keeping an $O(\epsilon)$ difference between forecaster accuracy and hypothesis accuracy.
This modification allows us to complete the reduction.

\section{No-Regret Learning from Strategic Forecasters}
\label{sec:no-regret}

A growing literature in machine learning seeks to design learning algorithms with good performance guarantees even when the data points are chosen strategically by other agents.
Here, we consider an online learning setting where, each round, strategic experts report forecasts and the mechanism selects one of them as its own prediction.
The experts wish to be selected as many times as possible and may strategically misreport.

\citet{freeman2020no-regret}, building on \citet{roughgarden2017online}, give no-regret algorithms for the case where forecasters are \emph{myopic}.
In each round, myopic forecasters make whatever report they believe maximizes their chance of being selected in the next round.
The authors propose a learning algorithm based on ELF, which is truthful for such agents, but it is not known if it achieves vanishing regret.

But in general, strategic forecasters may misreport on certain rounds in order to affect their chance of being selected much later.
So the question remains: does there exist an incentive-compatible learning algorithm for general, non-myopic strategic forecasters?
We show that, in fact, FTRL is already such an algorithm.
The result holds for quadratic-score incentives, although we believe it may be extended.
Our proof relies on approximate truthfulness, together with the standard no-regret guarantees of FTRL.
The guarantee obtained is that, as long as forecasters play undominated strategies in the induced extensive-form strategic game, the algorithm's forecasts are competitive with the most accurate \emph{beliefs} of any expert.
We formalize the setting next.

\subsection{Regret and incentives}

We would like to find an online learning algorithm which achieves low regret with respect to the true beliefs of experts.
This notion is captured as follows.
\begin{definition}
  Given a forecasting competition mechanism $M$ and reports $R$, let $\pi^t = M(R_{1..t-1},\vec y_{1..t-1}) \in \Delta_n$ be the probability distribution over forecasters output by the mechanism after the first $t$ rounds.
  The regret of $M$ with respect to beliefs $P \in [0,1]^{n\times T}$ is
  \begin{equation}
    \Reg(M) =
    \argmax_{i\in[n]} \sum_{t=1}^T S(p_{it},y_t) - \sum_{t=1}^T \E_{i\sim \pi^t} S(r_{it},y_t)~.
  \end{equation}
\end{definition}
\noindent
(For this section, we let $T=m$ be the number of rounds / events, to match typical notation in online learning.)

There are several possible incentive structures one could consider for the experts.
\citet{freeman2020no-regret} and \citet{roughgarden2017online} consider the case where experts wish to maximize the internal weight given to them by the algorithm.
\citeauthor{freeman2020no-regret} consider normalized weights, where this weight can be interpreted as the probability an expert is ``chosen'' on any given round.
Notably, they only give a truthful no-regret algorithm for the \emph{myopic} case, where on round $t$ experts only care about their weight on round $t+1$.
We consider a general form which can capture such non-rational (i.e. time-inconsistent) preferences as well as long-term and rationality-compatible incentives.

A forecaster $i$'s utility is specified, for each round $t$, by a set of nonnegative constants $\{\uco_{it}^s\}_{s=t+1}^{T+1}$, not all zero.\footnote{We assume that at each round the forecaster cares at least slightly about being selected at \emph{some} future point. For this to hold on round $T$, we suppose the mechanism also makes a selection at round $T+1$, although this does not impact the regret.}
Her utility function at round $t$, for fixed reports $R$, is then
\begin{equation}
  \label{eq:online-learning-incentives}
  u_{it}(\pi^{t+1},\ldots,\pi^T) = \sum_{s=t+1}^{T+1} \uco_{it}^s \pi^s_i
\end{equation}
where $\pi^t$, the mechanism's distribution over experts at time $t$, is a function of the reports $R_{1..t-1}$ and outcomes $\vec{y}_{1..t-1}$ on rounds $1,\dots,t-1$.
Her perceived expected utility $U_{it}$ at time $t$ is defined to be the expectation of $u_{it}$ over her internal beliefs $p_i$ given reports and outcomes on rounds $1,\dots,t-1$.
We note that beliefs $p_i$ are immutable, i.e. do not change over time or update based on others' beliefs and actions.

Let $(R_{-i})_{t..T}$ denote the fixed reports of all forecasters other than $i$ on rounds $t,\dots,T$ (we have suppressed dependence on previous reports and outcomes).
A forecaster $i$'s strategy at time $t$ is a plan of reports $\strat_{it} = (r_{it}^s)_{s=t}^{T}$, chosen with a goal of maximizing $U_{it}(\strat_{it}, (R_{-i})_{t..T})$.
We say $\strat_{it}$ is \emph{strictly dominated} if there exists $\strat_{it}'$ with $U_{it}(\strat_{it}', (R_{-i})_{t..T}) > U_{it}(\strat_{it}, (R_{-i})_{t..T})$ for all fixed $(R_{-i})_{t..T}$.
This is a natural extension of the definition of incentive compatibility for forward-looking experts by \citet{freeman2020no-regret}.
We let $\strat_i = (\strat_{it})_{t=1}^T$ be a strategy of $i$ for the entire game, inducing realized reports $r_{it} = r_{it}^t$ for all $t$.
We say $\strat_i$ is undominated if all of its components $\strat_{it}$ are.
A mechanism is $\gamma$-\emph{approximately truthful} if for all undominated $\strat_i$, the induced reports $|r_{it} - p_{it}| \leq \gamma$ for all $t$.

This model can include preferences that are inconsistent across time, including the myopic preferences studied by \citet{freeman2020no-regret} given by $\uco_{it}^s = 1$ if $s=t+1$ and $0$ otherwise.
One could also consider discounted rewards, taking $\uco_{it}^s = \beta^{s-t}$ for $\beta \in (0,1)$.
In these inconsistent cases, a forecaster's plan at time $t$ for the round $s>t$, $r_{it}^s$, may not match her decisions at time $s$, $r_{is}^s$.

To model consistent ``rational'' preferences, one would require $\uco_{it}^s = \uco_{i1}^s$ for all $t \leq s$.
In this case, we have a game on $n$ players where $i$'s utility function is $\sum_{s=1}^{T+1} \uco_{i1}^s \pi_i^s$.
For instance, if $\uco_{it}^s = 1$ for all $s,t$, then forecasters wish to maximize the expected number of rounds they are chosen.
In the consistent case, optimizing utility in round one is compatible with optimizing utility on each other round, so one can take simply take $i$'s strategy to be the set of reports $r_i = (r_{it})_{t=1}^T$, as her plan for each round can be assumed to be consistent.
In this case, undominated strategies correspond to the usual definition for a simultaneous-move game where players commit to the reports $R$.
This simultaneous-move formalization of strategies, where $R_{-i}$ is fixed and $i$ responds, is also the approach of~\citet{freeman2020no-regret} to non-myopic incentive compatibility in online learning (see their Definition D.1).
We discuss extensive-form solution concepts in \S~\ref{sec:discussion}.

\subsection{Achieving no-regret}
FTRL is well-known to achieve no-regret with respect to the reports of the experts.
We would instead like a guarantee with respect to the \emph{beliefs} of the experts.
We give such a guarantee now, which holds for any strategies of the experts that are not strictly dominated.
Just as with our event complexity bound for forecasting competitions, the proof combines the non-strategic no-regret guarantee of FTRL with the approximate truthfulness of the algorithm in each time step.
We present the proof of approximate truthfulness in Appendix \ref{sec:appendix-online-approx-truth}. We omit it here as it simply applies the approach developed in Section \ref{subsec:ftrl-approx-truth}.

\begin{lemma} \label{lemma:online-approx-truth}
  Let the regularizer $\R$ satisfy Condition~\ref{cond:regularizer} for $\alpha,\beta>0$.
  Then $M_{\R,\eta}$ is an $(\beta+1)\eta$-approximately truthful online learning algorithm for any $\eta < \min(\tfrac{\alpha}{2}, \tfrac{1}{\beta})$.
\end{lemma}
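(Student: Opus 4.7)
The plan is to reduce the online claim to the batch argument already carried out in Section~\ref{subsec:ftrl-approx-truth}. Recall that what we must show is: whenever every $\strat_{it}$ in $\strat_i = (\strat_{it})_{t=1}^T$ is undominated in the per-round game defined by (\ref{eq:online-learning-incentives}), the realized report $r_{it} := r_{it}^t$ satisfies $|r_{it} - p_{it}| \leq (\beta+1)\eta$. Because only the ``$s=t$'' component of $\strat_{it}$ is actually played, it suffices to argue coordinate-by-coordinate: fix $t$ and suppose for contradiction that some undominated $\strat_{it} = (r_{it}^s)_{s=t}^T$ has $|r_{it}^t - p_{it}| > \gamma$, with $\gamma := (\beta+1)\eta$. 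WLOG take $r_{it}^t > p_{it} + \gamma$, and consider the modified strategy $\strat_{it}'$ obtained by replacing $r_{it}^t$ with $p_{it} + \gamma$ and leaving the planned future reports $r_{it}^s$, $s>t$, untouched.

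The next step is to write out the utility
\[
  U_{it}(\strat_{it}, (R_{-i})_{t..T}) \;=\; \sum_{s'=t+1}^{T+1} \uco_{it}^{s'} \, \E_{\vec y_{t..T}\sim p_i}\bigl[ \pi_i^{s'} \bigr],
\]
where $\pi_i^{s'} = \partial_i C\bigl(\eta \, q^{s'-1}\bigr)$ with $q^{s'-1}_j = \sum_{\tau=1}^{s'-1} S(r_{j\tau}, y_\tau)$, so that $r_{it}^t$ influences exactly the terms $s' \geq t+1$. For each such $s'$, view the quantity $\E_{\vec y_{t..s'-1}\sim p_{i,t..s'-1}} \pi_i^{s'}$ as the batch-FTRL winning probability on rounds $1,\dots,s'-1$, in which $r_{it}^t$ plays the role of forecaster $i$'s report on round $t$ and all other reports are fixed. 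Conditioning on the outcomes $y_{t+1},\dots,y_{s'-1}$ (and the history prior to round $t$) and marginalizing over $y_t\sim p_{it}$ first, Lemma~\ref{lem:FTRL-concave} gives strict concavity in $r_{it}^t$, and Lemma~\ref{lem:FTRL-approx} places the maximizer within $[p_{it}-\gamma,\, p_{it}+\gamma]$. Hence the derivative in $r_{it}^t$ is strictly negative on $(p_{it}+\gamma, 1]$, so moving $r_{it}^t$ from its current value down to $p_{it}+\gamma$ strictly increases this conditional expectation. Taking a further expectation over $y_{t+1},\dots,y_{s'-1}$ under $p_i$ preserves the strict inequality, and since at least one $\uco_{it}^{s'}$ with $s'\geq t+1$ is positive by assumption, summing the nonnegatively weighted strict increases yields $U_{it}(\strat_{it}', (R_{-i})_{t..T}) > U_{it}(\strat_{it}, (R_{-i})_{t..T})$ for every choice of $(R_{-i})_{t..T}$. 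This contradicts $\strat_{it}$ being undominated.

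The only step requiring genuine care is the strict-inequality chain under marginalization: one must justify that Lemma~\ref{lem:FTRL-approx} applies round-by-round in the online setting, even though the mechanism at round $s'$ depends on all prior reports and on random intermediate outcomes. This is handled by conditioning first on $y_{t+1},\dots,y_{s'-1}$ (which just freezes the rest of the batch) and integrating over $y_t$ last, so the lemma applies verbatim. Existence of an undominated strategy---not strictly needed for the stated lemma but natural to check---follows from the same continuity/compactness argument as in the proof of Theorem~\ref{thm:approx-truthful}: $U_{it}$ is continuous in the finite-dimensional compact strategy space $[0,1]^{T-t+1}$, so a best response (hence undominated strategy) exists for any fixed $(R_{-i})_{t..T}$.
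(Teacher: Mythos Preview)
Your proposal is correct and follows essentially the same approach as the paper's proof: decompose $U_{it}$ as a nonnegative linear combination over future rounds $s'$, condition on all outcomes except $y_t$ to reduce each summand to the batch leave-one-out situation, and then invoke Lemmas~\ref{lem:FTRL-concave} and~\ref{lem:FTRL-approx} exactly as in Theorem~\ref{thm:approx-truthful}. You are somewhat more explicit than the paper about the conditioning order and about why the strict inequality survives marginalization and the weighted sum, which is a helpful clarification.
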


We can now prove our main result for online learning from strategic experts, namely that FTRL still achieves no-regret.
Define $D_\R = \max_{\pi,\pi'\in\Delta_n} (\R(\pi)-\R(\pi'))$.
 
\begin{theorem}\label{thm:no-regret}
  Let the regularizer $\R$ be strongly convex\footnote{A differentiable convex function $f$ is strongly convex in norm $\|\cdot\|$ if for all $x,y\in\mathrm{dom} f$ we have $f(y)-f(x) \geq \nabla f(x)\cdot(y-x) + \tfrac 1 2 \|x-y\|$.} in the $L_1$ norm.
  Let $\R$ satisfy Condition~\ref{cond:regularizer} for $\alpha,\beta>0$ and let $T\geq \max(\tfrac 1 {\alpha^2}, \tfrac \beta 2)D_\R$.
  With choice of $\eta = \sqrt{D_\R / 2(\beta+2)T}$, we have for any beliefs $P\in[0,1]^{n\times T}$ and utilities $\{c_{it}^s\}$, for all strategy profiles consisting of undominated strategies, $\Reg(M_{\R,\eta}) \leq 2\sqrt{2(\beta+2)D_\R T}$.
\end{theorem}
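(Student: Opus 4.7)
The plan is to decompose the regret against beliefs into two pieces—a standard FTRL regret against the \emph{realized reports}, and a ``truthfulness gap'' accounting for the difference between reports and beliefs—then bound each piece using, respectively, the classical FTRL analysis and our approximate-truthfulness guarantee (Lemma~\ref{lemma:online-approx-truth}). Let $i^* \in \argmax_i \sum_{t=1}^T S(p_{it},y_t)$ be the comparator. Adding and subtracting $\sum_t S(r_{i^*t},y_t)$ yields
\[
\Reg(M_{\R,\eta}) \;=\; \underbrace{\sum_{t=1}^T \bigl(S(p_{i^*t},y_t)-S(r_{i^*t},y_t)\bigr)}_{\text{truthfulness gap}} \;+\; \underbrace{\sum_{t=1}^T S(r_{i^*t},y_t) - \sum_{t=1}^T \E_{i\sim\pi^t} S(r_{it},y_t)}_{\text{FTRL regret vs.\ reports}}~.
\]

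For the truthfulness gap, since the strategy profile is undominated and $\eta < \min(\alpha/2,1/\beta)$ (verified below), Lemma~\ref{lemma:online-approx-truth} guarantees that every realized report satisfies $|r_{it}-p_{it}| \leq (\beta+1)\eta$, and in particular this holds for $i^*$ on every round. Because $S(\cdot,y)$ is $2$-Lipschitz in its first argument for each $y\in\{0,1\}$ (as in Lemma~\ref{lem:appx-truth-acc}), the gap is bounded termwise by $2(\beta+1)\eta$, hence in total by $2(\beta+1)\eta T$. Note that $i^*$ depends on $\vec y$, but the pointwise bound $|r_{it}-p_{it}|\leq (\beta+1)\eta$ holds uniformly over $i,t$, so the identity of the comparator is irrelevant.

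For the second term, observe that $M_{\R,\eta}$ is precisely FTRL applied to the linear rewards $\ell_t(\pi) = \sum_i \pi_i S(r_{it},y_t)$ with regularizer $\R$. Since the reward vectors satisfy $\|S(r_{\cdot t},y_t)\|_\infty \leq 1$ and $\R$ is assumed $1$-strongly convex in $\|\cdot\|_1$, the textbook FTRL regret bound (see, e.g., \cite{shalev2011online,hazan2019introduction}) gives, against any fixed comparator in $\Delta_n$, and in particular the point mass on $i^*$, a regret of at most $D_\R/\eta + \eta T$. This step is oblivious to how the reports were generated, so strategic behaviour imposes no extra cost here.

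Combining the two bounds gives
\[
\Reg(M_{\R,\eta}) \;\leq\; \frac{D_\R}{\eta} + \eta T + 2(\beta+1)\eta T \;\leq\; \frac{D_\R}{\eta} + 2(\beta+2)\,\eta T~,
\]
and substituting $\eta = \sqrt{D_\R/\bigl(2(\beta+2)T\bigr)}$ balances the two terms to yield the claimed bound $2\sqrt{2(\beta+2)D_\R T}$. A direct check shows that $T \geq D_\R/\alpha^2$ implies $\eta < \alpha/2$ and $T\geq \beta D_\R/2$ implies $\eta < 1/\beta$, so the hypothesis $T\geq \max(1/\alpha^2,\beta/2)D_\R$ is exactly what is needed to apply Lemma~\ref{lemma:online-approx-truth}. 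The only real delicacy in the argument is Lemma~\ref{lemma:online-approx-truth} itself—lifting Theorem~\ref{thm:approx-truthful} from the static setting to an extensive-form game in which $i$ plans future reports under arbitrary (possibly time-inconsistent) utilities $\{c_{it}^s\}$—but given that lemma, the regret bound follows by the clean decomposition above.
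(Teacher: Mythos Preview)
Your proof is correct and follows essentially the same route as the paper: decompose the regret into a standard FTRL term against the realized reports plus a truthfulness gap, bound the latter via Lemma~\ref{lemma:online-approx-truth} and the $2$-Lipschitzness of $S$, and optimize $\eta$. The only cosmetic differences are that the paper keeps the $\max_i$ inside and cites the FTRL bound with constant $2\eta T$ rather than $\eta T$, but both arrive at the identical final expression $D_\R/\eta + 2(\beta+2)\eta T$ and the same verification that the hypothesis on $T$ ensures $\eta < \min(\alpha/2,1/\beta)$.
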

We note that the result extends to regularizers that are strongly convex in other norms, with the usual additional factors in the regret~\cite{shalev2011online}.
\begin{proof}
  Standard regret guarantees for FTRL, such as \citet[Theorem 5.2]{hazan2019introduction} or \citet[Theorem 2.11]{shalev2011online}, give the following regret guarantee where the benchmark is the reports (not the beliefs):
  \begin{equation}
    \argmax_{i\in[n]} \sum_{t=1}^T S(r_{it},y_t) - \sum_{t=1}^T \E_{i\sim \pi_t} S(r_{it},y_t)
    \leq
    2 \eta T + \frac 1 \eta D_\R~.
  \end{equation}
  By Lemma~\ref{lemma:online-approx-truth}, for $\eta < \min(\tfrac{\alpha}{2},\tfrac{1}{\beta})$ we have $|p_{it}-r_{it}| \leq (\beta+1)\eta$ for all $i,t$.
  As the quadratic score is $2$-Lipschitz (see Lemma \ref{lem:appx-truth-acc}), $|S(r_{it},y_t) - S(p_{it},y_t)| \leq 2(\beta+1)\eta$.
  Therefore,
  \begin{align*}
    &\argmax_{i\in[n]} \sum_{t=1}^T S(p_{it},y_t) - \sum_{t=1}^T \E_{i\sim \pi_t} S(r_{it},y_t)
    \\
    &\leq \argmax_{i\in[n]} \sum_{t=1}^T \left(S(r_{it},y_t) + 2(\beta+1)\eta\right) - \sum_{t=1}^T \E_{i\sim \pi_t} S(r_{it},y_t)
    \numberthis\label{eq:no-regret-approx}
    \\
    &= \argmax_{i\in[n]} \sum_{t=1}^T S(r_{it},y_t) - \sum_{t=1}^T \E_{i\sim \pi_t} S(r_{it},y_t) + 2(\beta+1)\eta T
    \\
    &\leq
    2(\beta + 1 + 1)\eta T + \frac 1 \eta D_\R~.
  \end{align*}
  Taking $\eta = \sqrt{D_\R / 2(\beta+2)T}$ gives the result, as long as we have $\eta < \min(\tfrac{\alpha}{2},\tfrac{1}{\beta})$, as ensured by our bound on $T$.
  (To check, $1/\eta^2 \geq 2(\beta+2)\max(\tfrac 1 {\alpha^2}, \tfrac \beta 2) \geq \max(\tfrac 4 {\alpha^2}, \beta^2) = 1/\min(\tfrac {\alpha}{2}, \tfrac {1}{\beta})^2$.)
\end{proof}

As a brief aside, if one wishes to bound an alternative notion of regret where the algorithm is judged by the beliefs of its chosen expert, rather than their reports, then one simply picks up another additive $2(\beta+1)\eta T$ in eq.~\eqref{eq:no-regret-approx}, less than doubling the final regret bound.

Turning finally to Multiplicative Weights, we have that $\R$ is $1$-Lipschitz in $L_1$ norm.
Meanwhile, $D_\R = \log n$, and $\alpha=1/2$ and $\beta=3$ from Lemma~\ref{lem:mult-weights}.
From Theorem~\ref{thm:no-regret}, setting $\eta = \sqrt{\log n/10 T}$ gives the following.
\begin{corollary}
  For $T \geq 8$, for an appropriate choice of $\eta$, we have $\Reg(M^*_\eta) \leq 2\sqrt{10 T \log n}$.
\end{corollary}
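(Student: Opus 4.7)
The plan is to instantiate Theorem~\ref{thm:no-regret} with the negative-entropy regularizer $\R(\pi) = \sum_i \pi_i \log \pi_i$, since this is precisely the regularizer defining Multiplicative Weights $M^*_\eta$. All three structural hypotheses of the theorem must be verified for this specific $\R$, after which the claimed regret bound is a direct algebraic simplification.

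First, I would verify that $\R$ is $1$-strongly convex in the $L_1$ norm on $\Delta_n$. This is essentially Pinsker's inequality: for any $\pi, \pi' \in \Delta_n$, a short calculation shows $\R(\pi) - \R(\pi') - \nabla \R(\pi') \cdot (\pi - \pi') = \mathrm{KL}(\pi \| \pi')$, and Pinsker's inequality gives $\mathrm{KL}(\pi \| \pi') \geq \tfrac{1}{2} \|\pi - \pi'\|_1^2$, which is exactly the strong convexity inequality from the footnote of Theorem~\ref{thm:no-regret}. Second, I would compute $D_\R = \log n$ by observing that negative entropy attains its minimum $-\log n$ at the uniform distribution and its supremum $0$ at the vertices of $\Delta_n$. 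Third, I would appeal directly to Lemma~\ref{lem:mult-weights} for the constants $\alpha = 2$ and $\beta = 3$ in Condition~\ref{cond:regularizer}.

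Next I would verify the $T$-regime hypothesis of Theorem~\ref{thm:no-regret}, namely $T \geq \max(1/\alpha^2, \beta/2) D_\R$. Substituting $\alpha = 2$, $\beta = 3$, $D_\R = \log n$ gives $T \geq \tfrac{3}{2} \log n$, which holds under $T \geq 8$ over the intended range of $n$. With these parameters, the prescribed learning rate from the theorem simplifies to $\eta = \sqrt{D_\R / 2(\beta+2) T} = \sqrt{\log n / 10 T}$, and the regret bound $2\sqrt{2(\beta+2) D_\R T}$ becomes $2\sqrt{10 T \log n}$, exactly as claimed.

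The only step that is more than direct substitution is the $L_1$-strong convexity of negative entropy, which boils down to invoking Pinsker's inequality; once that is in hand, the corollary is obtained by reading off constants from Lemma~\ref{lem:mult-weights} and plugging into Theorem~\ref{thm:no-regret}.
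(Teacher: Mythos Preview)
Your proposal is correct and follows the paper's approach exactly: apply Theorem~\ref{thm:no-regret} with the negative-entropy regularizer, pull $\alpha=2$, $\beta=3$ from Lemma~\ref{lem:mult-weights}, use $D_\R=\log n$, and simplify. You are in fact more careful than the paper's own lead-in text, which contains two slips (it writes ``$1$-Lipschitz'' where strong convexity is required---your Pinsker argument is the right fix---and it misquotes $\alpha=1/2$); your hedge about ``the intended range of $n$'' also correctly flags that $T\geq 8$ does not by itself force $T\geq\tfrac{3}{2}\log n$ for arbitrary $n$, an imprecision in the corollary's statement rather than in your reasoning.
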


\section{Discussion}
\label{sec:discussion}

We conclude with a few observations and open problems.

\paragraph{Follow the Perturbed Leader}
A natural alternative to an explicit regularizer in FTRL is to instead add noise to the total scores and then choose the maximum, an approach called Follow the Perturbed Leader (FTPL).
When the noise follows the Laplace distribution, this approach corresponds to the Report Noisy Max mechanism from differential privacy~\citep{dwork2014algorithmic}, which is well-known to provide approximate truthfulness in the weaker sense that forecasters will not gain much by deviating.
In Appendix \ref{sec:appendix-lower-bound}, we show that Report Noisy Max also satisfies our stronger notion of approximate truthfulness in undominated strategies.
The result is another mechanism that, like Multiplicative Weights, achieves optimal event complexity.
On the one hand, this result is unsurprising given the known equivalence of FTPL to FTRL for some choice of regularizer~\cite{kalai2005efficient, abernethy2017online}.
On the other, it suggests the robustness of our findings, and provides some intuition for why approximate truthfulness holds.

\paragraph{Other regularizers}
While we carefully study negative entropy as the regularizer, another commonly choice is the L2 regularizer $\mathcal{R}(\pi) = \|\pi\|^2/2$. 
However, this choice of $\mathcal{R}$ does not satisfy Condition \ref{cond:regularizer}(ii) since $C=\mathcal{R}^*$ will be flat far from the origin.
We suspect that indeed $\mathcal{R}$ needs to be entropy-like, more specifically a variant of Legendre type \cite{rockafeller1997convex} for spaces with empty interior.

\paragraph{Wasted effort from strategizing}
Theorem \ref{thm:ftrl-accuracy} shows that Multiplicative Weights is always $4\eta$-approximately truthful, and is additionally $\epsilon$-optimal with $O(\log(n)/\eta\epsilon)$ events when one chooses $\eta \leq \epsilon/40$.
While the event complexity is optimized by taking $\eta=\epsilon/40$, one important reason to choose $\eta$ even smaller is to control the cost of strategizing by experts.
As our mechanism is not exactly truthful, experts may waste effort modeling their competitors and computing best responses, instead of spending that effort on improving their predictions~\cite{kaggle2017march}.
Choosing $\eta$ even smaller would decrease the benefit of strategizing, at the cost of increasing the event complexity of the mechanism. 
An interesting future direction is to study this tradeoff both theoretically and empirically, in particular to give guidance as to what setting of $\eta$ would eliminate strategizing entirely in practice.

\paragraph{Exact truthfulness}
Our analysis of ELF shows that, though it is exactly truthful, its event complexity is limited to $\Theta(n \log n)$ events and any similar point-per-round mechanisms cannot do better while remaining truthful.
(As an aside, one future direction is to strengthen this bound to $\Theta(n \log (n) / \epsilon^2)$.)
Meanwhile, we gave an approximately truthful mechanism which achieves the optimal event complexity of $O(\log (n) / \epsilon^2)$.
An interesting open question remains as to what happens in the gap between these two bounds. 
Are there exactly truthful mechanisms with optimal event complexity?
One approach could be to further control the curvature of $U_i$ to show concavity jointly in all reports, i.e., with respect to the vector $r_i$.
Then fixed point theorems would give us an equilibrium, and the revelation principle a truthful mechanism (for the solution concept of Nash equilibria).
One challenge is, naively, the bound we achieve from eq.~\eqref{eq:hessian-U} picks up a factor of $m$ because the norm of $\nabla q$ could be order $m$.
Still, a tighter analysis of the curvature may suffice.

\paragraph{Extensive-form strategies in online learning}
Our formulation of the online incentive-compatible learning problem includes a rational strategic game setting as a special case.
That special case can be viewed as a simultaneous-move game rather than a sequential one: each expert decides on a response $r_i = (r_{it})_{t=1}^T$ to a fixed plan of reports $R_{-i}$ of the opponents.
A nice extension would be to show the same approximate-truthfulness guarantee while expanding the strategy set to allow for contingent plans, i.e. reports at time $t$ that depend on the opponents' actions prior to $t$.
We chose to avoid this approach due to the complexity of a model that captures both contingent strategies and possibly-time-inconsistent preferences, e.g. myopic experts.
We conjecture that our approximate truthfulness results for FTRL would extend to this formalization as well, however, thanks to the robustness of the dominated-strategies approach.

\paragraph{Other scoring rules}
Most of our results likely extend to scoring rules other than the quadratic score.
It seems the principal requirements of the scoring rule, aside from being proper~\cite{gneiting2007strictly}, are strong concavity and a bounded derivative.
Less clear is to what extent our results hold when moving beyond binary outcomes, and the correct dependence on the number of possible outcomes in our bounds.

\subsection*{Acknowledgements}
We thank Jens Witkowski
and
Rupert Freeman
for ideas, suggestions, and detailed feedback,
and to Chara Podimata, David Parkes, and Eric Neyman for comments.
This material is based upon work supported by the National Science Foundation under Grant IIS-2045347.

\bibliographystyle{ACM-Reference-Format}
\bibliography{references}

\break
\appendix
\section{ELF Sample Complexity} \label{sec:appendix-elf-bounds}
In this section, we prove the two main results of \S~\ref{sec:elf}, an upper and lower bound on the sample complexity $m^*$ of ELF.

\subsection{Upper Bound} \label{sec:appendix-elf-upper}

We first need some intermediate results.

\begin{lemma} \label{lemma:elf-compare}
  If $a_i > a_j + \epsilon$, then in the ELF mechanism, $\E[F_i] - \E[F_j] > \tfrac{m \epsilon}{n-1}$.
\end{lemma}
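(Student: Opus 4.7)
The plan is a direct computation of the gap $\E[F_i]-\E[F_j]$ under truthful reporting, exploiting the nice cancellation built into the ELF formula.

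First I would unfold the definition and compute $f_{it}-f_{jt}$ pointwise. The key observation is that the ``leave-one-out'' averages differ only in two terms: $\sum_{k\neq i} S(r_{kt},y_t) - \sum_{k\neq j}S(r_{kt},y_t) = S(r_{jt},y_t) - S(r_{it},y_t)$, since swapping $i$ and $j$ out of the index set removes $i$'s score and adds $j$'s (or vice versa). Plugging this back into the ELF formula, the two contributions reinforce rather than cancel:
\[
 f_{it}-f_{jt} = \tfrac{1}{n}\bigl(S(r_{it},y_t)-S(r_{jt},y_t)\bigr) + \tfrac{1}{n(n-1)}\bigl(S(r_{it},y_t)-S(r_{jt},y_t)\bigr) = \tfrac{1}{n-1}\bigl(S(r_{it},y_t)-S(r_{jt},y_t)\bigr).
\]

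Second, since ELF is truthful (Theorem~6 of \citet{witkowski2101incentive}), we may assume $r_{it}=p_{it}$ and $r_{jt}=p_{jt}$, as we are reasoning about the honest forecasters' expected point totals. Taking expectations over $y_t\sim\theta_t$ and summing over $t=1,\dots,m$ yields
\[
 \E[F_i]-\E[F_j] = \tfrac{1}{n-1}\sum_{t=1}^m \E\bigl[S(p_{it},y_t) - S(p_{jt},y_t)\bigr].
\]
Applying Lemma~\ref{lemma:quad-accuracy} to each forecaster, the expected average quadratic scores equal $a_i - C_{\vec\theta}$ and $a_j - C_{\vec\theta}$ respectively, and the constant cancels in the difference. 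Thus $\E[F_i]-\E[F_j] = \tfrac{m}{n-1}(a_i-a_j)$, and using the hypothesis $a_i > a_j+\epsilon$ gives the desired strict inequality $\tfrac{m\epsilon}{n-1}$.

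There is really no obstacle here beyond bookkeeping: the proof is essentially an algebraic identity plus one appeal to properness of the quadratic score via Lemma~\ref{lemma:quad-accuracy}. The only subtlety is making explicit the use of truthfulness of ELF so that Lemma~\ref{lemma:quad-accuracy}, which is stated for truthful reports, applies.
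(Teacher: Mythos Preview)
Your proposal is correct and follows essentially the same approach as the paper: both compute the pointwise difference $f_{it}-f_{jt}$, observe that the leave-one-out averages collapse to $\tfrac{1}{n-1}\bigl(S(r_{it},y_t)-S(r_{jt},y_t)\bigr)$, and then apply Lemma~\ref{lemma:quad-accuracy} under truthful reporting to get $\tfrac{m}{n-1}(a_i-a_j)$. Your write-up is in fact a bit more explicit than the paper's about the cancellation in the sums and about invoking truthfulness before appealing to Lemma~\ref{lemma:quad-accuracy}.
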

\begin{proof}
  Using the definition of $F_i$ and Lemma \ref{lemma:quad-accuracy}, we have
  \begin{align*}
  \E(F_i) - \E(F_j) &= \E\left( \sum_t \frac{1}{n}\left(S(r_{it}, y_t) - \frac{S(r_{jt}, y_t)}{n-1} - S(r_{jt}, y_t) + \frac{S(r_{it}, y_t)}{n-1}\right) \right) \\
  &= \frac{n}{n-1} \E\left( \frac{1}{n}\sum_t\left(S(r_{it}, y_t) - S(r_{jt}, y_t)\right) \right) \\
  &= \frac{1}{n-1} \E\left( m a_i - m a_j \right) \\
  &> \frac{m \epsilon}{n-1}  ~.
  \end{align*}
\end{proof}

Therefore, if forecaster $j$ is not $\epsilon$-optimal, $F_j$ will be at least $\frac{m \epsilon}{n-1}$ worse than the most accurate forecaster's score in expectation.

\begin{theorem}[Bernstein's Inequality \cite{shalev2014understanding}]
  \label{bernstein}
  Given independent random variables $X_i$ for $1 \leq i \leq n$ such that $0 \leq X_i \leq 1$ almost surely, let $Z = \sum_i X_i$. Then,
  \begin{align*}
     \Pr\left[Z - \E[Z] > b \right] <  \exp\left(\frac{-b^2 }{2\left(\Var(Z) + \frac{b}{3}\right)} \right) ~,
     \Pr\left[\E[Z] - Z > b \right] <  \exp\left(\frac{-b^2 }{2\left(\Var(Z) + \frac{b}{3}\right)} \right) ~.
  \end{align*}
\end{theorem}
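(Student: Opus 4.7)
The plan is to give the standard Chernoff-MGF proof of Bernstein's inequality, which is where the variance improvement over Hoeffding comes from. I will focus on the upper-tail bound (the lower-tail bound follows symmetrically by applying the same argument to $-X_i + \E X_i$, which are also bounded random variables lying in a length-one interval).

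First, I would invoke the standard Chernoff trick: for any $\lambda > 0$,
\[
\Pr[Z - \E Z > b] \leq e^{-\lambda b}\,\E\bigl[e^{\lambda(Z-\E Z)}\bigr] = e^{-\lambda b} \prod_{i=1}^n \E\bigl[e^{\lambda(X_i - \E X_i)}\bigr],
\]
where the factorization uses independence of the $X_i$. The whole content of the proof is now to bound each MGF factor in terms of $\Var(X_i)$ while crucially using the boundedness $0 \leq X_i \leq 1$.

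The core lemma I would prove is: for any centered random variable $Y$ with $|Y| \leq 1$ and $\E Y = 0$,
\[
\E\bigl[e^{\lambda Y}\bigr] \leq \exp\!\left(\frac{\lambda^2 \Var(Y)/2}{1 - \lambda/3}\right) \qquad (0 < \lambda < 3).
\]
The derivation expands $e^{\lambda Y} = 1 + \lambda Y + \sum_{k\geq 2} \lambda^k Y^k / k!$, takes expectations (killing the linear term), and then bounds $\E[Y^k] \leq \E[Y^2]$ for $k\geq 2$ using $|Y|\leq 1$. Summing the resulting geometric-like series gives $\E e^{\lambda Y} \leq 1 + \tfrac{\lambda^2 \Var(Y)/2}{1-\lambda/3}$, and then $1+x \leq e^x$ yields the claimed form. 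This is the step where variance enters, and it is the main technical hurdle; the remainder is algebra.

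Applying this to $Y_i = X_i - \E X_i$ (which satisfies $|Y_i| \leq 1$ since $X_i \in [0,1]$) and using $\sum_i \Var(X_i) = \Var(Z)$ by independence, the product of MGFs is bounded by $\exp\!\bigl(\tfrac{\lambda^2 \Var(Z)/2}{1-\lambda/3}\bigr)$. Combining with the Chernoff bound,
\[
\Pr[Z - \E Z > b] \leq \exp\!\left(-\lambda b + \frac{\lambda^2 \Var(Z)/2}{1-\lambda/3}\right).
\]
Finally, I would optimize by choosing $\lambda = b/(\Var(Z) + b/3)$, which (after a short computation) yields the stated bound $\exp\!\bigl(-b^2/(2(\Var(Z)+b/3))\bigr)$. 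The symmetric lower-tail bound is obtained by applying the whole argument to the variables $1 - X_i$, which also lie in $[0,1]$ and have the same variance, so the same exponential inequality holds for $\E Z - Z$. The only real obstacle is the MGF bound in the core lemma; everything else is bookkeeping.
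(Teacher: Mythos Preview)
Your proof is correct and follows the standard Chernoff--MGF derivation of Bernstein's inequality. However, the paper does not actually prove this statement: it is stated with a citation to \cite{shalev2014understanding} and used as a black-box tool in the proof of Theorem~\ref{thm:elf-upper-bound}. So there is no ``paper's own proof'' to compare against; the paper's approach is simply to invoke the textbook result. What you have written is essentially the argument one would find in that reference, so in that sense your approach matches the implicit one.

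One very minor remark: your argument yields $\leq$ rather than the strict $<$ stated in the theorem; this is how Bernstein is usually stated and the distinction is immaterial for the paper's application.
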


Now, we are ready to prove Theorem \ref{thm:elf-upper-bound}.

\begin{proof} Note that for any $t$, the probability of $i$ winning the point, $f_{it}$ (Equation \ref{eqn:wagering}), satisfies $0 \leq f_{it} \leq \frac{2}{n}$, because quadratic scores are in $[0,1]$.
With $F_{it} \sim \text{Bernoulli}(f_{it})$, we have for $n \geq 3$,
\begin{align*}
  \Var(F_{it})
    &= f_{it}(1-f_{it}) \\
    &\leq \frac{2}{n}\left(1- \frac{2}{n}\right) \\
    &\leq \frac{2}{n}.
\end{align*}
By independence, $\Var(F_i) \leq \sum_{t=1}^m \Var(F_{it}) \leq \tfrac{2m}{n}$.

Let $i$ be the best forecaster.
By Lemma \ref{lemma:elf-compare}, for any $j$ with $a_j < a_i - \epsilon$, we have $\E[F_i] - \E[F_j] \geq \tfrac{m \epsilon}{n-1}$.
For the mechanism to fail to select an $\epsilon$-optimal forecaster, at least one of the following must happen:
\begin{itemize}
    \item $F_i < \E[F_i] - \tfrac{m\epsilon}{2(n-1)}$, or
    \item there exists $j \neq i$ with $F_j > \E[F_j] + \tfrac{m\epsilon}{2(n-1)}$.
\end{itemize}
By Bernstein's inequality, the probability of the first event is at most
\begin{align*}
    \exp\left(\frac{-\left(\frac{m\epsilon}{2(n-1)}\right)^2}{2\left(\frac{2m}{n} + \frac{m\epsilon}{6(n-1)}\right)}\right) 
    &\leq \exp\left(\frac{-\left(\frac{m\epsilon}{2(n-1)}\right)^2}{2\left(\frac{12m + m\epsilon}{6(n-1)}\right)}\right)
      & \text{using } \frac{2m}{n} \leq \frac{2m}{n-1} \\
    &\leq \exp\left(\frac{-\left(\frac{m\epsilon}{2(n-1)}\right)^2}{\frac{5m}{n-1}}\right)
      & \text{using } \epsilon \leq 3  \\
    &= \exp\left(\frac{-m \epsilon^2}{20(n-1)}\right)  \\
    &\leq \frac{\delta}{n}
\end{align*}
for $m \geq \frac{20(n-1)\ln(n/\delta)}{\epsilon^2}$.
The same calculation shows that the probability of the second event, for any fixed $j \neq i$, is also bounded by $\tfrac{\delta}{n}$.
A union bound over $i$ and the at-most $n-1$ $\epsilon$-suboptimal forecasters $j$ gives that, except with probability at most $\delta$, none of the suboptimal forecasters is selected.
\end{proof}

\subsection{Lower Bound} \label{sec:appendix-elf-lower}
Now, we show the corresponding lower bound given in Theorem \ref{thm:elf-lower-bound}.

\begin{lemma}
\label{lemma:balls_bins_root}
For any $c \in (0, \frac{1}{8})$, let $x_c$ be the largest root of $f_c(x) = 1 + x(\log c - \log x + 1) - c$.
Then, $x_c > 4c$. 
\end{lemma}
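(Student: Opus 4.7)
The plan is to study the shape of $f_c$ on $(0,\infty)$, show it has a unique root lying to the right of $x=c$, and then verify that $4c$ still lies to the left of this root by checking $f_c(4c)>0$.

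First I would differentiate to get $f_c'(x) = \log c - \log x = \log(c/x)$, which is positive on $(0,c)$ and negative on $(c,\infty)$. Hence $f_c$ is unimodal with maximum at $x=c$, and $f_c(c) = 1 + c(1) - c = 1$. Since $x\log x \to 0$ as $x\to 0^+$, we have $f_c(0^+) = 1-c > 0$, and since $-x\log x\to-\infty$ dominates as $x\to\infty$, we have $f_c(x)\to -\infty$. Combined with unimodality, $f_c$ has exactly one root $x_c$, and this root lies in $(c,\infty)$ where $f_c$ is strictly decreasing. In particular, to show $x_c > 4c$ it suffices (given $4c > c$) to show $f_c(4c) > 0$.

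Next I would compute $f_c(4c)$ directly, exploiting the cancellation $\log(4c)=\log 4+\log c$:
\begin{equation*}
f_c(4c) = 1 + 4c\bigl(\log c - \log(4c) + 1\bigr) - c = 1 + 4c(1-\log 4) - c = 1 + c\bigl(3 - 4\log 4\bigr).
\end{equation*}
Taking logarithms to be natural, $4\log 4 = 8\log 2 < 8$, so $3 - 4\log 4 > -8$, and hence $f_c(4c) > 1 - 8c > 0$ whenever $c < 1/8$. (If the paper intends log base $2$, the computation is even easier: $4\log_2 4 = 8$, giving $f_c(4c) = 1 - 5c > 0$ for $c < 1/8$.) Combined with the monotonicity established above, this yields $x_c > 4c$, as required.

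I do not expect any real obstacle here; the argument is essentially a one-variable convexity/monotonicity check plus the single evaluation at $4c$. The only subtlety is to confirm uniqueness of the root so that monotonicity can be applied on $(c,\infty)$, which is immediate from unimodality once we verify $f_c(c)=1$ and $f_c(0^+)=1-c>0$.
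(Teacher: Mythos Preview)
Your argument is correct. The paper takes a different route: it cites \cite[Lemma 3]{raab1998balls} for existence and $x_c>c$, then argues by contradiction, assuming $x_c\le 4c$, rewriting $f_c(x_c)=0$ as $1+\tfrac{1}{x_c}=\tfrac{c}{x_c}+\log\tfrac{x_c}{c}$, and deriving $\tfrac{c}{x_c}+\log\tfrac{x_c}{c}<1+\log 4<3<1+\tfrac{1}{x_c}$ from the bounds $x_c\le 4c<\tfrac12$ and $x_c>c$. Your approach is more self-contained (you establish existence, uniqueness, and $x_c>c$ from the unimodality of $f_c$ rather than by citation) and more direct (you simply evaluate $f_c(4c)$ and use monotonicity on $(c,\infty)$). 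The paper's contradiction argument avoids computing $f_c(4c)$ explicitly but needs the external lemma; yours trades that citation for a short derivative calculation. One minor note: your bound $3-4\log 4>-8$ is looser than necessary (the actual value is about $-2.55$, so $>-5$ would already give $f_c(4c)>1-5c>0$ for $c<\tfrac15$), but the weaker bound still suffices for $c<\tfrac18$.
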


\begin{proof}
By \citet[Lemma 3]{raab1998balls}, $x_c$ exists and $x_c > c$. Now, assume $x_c \leq 4c$. We can rewrite the condition $f_c(x_c) = 0$ as $1 + \frac{1}{x_c} = \frac{c}{x_c} + \log \frac{x_c}{c}$. 

Since $x_c \leq 4c < \frac{1}{2}$, $\frac{1}{x_c} > 2$, and $\log \frac{x_c}{c} \leq \log 4$. Since $x_c > c$, $\frac{c}{x_c} < 1$. Combining these, we have 
\[\frac{c}{x_c} + \log \frac{x_c}{c} < 1 + \log 4 < 3 < 1 + \frac{1}{x_c} ~.\]
This is a contradiction, so $x_c > 4c$.
\end{proof}

\begin{lemma}
\label{lemma:balls_bins_bound}
  Fix any $c \in (0, \frac{1}{8})$.
  Suppose we throw $m = c n \log n$ balls uniformly and independently at random into $n$ bins.
  Let $M$ be the maximum number of balls in any bin.
  Then, $\Pr[M > 4c \log n + 1] = 1 - o(1)$. 
\end{lemma}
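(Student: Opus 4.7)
The plan is to apply the second moment method to the number of bins that exceed the target threshold. Set $k=\lceil 4c\log n\rceil+2$, so that $k>4c\log n+1$ and $k/\log n\to 4c$. Let $X_i$ denote the load of bin $i$ and $Z=\sum_{i=1}^n \mathbf{1}[X_i\geq k]$, so $\{M>4c\log n+1\}=\{Z\geq 1\}$. By the Paley--Zygmund inequality, $\Pr[Z\geq 1]\geq (\E Z)^2/\E[Z^2]$. The loads $(X_1,\ldots,X_n)$ are multinomially distributed and hence negatively associated (a classical fact for balls-in-bins), giving $\Pr[X_i\geq k,\,X_j\geq k]\leq p^2$ for $i\neq j$, where $p:=\Pr[X_1\geq k]$. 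Thus $\E[Z^2]\leq np+n(n-1)p^2$ and $\Pr[Z\geq 1]\geq np/(1+np)$, so it suffices to prove $np\to\infty$.

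To lower bound $p$ I keep only the single mass $p\geq\Pr[X_1=k]=\binom{m}{k}n^{-k}(1-1/n)^{m-k}$. Combining $\binom{m}{k}\geq m^k/k!\cdot(1-o(1))$, Stirling's bound $k!\leq e\sqrt{k}(k/e)^k$, and $(1-1/n)^{m-k}=n^{-c+o(1)}$, and then taking logarithms, I get
\[
\log(n\cdot p)\;\geq\; \log n + k(\log(m/k)+1) - k\log n - c\log n - O(\log\log n).
\]
Writing $k=x\log n$ and expanding $\log(m/k)=\log c+\log n-\log x$, the two $x(\log n)^2$ contributions (from $k\log(m/k)$ and $k\log n$) cancel exactly, leaving
\[
\log(n\cdot p)\;\geq\; \log n\cdot f_c(k/\log n) - O(\log\log n),
\]
with $f_c(x)=1+x(\log c-\log x+1)-c$, precisely the function from Lemma~\ref{lemma:balls_bins_root}.

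To conclude, it suffices to show $f_c(4c)>0$. Since $f_c'(x)=\log(c/x)$, the function $f_c$ is strictly increasing on $(0,c)$, strictly decreasing on $(c,\infty)$, and attains maximum $f_c(c)=1>0$; it therefore has a unique positive root $x_c$, with $f_c>0$ on $(0,x_c)$. Lemma~\ref{lemma:balls_bins_root} states $x_c>4c$, so $f_c(4c)>0$. By continuity and $k/\log n\to 4c$, we get $\log(np)\to\infty$, hence $np\to\infty$ and $\Pr[Z\geq 1]\to 1$ as required. The main obstacle is the Stirling computation in paragraph two: one must carefully track the $\log\log n$ pieces coming from $\log m$, $\log k$, and $\log k!$ so that they combine into a single $O(\log\log n)$ error, rather than polluting the coefficient of the leading $\log n$ term; in particular the cancellation of the $x(\log n)^2$ terms is what makes the bound $f_c$-shaped at all, so it needs to be verified with some care.
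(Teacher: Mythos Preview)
Your proof is correct but takes a genuinely different route from the paper. The paper's argument is essentially a two-line application of a known balls-in-bins result: Theorem~1 of Raab and Balls guarantees that for any $\alpha\in(0,1)$, $\Pr[M>(x_c+\alpha-1)\log n]=1-o(1)$, and since Lemma~\ref{lemma:balls_bins_root} gives $x_c>4c$, one can pick $\alpha$ close enough to~$1$ that $(x_c+\alpha-1)\log n\geq 4c\log n+1$. You instead establish the bound from first principles via the second moment method, using negative association of the multinomial loads to control $\E[Z^2]$ and a direct Stirling computation to show $np\to\infty$. Your computation in fact recovers why the function $f_c$ enters the Raab--Balls analysis at all: the cancellation of the $x(\log n)^2$ terms you flag is precisely what produces an $f_c$-shaped leading coefficient. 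The paper's approach is shorter because it outsources this work to the cited theorem; yours is self-contained and makes the appearance of Lemma~\ref{lemma:balls_bins_root} intrinsic rather than inherited from an external reference. One cosmetic point: you write $\{M>4c\log n+1\}=\{Z\ge 1\}$, but you only need (and only have) the inclusion $\{Z\ge 1\}\subseteq\{M>4c\log n+1\}$, which is exactly what your choice of $k$ guarantees and is all the argument requires.
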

\begin{proof}
By \citet[Theorem 1]{raab1998balls}, for any $\alpha \in (0, 1)$, we have $\Pr[M > (x_c + \alpha - 1) \log n] = 1 - o(1)$.
By Lemma \ref{lemma:balls_bins_root}, $x_c > 4c$.
For $n > \exp(\frac{1}{x_c - 4c})$, we have $x_c - 4c - \frac{1}{\log n}$, so we can choose $\alpha$ such that $1 - \alpha < x_c - 4c - \frac{1}{\log n}$. Then, we have $x_c + \alpha - 1 \geq 4c + \frac{1}{\log n}$, so $\Pr[M > 4c \log n + 1] = 1 - o(1)$.
\end{proof}

Theorem \ref{thm:elf-lower-bound} then follows.

\begin{proof}
Pick any $\delta < \frac{1}{2}$, $c \in (0, \frac{1}{8})$ and set $m = 2 c n \log n$. Consider the case where every event is identical with $\theta_t = 1$, $r_{1t} = 1$ and $r_{jt} = 0$ for every $j > 1$. Therefore, forecaster 1 always forecasts correctly and has a quadratic score $S(1, 1) = 1$, while everyone else is wrong and has a quadratic score of $S(0, 1) = 0$. Then, $a_1 = 1$, and $a_j = 0$, so for $\epsilon < 1$, forecaster 1 is the only $\epsilon$-optimal choice.

For every event,
\begin{align*}
  f_{1t} &= \frac{1}{n}\left(1 + S(r_{1t}, 1) - \frac{1}{n-1} \sum_{j > 1} S(r_{jt}, 1) \right) \\
      &= \frac{1}{n}\left(1 + 1 - \frac{1}{n-1} \sum_{j > 1} 0 \right)\\
      &= \frac{2}{n} ~.
\end{align*}
Since all the other forecasters are symmetric, the $f_{jt}$ are all the same. Since they sum to $\frac{n-2}{n}$, we have $f_{jt} = \frac{n-2}{n(n-1)} < \frac{1}{n}$. 

We can reframe the way winners are chosen for each event. Instead of holding a lottery, we first flip a coin that gives forecaster 1 a win with probability $\frac{2}{n}$. If it does not give them the win, we run a normal lottery for the remaining forecasters, where they each have probability $\frac{1}{n-1}$ of winning. This gives them an overall probability of $\frac{n-2}{n(n-1)}$ to win each event, so it is the same the original lottery. 

Since all the events are uniform, the expected number of lotteries forecaster 1 will win is $m f_{1t} = \frac{2m}{n} = 4 c \log n$. Specifically, their points will follow the Bernoulli distribution $B(m, \frac{2}{n})$. With some probability $C_1 = \Pr[B(m, \frac{2}{n}) < 4 c \log n + 1] > \frac{1}{2}$ \cite{kaas1980mean}, forecaster 1 will win less events than expected.

The distribution of points for the remaining forecasters will follow a uniform multinomial distribution. Specifically, we can model it by throwing balls into bins.
Conditioned on forecaster 1 winning at most $2c \log n$ points, the remaining $2c (n-2) \log n \geq c n \log n$ points are won by the remaining forecasters. Let $M = \max_{j > 1} F_j$ denote the maximum number of points of any of those forecasters. By Lemma \ref{lemma:balls_bins_bound}, $\Pr[M > 4c \log n + 1] = 1 - o(1) > 2 \delta$ for sufficiently large $n$. 

If forecaster 1 scores less than $4c \log n$ and some other forecaster scores more than $4c \log n$, then forecaster 1 will not be chosen, so $\elf$ will not be $\epsilon$-optimal with probability at least 
\[ \Pr[M > 4c \log n + 1] \Pr[B(m, \frac{2}{n}) < 4 c \log n + 1] > 2 \delta C_1 > \delta\]
Therefore, $m^* > \frac{n}{4} \log n$.
\end{proof}

\subsection{General point-per-round mechanisms}
\label{sec:appendix-general-elf}
In fact, we can generalize this lower bound to "ELF-like"
mechanisms that independently awards a point to a forecaster for each event, and then choose the one with the highest score. We show that if the rule to choose the winner of each event is truthful and \emph{normal}, then it will take the same form as ELF modulo the scoring rule used. 
\begin{definition}
A mechanism that awards a single point per event is \textbf{normal} if any change to player $j$'s reports that increases their chance of winning an event's point does not increase the chance that player $i \neq j$ wins that point. Similarly, any change in report that decreases the chance $j$ wins a point should not decrease any other players chance.
\end{definition}
Essentially, normality means that no player can sabotage some other player's chances by manipulating their own reports.
Although not strictly necessary, it is generally true for good selection mechanisms and has been studied in the context of wagering mechanisms by \citet{lambert2008self}. We also focus on mechanisms that are symmetric for all forecasters.
\begin{definition}
A mechanism is \textbf{anonymous} if swapping the reports of any two forecasters also swaps their win probabilities. 
\end{definition}
Adopting the same notation used for ELF, let $f_{it}$ be the probability that forecaster $i$ receives a point for event $t$.
\begin{definition}
A mechanism is \textbf{ELF-like} if it independently awards 1 point per event to a single forecaster in a normal, anonymous, and truthful way, and selects the forecaster with the most points.
\end{definition}
\begin{definition}
A \textbf{wagering mechanism} \cite{lambert2008self} is a one-shot game where players wager money on predictions, and receive payouts as a function of their performance and wager. Formally, a \textbf{wagering mechanism} for $n$ players is a vector of payout functions $\Pi = (\Pi_i(r, w, y))_{0 \leq i \leq n}$ that are a function of the players' reports $r$, their wagers $w$, and the true outcomes $y$.
\end{definition}
\begin{lemma}
For every event $t$, an ELF-like mechanism satisfies
\[ f_{it} = \frac{1}{n} + g(r_{it}, y_t) - \frac{1}{n-1} \sum_{j \neq i} g(r_{jt}, y_t) \]
  where $g$ is a proper scoring rule whose range is in an interval of length $\tfrac{1}{n}$.
\end{lemma}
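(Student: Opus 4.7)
The plan is to reinterpret the single-event point-awarding step as a budget-balanced wagering mechanism and then invoke the characterization theorem of Lambert et al.\ \cite{lambert2008self}. Specifically, view $f_{it}$ as the payout to player $i$ when every forecaster wagers $1/n$ into a common pool; since exactly one point is handed out per event, $\sum_i f_{it} = 1$, matching the total wagered budget. Anonymity of the ELF-like mechanism becomes the usual symmetry condition on the wagering mechanism, normality translates to the normality property defined by Lambert et al., and per-event truthfulness (that reporting $r_{it}=p_{it}$ maximizes $f_{it}$ in expectation over $y_t \sim p_{it}$) becomes incentive compatibility in the wagering sense.

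Once the reinterpretation is in place, I would apply the Lambert et al.\ characterization for normal, anonymous, truthful, budget-balanced wagering mechanisms with equal wagers. Their result shows that the payout must take the form
\[
 f_{it} = \frac{1}{n}\Bigl(1 + s(r_{it}, y_t) - \frac{1}{n-1}\sum_{j\neq i} s(r_{jt}, y_t)\Bigr)
\]
for some proper scoring rule $s$. Setting $g := s/n$ rearranges this into exactly the claimed expression. Because $s$ is proper and scaling by a positive constant preserves properness, $g$ is also a proper scoring rule.

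To pin down the range, note that fixing all reports except $r_{jt}$ (for some $j \neq i$) leaves $f_{it}$ as an affine function of $g(r_{jt}, y_t)$ with slope $-1/(n-1)$. Since $f_{it} \in [0,1]$ for every choice of the other reports and every outcome $y_t$, the image of $g(\cdot, y_t)$ must lie in an interval of length at most $(n-1)\cdot\tfrac{1}{n-1}\cdot\tfrac{1}{n} = \tfrac{1}{n}$; repeating the argument by varying $r_{it}$ (whose coefficient is $+1$) gives the same bound on $g$'s range.

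The main obstacle I anticipate is matching the preconditions of the Lambert et al.\ characterization exactly: one must verify that anonymity plus normality plus pointwise truthfulness in our sense imply the analogous axioms in their framework (in particular, that the equal-wager assumption is legitimate under anonymity, and that their derivation goes through when the payouts are interpreted as probabilities rather than monetary transfers). Once this translation is justified, the structural form and the range bound follow almost mechanically.
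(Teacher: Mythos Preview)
Your approach is essentially the paper's: reinterpret the per-event point lottery as an anonymous, normal, truthful, budget-balanced wagering mechanism with equal wagers $1/n$, then invoke the characterization of \cite{lambert2008self}. The paper does exactly this and cites \cite[Lemma~4]{lambert2008self} to obtain both the structural form and the range bound on $g$ in one stroke.

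Your separate argument for the range bound, however, does not work as written. Varying a single $r_{jt}$ (with slope $-1/(n-1)$) while keeping $f_{it}\in[0,1]$ only constrains the range of $g$ to be at most $n-1$, and varying $r_{it}$ (slope $+1$) only gives a bound of $1$; the displayed product $(n-1)\cdot\tfrac{1}{n-1}\cdot\tfrac{1}{n}$ has no justification. The correct direct argument is to extremize all coordinates simultaneously: choose $r_{it}$ so that $g(r_{it},y_t)$ attains its infimum $g_{\min}$ and all $r_{jt}$, $j\neq i$, so that $g(r_{jt},y_t)$ attains its supremum $g_{\max}$; then
\[
f_{it}=\tfrac{1}{n}+g_{\min}-\tfrac{1}{n-1}\cdot(n-1)\,g_{\max}=\tfrac{1}{n}-(g_{\max}-g_{\min})\geq 0
\]
forces $g_{\max}-g_{\min}\leq 1/n$. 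Alternatively, and more simply, note that in the Lambert et al.\ form you quote the rule $s$ already has range contained in $[0,1]$, so $g=s/n$ automatically has range in an interval of length $1/n$; your extra argument is unnecessary.
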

\begin{proof}
Note that, by definition, an ELF-like mechanism is a wagering mechanism with fixed, unknown wagers.
Specifically, we have $f_{it} = \Pi_i(r_{it}, w_{it}, y_t)$.
By anonymity, we can swap any pair of reports, and the outputs of the $\Pi_i$ will be permuted swapped accordingly.
Therefore, either the wagers are identical, or the $\Pi_i$ do not depend on them.
In either case, we can let them all identically be any constant $w$.
Note that we can adjust the mechanism to accommodate for any choice of $w$.
Therefore, we choose $w = \frac{1}{n}$. 

The total payout of an ELF-like mechanism on any round $t$ must be 1 because it is just the sum of the probabilities that any player is chosen.
Therefore, since $\sum_i w_{it} = n \frac{1}{n} = 1$, every round of an ELF-like mechanism is budget balanced.
By assumption, each round is also truthful, normal, and anonymous.
By \citet[Lemma 4]{lambert2008self}, we have that
\[ f_{it} = \frac{1}{n} + g(r_{it}, y_t) - \frac{1}{n-1} \sum_{j \neq i} g(r_{jt}, y_t)\]
where $g$ is a proper scoring rule whose range is in an interval of length $\tfrac{1}{n}$.
\end{proof}

\begin{corollary}
Any ELF-like mechanism $M$ has an event complexity of, $m^*(n, \epsilon, \delta)$ is $\Omega(n \log n)$.
\end{corollary}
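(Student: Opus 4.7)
The plan is to replay the argument of Theorem~\ref{thm:elf-lower-bound} using the explicit form of $f_{it}$ supplied by the preceding lemma. That lemma gives $f_{it} = \tfrac{1}{n} + g(r_{it}, y_t) - \tfrac{1}{n-1}\sum_{j\neq i} g(r_{jt}, y_t)$ for some proper scoring rule $g$ whose range lies in an interval of length $1/n$. I will re-use the hard instance from Theorem~\ref{thm:elf-lower-bound}: $\theta_t = 1$ for all $t$, $p_1 = \vec 1$, and $p_j = \vec 0$ for $j > 1$. Truthfulness forces reports to equal beliefs, so forecaster $1$ is the unique $\epsilon$-optimal choice whenever $\epsilon < 1$.

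Set $\Delta := g(1,1) - g(0,1)$. By properness applied to the degenerate belief $p=1$, the maximum of $g(\cdot,1)$ is attained at $r=1$, so $\Delta \geq 0$; the range bound gives $\Delta \leq 1/n$. Plugging in yields $f_{1t} = \tfrac{1}{n} + \Delta \leq \tfrac{2}{n}$ and $f_{jt} = \tfrac{1}{n} - \tfrac{\Delta}{n-1}$ for $j > 1$, structurally identical to the ELF computation (ELF being the case $\Delta = 1/n$). If $\Delta = 0$, then the mechanism becomes a uniform lottery, selecting forecaster $1$ only with probability $1/n$, which gives infinite event complexity for any fixed $\delta < 1 - 1/n$. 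Otherwise $\Delta \in (0, 1/n]$, and the balls-in-bins argument of Theorem~\ref{thm:elf-lower-bound} carries over: for $m = c n \log n$ with $c$ small enough, $F_1$ has mean at most $2m/n$, so the Kaas--Buhrman mean--median inequality gives $\Pr[F_1 \leq \lfloor 2m/n \rfloor] \geq 1/2$. Conditional on $F_1$, the remaining $m - F_1$ points are distributed as a symmetric multinomial on the other $n-1$ bins (all $j > 1$ have identical reports), so Lemma~\ref{lemma:balls_bins_bound} forces $\max_{j > 1} F_j > \lfloor 2m/n \rfloor + 1$ with probability $1 - o(1)$, yielding constant failure probability. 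This gives $m^* = \Omega(n \log n)$.

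The main technical obstacle is making sure the balls-in-bins step transfers uniformly across all $\Delta \in (0, 1/n]$ rather than merely at the ELF endpoint $\Delta = 1/n$. The saving grace is monotonicity: shrinking $\Delta$ only lowers forecaster $1$'s expected score and raises the total mass flowing into the symmetric non-winner bins, so the maximum among them can only grow. The worst case for the lower bound is therefore $\Delta = 1/n$, which recovers the constants from the proof of Theorem~\ref{thm:elf-lower-bound} verbatim.
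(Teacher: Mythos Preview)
Your argument is essentially the paper's: both derive $f_{1t}\le 2/n$ and equal $f_{jt}$ for $j>1$ from the lemma's form, then invoke the balls-in-bins analysis of Theorem~\ref{thm:elf-lower-bound}. Your treatment is in fact more careful than the paper's, which simply asserts that the Theorem~\ref{thm:elf-lower-bound} analysis ``can be applied as well'' without addressing the degenerate $\Delta=0$ case or the monotonicity-in-$\Delta$ coupling that you make explicit.
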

\begin{proof}
Using the form above, since $g$ is a proper scoring rule on an interval of size at most $\frac{1}{n}$, we have $f_{it} \leq \frac{2}{n}$. By normality, this maximum is achieved when $g(r_{it}, w_t)$ is maximal, so $r_{it} = w_t$, and the $g(r_{jt}, w_t)$ are all minimal. By anonymity, all other forecasters $j \neq i$ must have the same score, $f_{jt} \geq \frac{1}{n-1} \left(1 - \frac{2}{n}\right) = \frac{n-2}{n(n-1)}$. This satisfies the conditions used in the proof of Theorem \ref{thm:elf-lower-bound}, so that analysis can applied as well. Therefore, $m^*(n, \epsilon, \delta) > \frac{n}{4} \log n$.
\end{proof}

\section{Event Complexity Lower Bound} \label{sec:appendix-lower-bound}
Here, we formally prove a lower bound on nonstrategic event complexity, Theorem \ref{thm:technical-lower-bound}, which directly implies the stated lower bound, Theorem \ref{thm:main-lower-bound}.

First, we recall some definitions from PAC learning.
Given a feature space $\X$, a hypothesis class $\H$ is a set of hypotheses $h: \X \to \{0,1\}$.
Given a distribution $D$ on $\X \times \{0,1\}$, the \emph{risk} of $h$ is $L_D(h) := \Pr[h(x) \neq y]$, with probability over $(x,y) \sim D$.
The \emph{excess risk} is $EL_D(h) := L_D(h) - \min_{h^* \in \H} L_D(h^*)$.

We write $D_x$ for the marginal probability $\Pr[y=1 \mid x]$.
In particular, we will focus on distributions whose marginal on $\X$ is uniform; we call these $\X$-uniform distributions.

$S_m$ denotes a list of $m$ samples $(x,y)$ drawn independently from $D$.
A learner $A$ is a function taking a list of samples to a hypothesis, i.e. $A(S_m) \in \H$.
A learner is said to \emph{agnostically $(\epsilon,\delta)$-PAC learn $\H$ with $m$ samples} if for all $D$, $\Pr[EL_D(A(S_m)) \leq \epsilon] \geq 1 - \delta$.
That is, with probability $1-\delta$, the excess risk of the algorithm's output, on $m$ samples, is at most $\epsilon$.

We will need a specialization of the following generic lower bound on samples required for agnostic PAC learning.
\begin{theorem}[Agnostic PAC lower bound, Theorem 3.7 of \cite{mohri2018foundations}] \label{thm:agnostic-pac}
  There exist constants $C_0, \delta_0 > 0$ such that the following is true.
  For any $\X,\H$, $\epsilon > 0$, and learner $A$, if $A$ agnostically $(\epsilon,\delta_0)$-PAC learns $\H$ with $m$ samples, then $m \geq \frac{C_0 \cdot d}{\epsilon^2}$, where $d$ is the VC-dimension of $\H$.
\end{theorem}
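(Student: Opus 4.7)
The plan is to reduce agnostic PAC learning to forecaster selection, so that a mechanism $M$ with event complexity $m$ yields a PAC learner using $O(m)$ samples, and then to apply Theorem~\ref{thm:agnostic-pac}. Let $d = \lfloor \log_2 n \rfloor$, take $\X = \{x_1, \dots, x_d\}$, and let $\H$ consist of $n$ distinct Boolean functions on $\X$, so $\H$ has VC dimension $\Omega(\log n)$. The standard hard family for Theorem~\ref{thm:agnostic-pac} has uniform marginal on $\X$ and arbitrary conditionals $\theta_x := \Pr_D[y=1 \mid x]$; this uniform marginal will be used critically.

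Given $M$ with $m = m^*(n, \epsilon, \tfrac{1}{8})$, I would build a PAC learner $A$ that draws $2m$ i.i.d.\ samples from $D$ and then turns them into $m$ forecasting events as follows. Let $n_x$ be the count of $x$ in the sample, set a threshold $\tau$ of order $m/d$, and let $\X' = \{x : n_x \geq \tau\}$; retain exactly $\tau$ samples from each $x \in \X'$ (discarding the rest), with a minor rounding adjustment between $\lfloor m/|\X'|\rfloor$ and $\lceil m/|\X'|\rceil$ copies per $x$ to produce exactly $m$ events with a perfectly uniform empirical marginal on $\X'$. Identify each hypothesis $h_i \in \H$ with a forecaster whose belief on event $t$ is $p_{it} = h_i(x_t)$; running $M$ on the realized labels $y_t$ (which are valid i.i.d.\ draws from $\theta_{x_t}$) and returning the hypothesis indexed by the winning forecaster defines $A$.

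Correctness reduces to two translations between forecasting accuracy and PAC risk. First, since $h_i(x) \in \{0,1\}$, the identity $(h_i(x) - \theta_x)^2 = \theta_x(1 - h_i(x)) + (1 - \theta_x) h_i(x) + \theta_x(\theta_x - 1)$ shows that $a_i$ equals an $i$-independent constant minus $L_{D'}(h_i)$, where $D'$ is uniform on $\X'$ with conditionals $\theta_x$, so any $\epsilon$-optimal forecaster yields an $\epsilon$-optimal hypothesis under $D'$. Second, a Chernoff bound on each $\mathrm{Binomial}(2m, 1/d)$ count plus a union bound over $\X$ give $|\X \setminus \X'| \leq \epsilon d$ with high probability (valid whenever $m/d \geq C' \log(d/\epsilon)$, which is automatic for small $\epsilon$ in any regime consistent with violating the claimed lower bound). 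Splitting $L_D(h) = \tfrac{|\X'|}{d} L_{D'}(h) + \tfrac{d - |\X'|}{d} L_{D''}(h)$ and bounding the second term by $\epsilon$ yields $|L_D(h) - L_{D'}(h)| \leq \epsilon$, so an $\epsilon$-optimal hypothesis under $D'$ is $2\epsilon$-optimal under $D$.

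Combining these, and, if needed, amplifying $M$'s $\tfrac{7}{8}$-reliable output to a $(1-\delta_0)$-reliable one by running $M$ on $O(1)$ independent sample batches and selecting the winner via a small held-out validation set, $A$ becomes an $(O(\epsilon), \delta_0)$-agnostic PAC learner using $O(m)$ samples. Theorem~\ref{thm:agnostic-pac} then forces this sample count to be $\Omega(d/\epsilon^2) = \Omega(\log(n)/\epsilon^2)$, hence $m = \Omega(\log(n)/\epsilon^2)$. The main obstacle I expect is the trim-step bookkeeping: the forecasting-to-risk identity demands an \emph{exactly} uniform empirical marginal on $\X'$, so any per-$x$ imbalance reweights each hypothesis's risk and could shuffle the optimality ordering by more than the $O(\epsilon)$ slack the mechanism affords. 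Designing the trim to achieve exact uniformity while simultaneously keeping $|\X \setminus \X'| \leq \epsilon d$ and producing exactly $m$ events, without reusing samples in ways that break independence, is where the care concentrates.
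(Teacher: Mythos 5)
There is a fundamental mismatch here: the statement you were asked to prove is the agnostic PAC sample-complexity lower bound itself (Theorem~\ref{thm:agnostic-pac}, the result the paper imports from Mohri et al.), but your argument \emph{invokes that very theorem as a black box} in its final step (``Theorem~\ref{thm:agnostic-pac} then forces this sample count to be $\Omega(d/\epsilon^2)$''). As a proof of the assigned statement it is therefore circular and contributes nothing: you never construct a hard family of distributions, never bound the information available from $m$ samples, and never show that any learner must fail. A genuine proof would have to exhibit, for a set of $d$ shattered points, a collection of distributions with conditionals $\tfrac12 \pm c\epsilon$ (uniform marginal on the shattered set), and then run an information-theoretic argument --- Fano's inequality, Assouad's lemma, or a per-coordinate coin-bias-testing reduction --- showing that identifying a hypothesis with excess risk below $\epsilon$ with probability $1-\delta_0$ forces $m = \Omega(d/\epsilon^2)$. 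None of that machinery appears in your write-up.

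What you have actually written is a proof sketch of the paper's Theorem~\ref{thm:main-lower-bound} (equivalently Theorem~\ref{thm:technical-lower-bound}), i.e., the \emph{application} of the PAC lower bound to forecasting competitions, and on that different task your sketch tracks the paper's Appendix~B reduction quite faithfully: the uniform-marginal hard distribution, drawing $2m$ samples and trimming to an exactly uniform empirical marginal on the well-sampled points of $\X_d$ (the paper's $\epsilon$-good setting, Lemma~\ref{lemma:good-events-eps}), and the $|L_D(h) - L_{D'}(h)| \leq O(\epsilon)$ comparison between risk on the full and trimmed domains (Claim~\ref{lemma:eps-good-risk}). If the intended target really were Theorem~\ref{thm:main-lower-bound}, your proposal would be essentially the paper's argument, with one unnecessary complication: the confidence amplification via held-out validation is not needed, since the PAC lower bound already fixes a constant $\delta_0 < \tfrac18$ and one simply requires the competition to succeed with probability $1-\delta_0$. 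But for the statement actually posed, the proof is missing in its entirety.
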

We avoid defining VC-dimension because we will only need a simple special case, described next.

\paragraph{Our simple learning setting.}
Given $d \geq 1$, we define $n = 2^d$.
We define $\X_d = \{x_1,\dots,x_d\}$ and we take $\H_d = \{h_1,\dots,h_n\}$ to be the set of all $n$ functions from $\X$ to $\{0,1\}$.
We assert that the VC-dimension of $\H_d$ is indeed $d$; see e.g. \cite[\S~3.3]{mohri2018foundations}).

We require an immediate extension of Theorem \ref{thm:agnostic-pac}, using the fact that for $\X_d$ and $\H_d$, the proof begins by constructing an $\X_d$-uniform distribution.
\begin{theorem}[Immediate extension of Theorem \ref{thm:agnostic-pac}] \label{thm:agnostic-special}
  There exist constants $C_0 > 0$ and $\delta_0 \in (0,\tfrac{1}{8})$ such that the following is true.
  For any $d \geq 1$ and any learner $A$, there exists an $\X_d$-uniform distribution $D$ such that
   \[ \Pr[EL_D(A(S_m)) > \epsilon] \geq \delta_0 \]
  for all
   \[ m \leq \frac{C_0 \cdot d}{\epsilon^2} . \]
\end{theorem}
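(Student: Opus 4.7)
The plan is to revisit the proof of Theorem~\ref{thm:agnostic-pac} as given for $\X_d,\H_d$ in \cite{mohri2018foundations} and observe that the hard distribution constructed there is already $\X_d$-uniform; the ``extension'' is essentially bookkeeping on an existing argument. So the proof will not construct anything new, but will instead quote the structure of the standard lower bound and point to the relevant invariant.

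First I would recall the usual construction: parametrize a family of candidate distributions $\{D_\sigma\}_{\sigma\in\{0,1\}^d}$ on $\X_d\times\{0,1\}$ by taking the $\X_d$-marginal of each $D_\sigma$ to be uniform, and, conditional on $x_i$, letting $y\sim\mathrm{Bernoulli}(\tfrac 1 2 + \alpha(2\sigma_i-1))$ for some $\alpha$ of order $\epsilon$. Because $\H_d$ contains every function $\X_d\to\{0,1\}$, the Bayes-optimal hypothesis under $D_\sigma$ is exactly $h_\sigma(x_i)=\sigma_i$, and any hypothesis that disagrees with $h_\sigma$ on a $\tfrac\epsilon\alpha$-fraction of coordinates has excess risk at least $\epsilon$. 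Thus the learner's task reduces to recovering a constant fraction of the bits of $\sigma$ from $m$ samples drawn from $D_\sigma$, and the standard Le Cam / Fano / coin-bias argument shows this fails with probability at least some universal $\delta_0'$ whenever $m\leq C_0' d/\epsilon^2$.

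Next I would apply Yao's principle (averaging over a uniformly random $\sigma$) to turn the min-max statement into the desired one: for every learner $A$ there exists some $\sigma$, hence some distribution $D=D_\sigma$, on which the failure probability is at least $\delta_0'$. By construction every $D_\sigma$ has uniform marginal on $\X_d$, so the witness $D$ is $\X_d$-uniform, which is the only new content beyond Theorem~\ref{thm:agnostic-pac}.

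The only obstacle worth flagging is arranging the constant $\delta_0<\tfrac 1 8$ required in the statement. This is not an essential difficulty: the standard proofs give a tradeoff in which one may shrink $\delta_0$ at the cost of shrinking $C_0$ (for instance by slightly decreasing $\alpha$ and re-running the coin-bias argument, or by conditioning on a higher-probability good event in the Le Cam reduction). Hence we can pick $C_0$ small enough that the corresponding $\delta_0$ lies below $\tfrac 1 8$, completing the proof.
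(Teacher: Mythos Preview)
Your proposal is correct and matches the paper's approach: the paper does not give an independent proof of this statement at all, but simply asserts it as an ``immediate extension'' on the grounds that the hard distribution in the standard agnostic PAC lower bound for $\X_d,\H_d$ already has uniform marginal on $\X_d$. Your write-up unpacks exactly that observation (the $\{D_\sigma\}$ family, Yao's averaging, and the $\delta_0$--$C_0$ tradeoff) in more detail than the paper itself provides, so there is nothing to correct or contrast.
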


\paragraph{Reducing to a forecasting setting.}
Given $d \in \mathbb{N}, \epsilon \in (0, \tfrac{1}{2})$, and an $\X_d$-uniform distribution $D$, a forecasting competition setting is \emph{$\epsilon$-good} if it satisfies the following conditions.

There are $n = 2^d$ forecasters.
Each forecaster $j$ is identified with a hypothesis $h_j \in \H_d$.
Choose some $k$ such that $d \geq k \geq d(1-\epsilon)$.
The $m$ events are divided into $k \geq d(1 - \epsilon)$ \emph{groups} of size exactly $m/k$ each.
Each group $i \in \{1,\dots,d\}$ is identified with a distinct point $x_i \in \X_d$.
On all events in group $i$, forecaster $j$ predicts $h_j(x_i)$.
Note these predictions are always extreme, either zero or one.
Finally, the true probability of each event in group $i$ equals  $D_{x_i}$, the marginal probability that $y=1$ given $x=x_i$.

\begin{claim} \label{lemma:eps-good-risk}
  In an $\epsilon$-good forecasting setting, we have for all $j=1,\dots,n$,
  \[ 1 - a_j - 2\epsilon \leq L_D(h_j) \leq 1 - a_j + \epsilon ~. \]
\end{claim}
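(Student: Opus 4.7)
The plan is to express both $1-a_j$ and $L_D(h_j)$ as averages of per-point error quantities, then bound their discrepancy using the structure of the $\epsilon$-good setting. Because each of the $k$ groups consists of $m/k$ events sharing the binary prediction $h_j(x_i)$ and ground-truth probability $D_{x_i}$, the mean squared error collapses to $1-a_j = \tfrac{1}{k}\sum_{i=1}^k (h_j(x_i)-D_{x_i})^2$. Separately, by $\X_d$-uniformity of $D$ and binariness of $h_j$, $L_D(h_j) = \tfrac{1}{d}\sum_{i=1}^d |h_j(x_i)-D_{x_i}|$, since the per-point misclassification probability equals $h_j(x_i)(1-D_{x_i})+(1-h_j(x_i))D_{x_i} = |h_j(x_i)-D_{x_i}|$.

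Write $s_i := |h_j(x_i)-D_{x_i}|\in[0,1]$, so the per-point squared error is exactly $s_i^2$. For the lower bound on $L_D$, I use $s_i^2 \leq s_i$ together with $\tfrac{d}{k}\leq \tfrac{1}{1-\epsilon}\leq 1+2\epsilon$ (valid for $\epsilon\leq\tfrac12$) and $L_D(h_j)\leq 1$:
\[
1 - a_j \;=\; \tfrac{1}{k}\sum_{i=1}^k s_i^2 \;\leq\; \tfrac{1}{k}\sum_{i=1}^k s_i \;=\; \tfrac{d}{k}\cdot\tfrac{1}{d}\sum_{i=1}^k s_i \;\leq\; \tfrac{d}{k}\,L_D(h_j) \;\leq\; L_D(h_j) + 2\epsilon,
\]
which rearranges to $1-a_j - 2\epsilon \leq L_D(h_j)$.

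For the upper bound, I split $L_D(h_j) = \tfrac{1}{d}\sum_{i=1}^k s_i + \tfrac{1}{d}\sum_{i=k+1}^d s_i$; the tail is at most $(d-k)/d\leq\epsilon$, and the main term obeys $\tfrac{1}{d}\sum_{i=1}^k s_i \leq \tfrac{1}{k}\sum_{i=1}^k s_i$. The remaining step passes from $s_i$ back to $s_i^2$, using the identity $s_i - s_i^2 = D_{x_i}(1-D_{x_i})$ (which holds because $h_j(x_i)\in\{0,1\}$, and is notably independent of $j$); this extra ``label-variance'' contribution must be absorbed into the $\epsilon$ budget.

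The main obstacle is this last step. For a generic $\X_d$-uniform $D$, the term $D_{x_i}(1-D_{x_i})$ can be as large as $\tfrac14$ per point, which would dwarf the desired $\epsilon$ slack. Closing the upper bound therefore requires leveraging a property of the specific adversarial distribution produced by Theorem~\ref{thm:agnostic-special}---namely that it can be taken near-deterministic on the relevant support, so that $\tfrac{1}{d}\sum_i D_{x_i}(1-D_{x_i}) = O(\epsilon)$---or an implicit restriction on the ``$\epsilon$-good'' setting, established when instantiating it in the outer reduction, that ensures this label variance is small.
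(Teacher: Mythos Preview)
Your lower-bound chain is correct and essentially matches the paper. Your hesitation about the upper bound is not a gap in \emph{your} reasoning but a gap in the claim itself: as written, $L_D(h_j)\le 1-a_j+\epsilon$ fails for a generic $\X_d$-uniform $D$. For the standard agnostic-PAC hard instance with $D_{x_i}=\tfrac12\pm\Theta(\epsilon)$ one gets $L_D(h_j)\approx\tfrac12$ while $1-a_j\approx\tfrac14$, so no $\epsilon\ll\tfrac14$ can close it. The paper's own proof makes the same slip you isolated: it asserts $1-a_j=\tfrac1k\sum_{i\in S}Q(h_j(x_i),D_{x_i})$ with $Q$ the expected squared loss, but by Definition~\ref{def:accuracy} one has $1-a_j=\tfrac1k\sum_{i\in S}(h_j(x_i)-D_{x_i})^2=\tfrac1k\sum_{i\in S}s_i^2$, and $Q=s_i$ differs from $s_i^2$ by exactly the label variance $D_{x_i}(1-D_{x_i})$ you flagged.

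Your parenthetical remark is in fact the repair, not a dead end. Because $s_i-s_i^2=D_{x_i}(1-D_{x_i})$ is independent of $j$, what one can actually prove is
\[
\bigl|\,L_D(h_j)-(1-a_j)-V\,\bigr|\le 3\epsilon,\qquad V:=\tfrac1k\sum_{i\in S}D_{x_i}(1-D_{x_i}),
\]
with $V$ depending only on $D$ and $S$. The sole consumer of this claim is Proposition~\ref{prop:complearn-reduction}, which only compares $L_D(h_j)$ across $j$; the $V$'s cancel and the reduction goes through with at most a larger constant in front of $\epsilon$. So rather than invoking a near-deterministic $D$ (which the hard instance is not), strengthen the claim to include the $j$-free offset $V$ and carry it through.
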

\begin{proof}
  The key point is that, because forecaster $j$ only predicts extreme values of zero or one, her quadratic loss and zero-one loss are always the same.
  The $O(\epsilon)$ error arises because the number of groups $k$ may not be $d$, but as small as $d(1-\epsilon)$.
  
  Formally, for any hypothesis $h: \X \to \{0,1\}$, it holds that $\Pr[h(x) \neq y \mid x] = D_x (1 - h(x)) + (1 - D_x) h(x)$.
  Next, observe that because $h(x) \in \{0,1\}$, we have $h(x) = h(x)^2$ and $1 - h(x) = (1 - h(x))^2$.
  Write $Q(h(x), D_x) = D_x (1 - h(x))^2 + (1-D(x))h(x)^2$.
  \begin{align*}
    L_D(h_j) &= \Pr[h_j(x) \neq y]  \\
             &= \sum_{i=1}^d \Pr[x_i] \Pr[h_j(x_i) \neq y \mid x_i]  \\
             &= \frac{1}{d} \sum_{i=1}^d D_{x_i} (1 - h_j(x_i)) + (1 - D_{x_i}) h_j(x_i)  \\
             &= \frac{1}{d} \sum_{i=1}^d Q(h_j(x_i), D_{x_i}) .
  \end{align*}
  Now, in an $\epsilon$-good setting, we have $k \geq d(1-\epsilon)$ groups, each of equal size $m/k$.
  Let $S \subseteq \{1,\dots,d\}$ be the set of groups.
  Because $j$'s accuracy is one minus her expected quadratic loss, and using that all groups have the same number of events:
  \begin{align*}
   1 - a_j
     &=     \frac{1}{k} \sum_{i \in S} Q(h_j(x_i), D_{x_i})  \\
     &\leq \frac{1}{k} \sum_{i=1}^d Q(h_j(x_i), D_{x_i}) \\ %
     &\leq \frac{1}{d(1-\epsilon)} \sum_{i=1}^d Q(h_j(x_i), D_{x_i})  \\
     &=    \frac{1}{1-\epsilon} L_D(h_j)  \\
     &\leq \left(1 + 2\epsilon\right) L_D(h_j)  & \text{using $\epsilon \leq \tfrac{1}{2}$} \\
     &\leq L_D(h_j) + 2\epsilon .
  \end{align*}
  Similarly, by observing that $k \leq d$ and $|S| = k \geq (1-\epsilon)d$,
  \begin{align*}
  1 - a_j
    &=    \frac{1}{k} \sum_{i \in S} Q(h_j(x_i), D_{x_i})  \\
    &\geq \frac{1}{d} \sum_{i \in S} Q(h_j(x_i), D_{x_i})  \\
    &=   L_D(h_j) - \frac{1}{d} \sum_{i \not\in S} Q(h_j(x_i), D_{x_i})  \\
    &\geq L_D(h_j) - \frac{1}{d} \left(\epsilon d\right) \left(1 \right)  \\
    &=    L_D(h_j) - \epsilon .
  \end{align*}
\end{proof}

Now, we must argue that when we draw enough samples, we can trim them down to obtain an $\epsilon$-good setting.

\begin{lemma} \label{lemma:good-events-eps}
  Let $0 < \epsilon,\delta< \tfrac{1}{2}$.
  Suppose a set of $m$ samples is drawn uniformly and independently from $\X_d$, for some $m \geq 32 d \ln\left(\frac{1}{\epsilon \cdot \delta}\right)$.
  For all $i$, let $m_i$ be the number of samples of $x_i$.
  Then with probability at least $1 - \delta$,
   \[ \left| \left\{ i : m_i \geq \frac{m}{2d} \right\} \right| \geq (1 - \epsilon)d . \]
\end{lemma}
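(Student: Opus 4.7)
The plan is to bound, for each $i$, the probability that $x_i$ is undersampled (i.e.\ $m_i < m/(2d)$), then show by Markov's inequality that the number of such undersampled points is at most $\epsilon d$ with probability $1-\delta$. Since the expected count of each $x_i$ is $\mu = m/d$, a single-bin lower tail bound combined with the choice $m \geq 32 d \ln(1/(\epsilon\delta))$ will give an exponentially small per-bin failure probability, which is more than enough slack to absorb both a union-style aggregation and the slack we lose from Markov.

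Concretely, I would proceed as follows. First, observe $m_i \sim \mathrm{Binomial}(m, 1/d)$, so $\mathbb{E}[m_i] = m/d =: \mu$. By the standard multiplicative Chernoff lower tail ($\Pr[X < (1-\tfrac12)\mu] \leq \exp(-\mu/8)$), we obtain
\begin{equation*}
  \Pr\!\left[m_i < \tfrac{m}{2d}\right] \;\leq\; \exp\!\left(-\tfrac{m}{8d}\right).
\end{equation*}
Plugging in the hypothesis $m \geq 32 d \ln(1/(\epsilon\delta))$ yields $\Pr[m_i < m/(2d)] \leq (\epsilon\delta)^{4} \leq \epsilon\delta$, where the last inequality uses $\epsilon,\delta \in (0,\tfrac12)$.

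Next, let $B = |\{i : m_i < m/(2d)\}|$ be the number of undersampled points. By linearity of expectation, $\mathbb{E}[B] \leq d \cdot \epsilon\delta$. Applying Markov's inequality,
\begin{equation*}
  \Pr[B > \epsilon d] \;\leq\; \frac{\mathbb{E}[B]}{\epsilon d} \;\leq\; \delta.
\end{equation*}
Thus with probability at least $1-\delta$, at most $\epsilon d$ of the $x_i$ are undersampled, so at least $(1-\epsilon)d$ of them satisfy $m_i \geq m/(2d)$, which is exactly the claim.

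There is essentially no hard step here; the whole argument is a single Chernoff bound followed by Markov. The only thing to be careful about is keeping the constants in the right order so that the $(\epsilon\delta)^4$ slack from Chernoff actually covers both the factor of $d$ from summing over bins and the factor of $1/(\epsilon d)$ lost to Markov, which is why the statement has the constant $32$ rather than something smaller. If a tighter constant were desired, one could replace Markov by a direct union bound over the $\binom{d}{\epsilon d}$ subsets of bins that could simultaneously be undersampled, but this is unnecessary since we only want a qualitative bound of the form $m = \Theta(d \log(1/(\epsilon\delta)))$.
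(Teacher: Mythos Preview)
Your proof is correct and is in fact cleaner than the paper's. Both arguments start identically, with the per-bin Chernoff bound $\Pr[m_i < m/(2d)] \leq e^{-m/(8d)}$. From there, however, the paper splits into two cases depending on whether $d$ is large or small relative to $\ln(1/\delta)/\epsilon$: in the large-$d$ case it invokes negative association of the multinomial counts $m_i$ to push a Chernoff-type bound through the sum $\sum_i E_i$, and in the small-$d$ case it union-bounds directly to show that \emph{every} bin is well-sampled. Your approach sidesteps all of this by applying Markov's inequality to $B = \sum_i E_i$, which only requires $\E[B]$ and hence only linearity of expectation---the dependence structure among the $E_i$ is irrelevant.

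The trade-off is that the paper's Case~1 argument would yield an exponential tail on the number of undersampled bins (so one could in principle extract a sharper dependence on $\delta$), and its Case~2 argument actually proves a stronger conclusion (all $d$ bins succeed). But neither refinement is needed for the lemma as stated, so your Markov-based route is the more economical one here.
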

\begin{proof}
  First, fix any $i \in \{1,\dots,d\}$.
  The number of samples $m_i$ of $x_i$ is distributed Binomial$(m, \tfrac{1}{d})$.
  Therefore, by a Chernoff bound, for any $\gamma \in (0,1)$,
  \begin{align*}
  \Pr\left[ m_i < \frac{m}{d}\left(1 - \gamma\right) \right] &\leq e^{-\gamma^2 m / 2d}  \\
    \implies \Pr\left[ m_i < \frac{m}{2d}\right] &\leq e^{-m / 8d} .
  \end{align*}
  Now, we divide into two cases.
  Case 1 is the usual case, and Case 2 is the case that all $i$ ``succeed'' with high probability.
  
  \textbf{Case 1: $d \geq \frac{2 \ln\tfrac{1}{\delta}}{\epsilon}$.}
  Here, for each $i$, we obtain using only the definition of $m$:
  \[ \Pr\left[m_i < \frac{m}{2d}\right] \leq e^{-m / 8d} \leq e^{-m / 32d} = \epsilon \delta  \leq \frac{\epsilon}{2}. \]
  Let $E_i$ be one if $m_i < \tfrac{m}{2d}$ and zero otherwise.
  By \citet{joagdev1983negative}, the variables $(m_1,\dots,m_n)$, as components of a multinomial distribution, are \emph{negatively associated}, and thus the indicators $E_i$ are as well (they are nonnegative increasing functions of $m_i$).
  Therefore\footnote{Intuitively we have negative association because the more samples we have of $x_j$, the fewer we expect of $x_i$. Negative association implies in particular that $\E[e^{\sum_i X_i}] \leq \prod_i \E[e^{X_i}]$, allowing Chernoff-type upper tail bound proofs to go through unchanged. For more, see e.g. \citet{joagdev1983negative}.} the sum $\sum_{i=1}^d E_i$, with some mean $\mu$, obeys the following Chernoff bound for $c > 1$: 
  \begin{align*}
  \Pr\left[\sum_i E_i \geq c \mu \right]
    &\leq e^{-\frac{(c-1)^2 \mu}{c+1}}  \\
    &\leq e^{-\frac{c \mu}{2}}  & \text{if $c \geq 2$.}  \\
  \end{align*}
  By a change of variables, for any $t \geq 2\mu$, we have $\Pr[\sum_i E_i > t] \leq e^{-t/2}$.
  Above, we showed $\Pr[E_i=1] \leq \tfrac{\epsilon}{2}$, so $\mu \leq \tfrac{\epsilon d}{2}$.
  Therefore, by choosing $t = \epsilon d$, and using our case assumption,
  \begin{align*}
  \Pr[\sum_i E_i \geq \epsilon d]
    &\leq e^{-\epsilon d/2}  \\
    &\leq e^{-\ln(1/\delta)}  \\
    &=    \delta ,
  \end{align*}
  as desired.
  
  \textbf{Case 2: $d \leq \frac{2 \ln\tfrac{1}{\delta}}{\epsilon}$.}
  Here, we will actually show that all $i$ have $\frac{m}{2d}$ samples with high probability.
  Note in this case, $m \geq 32 d \ln\left(\frac{d}{2 \delta \ln\tfrac{1}{\delta}}\right)$.
  By a union bound,
  \begin{align*}
  \Pr\left[ \exists i : m_i < \frac{m}{2d}\right]
    &\leq d e^{-m / 8d}  \\
    &\leq d e^{-4 \ln(d / 2 \delta \ln(1/\delta))}  \\
    &=    d \left(\frac{2 \delta \ln\tfrac{1}{\delta}}{d}\right)^4  \\
    &\leq 16 \delta^4 \left(\ln \tfrac{1}{\delta}\right)^4  \\
    &\leq \delta .
  \end{align*}
  The last step is justified as follows: the right hand side is $\delta f(\delta)$ for $f(\delta) := 16 \delta^3 \left(\ln \tfrac{1}{\delta}\right)^4$.
  We observe that $f(\delta) < 1$ for all $\delta \in [0,1]$, obtaining the result.
  For the observation, we use the finding that $f(\delta)$ is maximized on $[0,1]$ by $\delta^* = e^{-4/3}$, by inspection of the derivative (it is positive below $\delta^*$ and negative above).
\end{proof}

\paragraph{PAC reduction.}
We reduce an instance of PAC learning on $\X_d,\H_d$ with an $\X_d$-uniform distribution and $m$ samples to a forecasting competition mechanism $M$.
We assume $m$ is a multiple of $2d$.
Given the $m$ samples, our learner CompetitionLearn$_M$ acts as follows.

For each $i \in \{1,\dots,d\}$, let $S_i$ be the set of samples $(x,y)$ with $x = x_i$.
If $|S_i| \geq \frac{m}{2d}$, we create a group of events for $x_i$, otherwise, we discard $S_i$.
If created, the group contains exactly $\frac{m}{2d}$ events.
The $n$ forecasters are identified with the $n$ functions in $\H_d$, as described in the $\epsilon$-good setting.
We then simulate the forecasting competition.
The realizations of the events in a group $i$ are the first $\frac{m}{2d}$ labels of samples in $S_i$.
That is, if $S_i = (x_i, y_1), (x_i, y_2), \dots$, then the event realizations are $(y_1,\dots,y_{m/2d})$.
The forecasting competition selects a forecaster $j$.
We then return the corresponding hypothesis $h_j \in \H_d$.

\begin{corollary} \label{cor:comp-learn-eps-good}
  Let $0 < \epsilon,\delta < \frac{1}{2}$.
  On $m \geq 32d \ln\left(\frac{1}{\epsilon \delta}\right)$ samples from an $\X_d$-uniform distribution $D$, the forecasting competition produced by CompetitionLearn is $\epsilon$-good with probability at least $1-\delta$.
\end{corollary}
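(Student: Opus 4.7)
The plan is to derive the corollary as an essentially immediate consequence of Lemma~\ref{lemma:good-events-eps}, paired with a routine check that the CompetitionLearn construction matches every condition of the $\epsilon$-good definition whenever the lemma's high-probability event holds.

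First I would invoke Lemma~\ref{lemma:good-events-eps} at parameters $\epsilon,\delta$: the hypothesis $m \geq 32 d \ln(1/(\epsilon\delta))$ is exactly what the lemma requires, so with probability at least $1-\delta$ the set $G := \{i : m_i \geq m/(2d)\}$ satisfies $|G| \geq (1-\epsilon)d$. I will condition on this event for the remainder of the argument.

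Next I would verify, one by one, the conditions of being $\epsilon$-good for the competition CompetitionLearn produces. Set $k = |G|$; then $d(1-\epsilon) \leq k \leq d$ automatically. The algorithm keeps one group for each $i \in G$ and retains exactly $m/(2d)$ events in each, so the $k$ groups have a common size. By construction there are $n=2^d$ forecasters identified with the hypotheses in $\H_d$, and forecaster $j$ predicts $h_j(x_i)$ on every event in group $i$, matching the $\epsilon$-good template.

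The only point requiring a moment of attention is that the event realizations in group $i$ really are i.i.d.\ Bernoulli$(D_{x_i})$. This follows because the underlying $m$ samples are i.i.d.\ from $D$: conditional on which samples have $x=x_i$, their labels are i.i.d.\ Bernoulli$(D_{x_i})$ by definition of the conditional marginal, so the first $m/(2d)$ such labels form a valid sequence of outcomes for the $m/(2d)$ events in group $i$ whose true probability is $D_{x_i}$. Since every condition of $\epsilon$-good holds on an event of probability at least $1-\delta$, the corollary is proved. I do not foresee a genuine obstacle here; the work has already been done in Lemma~\ref{lemma:good-events-eps}, and this step is just a bookkeeping verification.
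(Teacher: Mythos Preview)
Your proposal is correct and follows essentially the same approach as the paper: invoke Lemma~\ref{lemma:good-events-eps} to guarantee $|G|\geq (1-\epsilon)d$ with probability at least $1-\delta$, then check that the remaining conditions of the $\epsilon$-good definition (the $n=2^d$ forecasters, equal group sizes, and that the retained labels are i.i.d.\ Bernoulli$(D_{x_i})$) hold by construction. Your treatment of the label-independence point is slightly more explicit than the paper's, but the argument is the same.
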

\begin{proof}
  We verify the conditions of an $\epsilon$-good forecasting competition.
  We have $\epsilon \in (0,\tfrac{1}{2})$, an $\X_d$-uniform $D$, and the $n = 2^d$ forecasters identified with $\H_d$.
  Each group of events has exactly the same size, $\frac{m}{2d}$.
  By Lemma \ref{lemma:good-events-eps}, with probability at least $1-\delta$, there are at least $d(1-\epsilon)$ groups of events.
  Finally, for each event in group $i$, the true outcome is equal to $y_i$, which is indeed drawn independently conditioned on $x_i$ from the marginal distribution Bernoulli$(D_{x_i})$.
\end{proof}
  
\begin{prop} \label{prop:complearn-reduction}
  Suppose there is a forecasting competition $M$ that, given $n = 2^d$ truthful participants, and at least $m$ events, guarantees to select an $\epsilon$-optimal forecaster with probability at least $1-\delta$.
  
  Then CompetitionLearner$_M$ agnostic $(4\epsilon,2\delta)$-PAC learns the class $\H_d$ on uniform-$\X_d$ distributions using $m' = \max\{2m, 32 d \lceil \ln\tfrac{1}{\epsilon \delta} \rceil \}$ samples.
\end{prop}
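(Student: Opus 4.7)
The plan is a direct reduction: run CompetitionLearner$_M$ on the $m'$ drawn samples and show that the hypothesis it outputs inherits an excess-risk guarantee from $M$'s $\epsilon$-optimality guarantee. I will split the randomness into two blocks: the $x$-coordinates of the samples control whether the induced forecasting competition is $\epsilon$-good, while the $y$-realizations together with $M$'s internal randomness control whether $M$ selects an $\epsilon$-optimal forecaster once the competition structure has been fixed.

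First, because $m' \geq 32 d \lceil \ln(1/(\epsilon\delta))\rceil$, Corollary~\ref{cor:comp-learn-eps-good} applies, so with probability at least $1-\delta$ over the draw of the $x$-coordinates the simulated competition is $\epsilon$-good: at least $d(1-\epsilon)$ groups of equal size $m'/(2d)$, giving at least $(1-\epsilon)m'/2 \geq m$ events (using $m' \geq 2m$ and charging the $(1-\epsilon)$ slack to the event-complexity guarantee; this is the one piece of bookkeeping I would spell out carefully in the full proof). Each forecaster $j$'s belief on an event in group $i$ is the $\{0,1\}$-value $p_{j,t} = h_j(x_i)$, which is also what gets reported, so the $n = 2^d$ forecasters are truthful in the sense the proposition requires. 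Conditioning on an $\epsilon$-good structure, the event outcomes are distributed as $\theta_t = D_{x_i}$, exactly the setup the mechanism's guarantee expects; hence with probability at least $1-\delta$ over $\vec y$ and $M$'s randomness, $M$ selects a forecaster $\hat{j}$ with $a_{\hat j} \geq \max_j a_j - \epsilon$.

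Finally, I will convert this accuracy statement into an excess-risk statement using the two-sided bound $|L_D(h_j) - (1 - a_j)| \leq 2\epsilon$ provided by Claim~\ref{lemma:eps-good-risk} in any $\epsilon$-good setting. Let $j^* = \argmax_j a_j$ and $j^{**} = \argmin_j L_D(h_j)$. The upper half of the claim gives $L_D(h_{\hat j}) \leq 1 - a_{\hat j} + \epsilon \leq 1 - a_{j^*} + 2\epsilon$, while the lower half, combined with $a_{j^{**}} \leq a_{j^*}$, gives $L_D(h_{j^{**}}) \geq 1 - a_{j^{**}} - 2\epsilon \geq 1 - a_{j^*} - 2\epsilon$. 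Subtracting yields $L_D(h_{\hat j}) - L_D(h_{j^{**}}) \leq 4\epsilon$, and a union bound over the two failure events (not $\epsilon$-good, or not $\epsilon$-optimal) produces the final $(4\epsilon, 2\delta)$-PAC guarantee.

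The routine parts are the bookkeeping around sample counts, group sizes, and the decomposition of randomness. The main piece of care goes into the risk-to-accuracy translation: because the accuracy-maximizing forecaster $j^*$ and the risk-minimizing hypothesis $j^{**}$ may be different indices, it is essential to use the two-sided version of Claim~\ref{lemma:eps-good-risk} rather than a one-sided bound, which is what keeps the final constant at $4\epsilon$ and lets the reduction thread cleanly into the agnostic PAC lower bound of Theorem~\ref{thm:agnostic-special}.
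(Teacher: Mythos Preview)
Your proof is correct and follows essentially the same route as the paper: invoke Corollary~\ref{cor:comp-learn-eps-good} for the $\epsilon$-good event, apply $M$'s guarantee for $\epsilon$-optimality, union-bound the two failures, and then translate accuracy to risk via the two-sided inequalities of Claim~\ref{lemma:eps-good-risk} to get the $4\epsilon$ excess-risk bound. Your explicit distinction between $j^*$ and $j^{**}$ and the flag on the $(1-\epsilon)$ event-count slack are, if anything, slightly more careful than the paper's own write-up, which simply chains the inequalities without comment.
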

\begin{proof}
  By Corollary \ref{cor:comp-learn-eps-good}, on input $\epsilon,\delta$, drawing $m'$ samples, with probability at least $1-\delta$, CompetitionLearn$_M$ produces an $\epsilon$-good forecasting setting.
  By assumption, $M$ produces with probability $1-\delta$ a forecaster $j$ satisfying $a_j \geq \max_{j'} a_{j'} - \epsilon$.
  By a union bound, both events occur except with probability at most $2\delta$.
  By Lemma \ref{lemma:eps-good-risk}, $a_j \leq 1 - L_D(h_j) + \epsilon$, and for all $j'$, $a_{j'} \geq 1 - L_D(h_{j'}) - 2\epsilon$.
  Rearranged, we obtain
  \begin{align*}
     1 - L_D(h_j) + \epsilon
     &\geq \max_{j'} \left(1 - L_D(h_{j'}) - 2\epsilon\right) - \epsilon  \\
   \iff L_D(h_j) &\leq \min_{j'} L_D(h_{j'}) + 4\epsilon .
  \end{align*}
\end{proof}

This reduction allows us to finally state our main lower bound.
\begin{theorem} \label{thm:technical-lower-bound}
  There exist constants $C_0 > 0, \delta_0 \in (0,\tfrac{1}{8})$, and $\epsilon_0 > 0$ such that the following holds.
  If $\epsilon < \epsilon_0$ and a forecasting competition on $n$ forecasters selects an $\epsilon$-optimal forecaster with probability at least $1 - \delta_0$, then its number of events satisfies $m \geq \frac{C_0\log(n)}{16 \epsilon^2}$.
\end{theorem}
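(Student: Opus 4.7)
The plan is to combine the PAC-to-forecasting reduction (Proposition~\ref{prop:complearn-reduction}) with the agnostic PAC lower bound (Theorem~\ref{thm:agnostic-special}), and then argue that for sufficiently small $\epsilon$ the ``$2m$'' branch of the max in the reduction is what realizes the required sample complexity. All the substantive technical work has already been discharged in the preceding lemmas; this theorem is a contradiction/accounting argument.

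First, I would let $C_{\mathrm{pac}}$ and $\delta_{\mathrm{pac}} \in (0, 1/8)$ denote the constants supplied by Theorem~\ref{thm:agnostic-special}, and set $\delta_0 := \delta_{\mathrm{pac}}/2$, so $\delta_0 < 1/16 < 1/8$ as required by the statement. Given $n \geq 2$, put $d := \lfloor \log_2 n \rfloor$ so that $2^d \leq n$; by padding with duplicate forecasters (which can only make selection harder) I may assume that exactly $2^d$ of the $n$ forecasters are identified with the hypotheses in $\H_d$. Suppose, toward a contradiction, that $M$ achieves $(\epsilon, \delta_0)$-accuracy on this instance using $m$ events. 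Applying Proposition~\ref{prop:complearn-reduction} with accuracy $\epsilon$ and failure probability $\delta_0 = \delta_{\mathrm{pac}}/2$, the algorithm CompetitionLearn$_M$ agnostically $(4\epsilon, \delta_{\mathrm{pac}})$-PAC learns $\H_d$ on $\X_d$-uniform distributions using $m' = \max\{2m,\, 32 d \lceil \ln(2/(\epsilon \delta_{\mathrm{pac}})) \rceil\}$ samples. Since the VC-dimension of $\H_d$ equals $d$, Theorem~\ref{thm:agnostic-special} forces $m' \geq C_{\mathrm{pac}}\, d/(4\epsilon)^2 = C_{\mathrm{pac}}\, d/(16 \epsilon^2)$.

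The last step is to show that for all sufficiently small $\epsilon$, the $2m$ branch of the maximum (rather than the auxiliary $32 d \lceil \ln(\cdot) \rceil$ branch) must be the one meeting this bound. This reduces to the elementary inequality $32 \lceil \ln(2/(\epsilon \delta_{\mathrm{pac}})) \rceil < C_{\mathrm{pac}}/(16 \epsilon^2)$, which holds for all $\epsilon$ below some threshold $\epsilon_0 = \epsilon_0(\delta_{\mathrm{pac}}, C_{\mathrm{pac}})$, since $\epsilon^2 \ln(1/\epsilon) \to 0$ as $\epsilon \to 0$. For such $\epsilon$ we obtain $2m \geq C_{\mathrm{pac}}\, d/(16 \epsilon^2)$, i.e.\ $m \geq C_{\mathrm{pac}}\, d/(32 \epsilon^2)$. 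Finally, $d \geq \tfrac{1}{2}\log_2 n \geq \tfrac{1}{2 \ln 2}\log n$ for $n \geq 2$, so absorbing all multiplicative constants into a single $C_0 > 0$ yields $m \geq C_0 \log(n)/(16 \epsilon^2)$, as desired.

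The only nontrivial verification is the asymptotic comparison fixing $\epsilon_0$, which is elementary. I do not anticipate any substantive obstacle beyond careful bookkeeping of constants, since the real technical content---the $\epsilon$-good construction, the concentration argument ensuring enough points of $\X_d$ receive at least $m/(2d)$ samples after trimming, and the translation between zero-one risk and quadratic accuracy on $\{0,1\}$-valued predictions---has already been packaged into Lemma~\ref{lemma:good-events-eps}, Corollary~\ref{cor:comp-learn-eps-good}, and Proposition~\ref{prop:complearn-reduction}.
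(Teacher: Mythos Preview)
Your proposal is correct and follows essentially the same approach as the paper: invoke Proposition~\ref{prop:complearn-reduction} with $\delta = \delta_{\mathrm{pac}}/2$, apply the PAC lower bound of Theorem~\ref{thm:agnostic-special} at error $4\epsilon$, and then observe that for small enough $\epsilon$ the $2m$ branch of the $\max$ dominates, forcing $m \geq \Omega(d/\epsilon^2)$. The only cosmetic differences are that the paper rounds $n$ \emph{up} to a power of two and uses dummy forecasters, whereas you round \emph{down} and pad with duplicates; both lose at most a factor of two in $\log n$ and are fine.
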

\begin{proof}
  We suppose without loss of generality that $n = 2^d$ for some $d \in \mathbb{N}$.
  Otherwise, we can round $n$ up to the nearest power of $2$, use the following argument along with some dummy forecasters that are never selected.
  In the bound, $\log(n)$ is replaced with $\log(n/2) \geq \frac{\log(n)}{2}$.
  
  Suppose such a forecasting competition $M$ exists.
  Let $\delta_0,C_0$ be the constants in Theorem \ref{thm:agnostic-special}, the PAC lower bound.
  Set $\delta_1 = \delta_0/2$.
  Set $m'(\epsilon,d) = \max\{2m ~,~ \lceil 32 d \ln \tfrac{1}{\epsilon \delta_1} \rceil \}$.
  By Proposition \ref{prop:complearn-reduction}, CompetitionLearn$_M$ with $m'(\epsilon,d)$ samples on any $\X_d$-uniform distribution satisfies, with probability at least $1-2\delta_1 = 1-\delta_0$, $EL(\text{CompetitionLearn$_M$}(S_{m'}) \leq 4\epsilon$.
  
  By the agnostic PAC lower bound, Theorem \ref{thm:agnostic-special}, this implies that $m'(\epsilon, d) \geq \frac{C_0 d}{16 \epsilon^2}$.
  Now, for all small enough $\epsilon$, $\frac{C_0 d}{16\epsilon^2} > \lceil 32 d \ln \tfrac{1}{\epsilon \delta_0 \rceil}$.
  Therefore, for all small enough $\epsilon$, $m' = 2m$.
  That is, for some $\epsilon_0$ and all $\epsilon < \epsilon_0$, the forecasting competition's number of samples satisfies
    \[ m = \frac{m'}{2} \geq \frac{C_0 d}{16 \epsilon^2} . \]
\end{proof}

\section{Follow-the-Perturbed-Leader Example: Report Noisy Max} \label{sec:laplace}

In this section, we consider an example of the Follow The Perturbed Leader (FTPL) online learning framework.
It is known that many FTPL algorithms can be recast as Follow-the-Regularized-Leader (FTRL) with appropriate choice of regularizer.
However, here, we analyze the \emph{ReportNoisyMax} mechanism directly.
Specifically, we select a winner taking each forecaster's total score, perturbing it by adding independent Laplace noise, and taking the max.
The name of the mechanism comes from differential privacy, where it is a basic building block~\citep{dwork2014algorithmic}.
A side effect of this mechanism is that announcing the winner reveals little information about any particular fixed forecast on any particular fixed round.

The Laplace distribution is a two-sided exponential distribution, i.e. its probability density function with mean $\mu$ and scale parameter $b$ is $f(x) = \frac{1}{2b} \exp\left(\frac{-|x-\mu|}{b}\right)$.
We note that adding Gaussian noise instead of Laplace does \emph{not} seem to be truthful, at least with our proof technique.
Roughly, the problem is that, if $f$ is the density of the Gaussian, then for very large $x$, $f(x)/f(x+1)$ grows to infinity.
This implies that the relative incentive to increase one's variance increases unboundedly as one's expectation moves farther from the threshold needed to win.
For the Laplace distribution, of course, this ratio is constant and controlled by $b$.
Interestingly, regret bounds for FTPL have been shown to be related to the \emph{hazard rate} of the perturbation distribution~\citep{abernethy2015fighting}, a very similar quantity, and Gaussian noise appears to fail because of its unbounded hazard rate.
It remains to be seen if there is a formal connection between our requirement (needed for approximate truthfulness) and that one (needed for regret bounds).

\subsection{Approximate truthfulness}
In this section, we directly show that ReportNoisyMax is approximately truthful.
\begin{theorem} \label{thm:laplace-truth}
  ReportNoisyMax, with parameter $b = \frac{4}{\gamma}$ and any $\gamma \in (0,10$, is $\gamma$-approximately truthful.
\end{theorem}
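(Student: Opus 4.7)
The plan is to port the FTRL approximate-truthfulness argument of Theorem~\ref{thm:approx-truthful} directly to ReportNoisyMax. Fixing $p_i$, $R_{-i}$, and $\vec y$, I would write forecaster $i$'s win probability as $U_i(r_i) = G(Q_i(r_i))$, where $Q_i(r_i) = \sum_t S(r_{it},y_t)$ and $G$ is the CDF of the random variable $W := M_i - Z_i$, which is independent of $r_i$, with $M_i = \max_{j\neq i}(Q_j + Z_j)$ and $Z_j \sim \mathrm{Laplace}(0,b)$ i.i.d. All of $i$'s incentive analysis then reduces to understanding the density $f_W = G'$, which plays here the role that $\partial^2_i C$ plays in the FTRL proof.

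The main technical input is a Laplace-specific density bound: $|f_W'(w)| \leq (1/b)\, f_W(w)$ for all $w$. I would prove this by writing $f_W = f_{M_i} * f_{-Z_i}$ (using that $-Z_i$ is again Laplace with scale $b$), differentiating under the integral sign, and invoking $|f_L'(z)| = (1/b) f_L(z)$ almost everywhere for the Laplace density $f_L$. Pulling the absolute value inside the integral then yields the bound. I expect this to be the main obstacle: because $M_i$ is a max of non-log-concave mixtures, one cannot say anything clean about $f_{M_i}$ itself, so the proof has to lean entirely on the Laplace factor of the convolution. This is also precisely why the argument fails for Gaussian perturbation, as foreshadowed in the section preamble.

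With this density bound in hand, the two key lemmas mirror Lemmas~\ref{lem:FTRL-concave} and~\ref{lem:FTRL-approx}. First, differentiating twice in $r_{it}$ gives
\[ U_{it}''(r_{it}) = G'(Q_i)\Bigl[(G''/G') \cdot 4(y_t - r_{it})^2 \;-\; 2\Bigr], \]
and since $|G''/G'| \leq 1/b \leq 1/4$ (using $b = 4/\gamma \geq 4$ for $\gamma \leq 1$) and $(y_t - r_{it})^2 \leq 1$, the bracket is strictly negative, giving strict concavity of $U_{it}$, $\Ubarit$, and $\Ubari$ in $r_{it}$. Second, setting $\Ubarit'(r^*_{it}) = 0$ yields exactly the fixed-point equation \eqref{eq:rit-pit-opt} with $a = f_W(Q_i^0)/f_W(Q_i^1)$; since $|Q_i^0 - Q_i^1| = |2r_{it}-1| \leq 1$, integrating $|(\log f_W)'| \leq 1/b$ gives $|\log a| \leq 1/b = \gamma/4$, and the same manipulation as in Lemma~\ref{lem:FTRL-approx} yields $|r^*_{it} - p_{it}| \leq e^{\gamma/4} - 1 \leq \gamma/2 < \gamma$.

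Finally, the main statement follows from the same two-part template as Theorem~\ref{thm:approx-truthful}. Existence of an undominated strategy is immediate from continuity of $U_i$ on the compact cube $[0,1]^m$, so a best response always exists. For the closeness part, any report $\hat r$ with $|\hat r_{it} - p_{it}| > \gamma$ is strictly dominated by the report obtained by snapping $\hat r_{it}$ to $p_{it} \pm \gamma$: for every realization $\vec y_{-t}$ the leave-one-out optimum lies inside $[p_{it}-\gamma, p_{it}+\gamma]$, so by strict concavity the one-dimensional derivative $\Ubari'$ in $r_{it}$ has the correct sign at $\hat r_{it}$ for every $\vec y_{-t}$, and hence for every $R_{-i}$ in expectation, establishing strict dominance.
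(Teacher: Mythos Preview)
Your overall strategy is sound and closely parallels the paper's FTRL template, but there is a genuine bookkeeping gap in the leave-one-out step. When you take the expectation over $y_t$, the function $G$ (equivalently the density $f_W$) is \emph{not} fixed: $M_i = \max_{j\neq i}(Q_j + Z_j)$ depends on $y_t$ through every other forecaster's score $Q_j$, since $S(r_{jt},0)\neq S(r_{jt},1)$ in general. Hence the fixed-point equation actually involves
\[
a \;=\; \frac{f_{W^0}(Q_i^0)}{f_{W^1}(Q_i^1)},
\]
with two different densities $f_{W^0},f_{W^1}$, and your bound ``integrate $|(\log f_W)'|\leq 1/b$ over $|Q_i^0-Q_i^1|\leq 1$'' does not apply as written. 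The paper avoids this by additionally conditioning on the noise realizations $(Z_j)_{j\neq i}$: then $M_i$ becomes deterministic, $G$ is literally a shifted Laplace CDF, and one tracks both shifts ($|Q_i^0-Q_i^1|\leq 1$ and $|M_i^0-M_i^1|\leq 1$) explicitly to get $|\log a|\leq 2/b$ rather than $1/b$.

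Your convolution route can be repaired the same way. Writing $f_{W^k}(q)=\E_{(Z_j)_{j\neq i}}\bigl[f_L(M_i^k-q)\bigr]$ for $k\in\{0,1\}$, where $f_L$ is the Laplace density, and using the coupling $|M_i^0-M_i^1|\leq 1$ pointwise in $(Z_j)_{j\neq i}$, the Laplace ratio bound $|\log f_L(x)-\log f_L(x')|\leq |x-x'|/b$ gives $e^{-2/b}\leq f_L(M_i^0-Q_i^0)/f_L(M_i^1-Q_i^1)\leq e^{2/b}$ inside the expectation, hence $|\log a|\leq 2/b=\gamma/2$. The rest of your argument then goes through with $|r_{it}^*-p_{it}|\leq e^{\gamma/2}-1\leq \gamma$ for $\gamma\leq 1$. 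So the fix is straightforward, but as stated the proposal overlooks the $y_t$-dependence of $G$ and the resulting factor of $2$.
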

\begin{proof}
We show that if $|r_{it} - p_{it}| > \gamma$, then $r_i$ is strictly dominated.

In particular, we show that for any belief $p_i$, any reports of other players $R_{-i}$, any reports of $i$ on events other than $t$, realizations of the Laplace noise for all players but $i$, and set of outcomes on other rounds $\vec{y}_{-t}$:
\begin{enumerate}
  \item The maximizer $r_{it}^*$ of $U_{it}$ satisfies $|r_{it}^* - p_{it}| \leq \gamma$.
  \item $U_{it}$ is strictly increasing for $r_{it} < r_{it}^*$ and strictly decreasing for $r_{it} > r_{it}^*$.
\end{enumerate}
Now, let $U_i(r_{it})$ denote $i$'s expected utility over the randomness of all events and Laplace noise, for fixed $p_i$, $R_{-i}$, and $r_i$ on rounds other than $t$.
The above immediately implies that $U_i$ is strictly increasing for $r_{it} < p_{it} - \gamma$ and strictly decreasing for all $r_{it} > p_{it} + \gamma$, because it is an expectation over functions that share this property.

So suppose $r_{it} = p_{it} + \gamma + c$ for some $c > 0$.
Then let $r_{it}' = p_{it} + \gamma + c/2$.
We immediately obtain $U_i(r_{it}') > U_i(r_{it})$.
So modifying $r_i$ to $r_i'$ which is identical on all events except $t$ provides strictly higher utility.
The analogous argument goes through analogously if $r_{it} < p_{it} - \gamma$.
So if $|r_{it} - p_{it}| > \gamma$, then $r_i$ is strictly dominated.

\vskip1em
It remains to show items 1 and 2 above for $U_{it}$, i.e. concavity, differentiability, and location of the maximizer.

For $j \neq i$, let $z_{j,0}$ be the total noisy points of $j$ if $y_t = 0$, and symmetrically for $z_{j,1}$.
Let $Z_0 = \max_{j\neq i} z_{j,0}$ and $Z_1 = \max_{j \neq i} z_{j,1}$.
Observe that $|Z_0 - Z_1| \leq 1$, because each $|z_{j,0} - z_{j,1}| \leq 1$.

Let $W \sim \text{Laplace}(b)$ and let $S'$ be the total quadratic score of $i$ summed over events $t' \neq t$.
Then $i$ wins if (ignoring measure-zero ties):
\begin{align*}
  S' + S(r_{it},0) + W &> Z_0  & y_t=0  \\
  S' + S(r_{it},1) + W &> Z_1  & y_t=1 .
\end{align*}
In other words, let $F_0$ be the CDF of a Laplace variable with mean $\mu_0 := Z_0 - S'$ and parameter $b$, and let $F_1$ be the CDF of a Laplace with mean $\mu_1 := Z_1 - S'$ and parameter $b$.
The probability that $i$ wins conditioned on $y_t=0$ is $F_0(S(p_i,0))$, and the probability $i$ wins conditioned on $y_t=1$ is $F_1(S(p_i,1))$.

The probability $i$ wins, from her perspective, is therefore
  \[ U_{it}(r_{it}) := (1 - p_{it}) F_0(S(r_{it},0)) + p_{it} F_1(S(r_{it},1)) . \]

We now claim that the maximizer $r_{it}^*$ of $U_{it}$ satisfies the equation
   \[ r_{it}^* = \frac{p_{it}}{E - p_{it} E + p_{it}} , \]
  where $E = \exp\left(\frac{|S(r_{it},1) - \mu_1| - |S(r_{it},0) - \mu_0|}{b}\right)$.
  Furthermore, we claim $U_{it}$ is strictly increasing for $r_{it} < r_{it}^*$ and strictly decreasing for $r_{it} > r_{it}^*$.
  
  By definition of the Laplace mechanism, the density function is $\frac{dF_0}{dx} = \frac{1}{2b}e^{|x-\mu_0|/b}$.
  Recall $S(r_{it}, y_t) = 1 - (y_y - r_{it})^2$.
  So $\frac{dS}{dr_{it}} = 2(y_t - r_{it})$.
  So
  \begin{align*}
    \frac{dU_{it}}{dr_{it}}
      &= (1-p_{it}) \frac{-r_{it}}{b}e^{|S(r_{it},0) - \mu_0|/b} + p_{it} \frac{1 - r_{it}}{b}e^{|S(r_{it},1) - \mu_1|/b}  \\
      &= \frac{1}{b}e^{|S(r_{it},0) - \mu_0|/b} \left[ p_{it} \left(1 - r_{it}\right) - r_{it} \left(1 - p_{it}\right) E \right]
  \end{align*}
  where $E = \exp\left(\frac{|S(r_{it},1) - \mu_1| - |S(r_{it},0) - \mu_0|}{b}\right)$.
  (Note that $E$ is a function of $r_{it}$, so we will not obtain a closed form solution.)
  
  Setting the derivative to zero, we can rearrange to get
  \begin{align*}
    r_{it} E - r_{it} p_{it} E &= p_{it} - r_{it} p_{it}  \\
    \implies r_{it} \left(E - p_{it} E + p_{it}\right) &= p_{it}  \\
    \implies r_{it} &= \frac{p_{it}}{E - p_{it} E + p_{it}} .
  \end{align*}
  We observe that the derivative is negative if the above equality is instead $>$, and is positive if the above equality is instead $<$.
  Next, $g(r_{it}) := r_{it} - \frac{p_{it}}{E - p_{it}E + p_{it}}$ is strictly increasing in $r_{it}$, is nonpositive when $r_{it}=0$, and nonnegative when $r_{it}=1$.
  The latter two follow from the right term being in $[0,1]$, due to $E > 0$.
  Strictly increasing follows because the derivative of the right term is $\frac{-p(1-p)E C}{(E(1-p)+p)^2}$ where $C = \frac{d}{dr_{it}}\frac{1}{b}\left(|S(r_{it},1) - \mu_1| - |S(r_{it},0)-\mu_0|\right)$.
  Because the quadratic score $S$ is $2$-Lipschitz, $C \in [-\tfrac{4}{b}, \tfrac{4}{b}]$.
  We have $|S(p_j,1) - S(p_j,0)| \leq 1$ and, using the above fact that $|Z_1 - Z_0| \leq 1$, we also have $|\mu_1 - \mu_0| \leq 1$.
  Therefore, $E \in [e^{-2/b}, e^{2/b}]$.
  So
  \begin{align*}
    \frac{dg}{dr_{it}} &\geq 1 - \frac{4Ep(1-p)}{b(E(1-p)+p)^2}   \\
      &\geq 1 - \frac{4p(1-p)E}{b(E^2(1-p)^2 + 2Ep(1-p) + p^2)}  \\
      &\geq 1 - \frac{4Ep(1-p)}{b(2Ep(1-p)}  \\
      &\geq 1 - \frac{2}{b}  \\
      &\geq \frac{1}{2}
  \end{align*}
  for $b \geq 4$.
  So $g$ is strictly increasing, so there is exactly one solution $r_{it}^*$, and the derivative is positive below it and negative above.

\vskip1em
We are ready to complete the proof.
If we replace $E$ with something larger, note $r_{it}^*$ only decreases, and vice versa.
So:
\begin{align*}
  r_{it} &\leq \frac{p_{it}}{e^{-2/b} + p_{it}\left(1 - e^{-2/b}\right)}  \\
      &\leq \frac{p_{it}}{e^{-2/b}}   & \text{$p_{it} \geq 0$ and $e^{-2/b} < 1$} \\
      &\leq \frac{p_{it}}{1 - \tfrac{2}{b}}  & \text{$e^x \geq 1 + x$}  \\
      &=    p_{it} + \frac{2}{b - 2}  \\
      &\leq p_{it} + \frac{4}{b}  & \text{for $b \geq 4$, $b-2 \geq \tfrac{b}{2}$}
\end{align*}
For $b \geq \tfrac{4}{\gamma}$, we get $r_{it} \leq p_{it} + \gamma$.

Next,
\begin{align*}
  r_{it} &\geq \frac{p_{it}}{e^{2/b} - p_{it}(e^{2/b} - 1)}  \\
      &\geq \frac{p_{it}}{e^{2/b}}   & \text{$p_{it} \geq 0$ and $e^{2/b} > 1$}  \\
      &=     p_{it} e^{-2/b}  \\
      &\geq  p_{it} \left(1 - \tfrac{2}{b}\right)  \\
      &\geq  p_{it} - \tfrac{2}{b} .
\end{align*}
For $b \geq \tfrac{2}{\gamma}$, we get $r_{it} \geq p_{it} - \gamma$.
\end{proof}

\subsection{Event complexity}

In this section, define:
\begin{itemize}
  \item $\empiricalquadscore_i = $ $i$'s average quadratic score over the events.
  \item $Z_i$ is $i$'s final noisy score, averaged: $Z_i = \empiricalquadscore_i + \tfrac{1}{m}\text{Laplace}(b)$.
  \item $\expectedquadscore_i = \E[ \empiricalquadscore_i ]$, $i$'s expected average quadratic score.
  \item $\expectedtruescore_i = $ the expected average quadratic score if $i$ had reported truthfully.
\end{itemize}

\begin{lemma} \label{lemma:lap-conc}
  With probability at least $1-\tfrac{\delta}{2}$: For all $n$ forecasters $j$,  $|Z_j - \empiricalquadscore_j| \leq \frac{4}{m \gamma} \ln \left(\frac{2n}{\delta}\right)$.
\end{lemma}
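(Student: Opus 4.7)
The plan is a direct concentration argument using the Laplace tail bound and a union bound over the $n$ forecasters. By construction of ReportNoisyMax and the parameter choice $b = 4/\gamma$, we have $Z_j - \empiricalquadscore_j = \frac{1}{m} W_j$, where $W_j \sim \Lap(b)$ independently across $j$. So the statement reduces to showing $|W_j| \leq \frac{4}{\gamma}\ln(2n/\delta)$ simultaneously for all $j$, except with probability $\delta/2$.

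First I would recall (or quickly verify from the density $\frac{1}{2b}e^{-|x|/b}$) the standard tail bound
\begin{equation*}
  \Pr[|W_j| > t] \;=\; e^{-t/b} \;=\; e^{-t\gamma/4}.
\end{equation*}
Setting $t = \frac{4}{\gamma}\ln(2n/\delta)$ gives $\Pr[|W_j| > t] \leq \delta/(2n)$.

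Next I would apply a union bound over the $n$ forecasters to conclude that, except with probability at most $n \cdot \delta/(2n) = \delta/2$, every $|W_j| \leq \frac{4}{\gamma}\ln(2n/\delta)$ simultaneously. Dividing by $m$ and substituting into $Z_j - \empiricalquadscore_j = \frac{1}{m} W_j$ yields the claimed bound $|Z_j - \empiricalquadscore_j| \leq \frac{4}{m\gamma}\ln(2n/\delta)$ for all $j$.

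There is essentially no obstacle here: the only subtlety is bookkeeping the factor $b = 4/\gamma$ consistently (so that $t/b = \ln(2n/\delta)$ lines up with the desired bound) and noting that independence of the Laplace noise across forecasters is not actually needed for the union bound to go through. The statement is really just ``Laplace noise concentrates, union-bounded over $n$ players,'' which is the same tail inequality used throughout differential-privacy analyses of ReportNoisyMax.
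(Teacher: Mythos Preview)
Your proposal is correct and essentially identical to the paper's proof: apply the Laplace tail bound $\Pr[|W|>t]=e^{-t/b}$ with $b=4/\gamma$ and $t=b\ln(2n/\delta)$, union-bound over the $n$ forecasters, and divide by $m$.
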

\begin{proof}
  By definition of the CDF, a Laplace$(b)$ variable $W$ has, with probability $1 - \delta'$, $|W| \leq b \ln\tfrac{1}{\delta'}$.
  Set $b = \frac{4}{\gamma}$.
  Set $\delta' = \tfrac{\delta}{2n}$ and union-bound over the $n$ forecasters.
  Then divide by the number of events $m$ to normalize.
\end{proof}

\begin{theorem} \label{thm:laplace-events}
  ReportNoisyMax, with any choice of $\gamma \leq \frac{\epsilon}{14}$ and with Laplace parameter $b = \frac{4}{\gamma}$, has event complexity at most
   \[ m^* \leq \frac{28 \ln(2n / \delta)}{\epsilon \cdot \gamma} . \]
  In particular, $O(\ln(n/\delta) / \epsilon^2)$ events suffices.
\end{theorem}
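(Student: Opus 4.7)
The plan is to mirror the proof of Theorem~\ref{thm:ftrl-accuracy}, replacing the Multiplicative-Weights-specific Lemma~\ref{lemma:mw-acc} with the Laplace tail bound in Lemma~\ref{lemma:lap-conc}, and invoking Theorem~\ref{thm:laplace-truth} in place of Corollary~\ref{cor:MW-approx-truth}. Since ReportNoisyMax selects $\arg\max_j Z_j$, we just need to show that with probability $1-\delta$, every non-$\epsilon$-optimal forecaster $j$ has $Z_j < Z_{i^\star}$, where $i^\star$ is the most accurate forecaster.

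First I would assemble three error sources. (1) Since forecasters play undominated strategies and $\gamma \leq 10$, Theorem~\ref{thm:laplace-truth} gives $\|r_j - p_j\|_\infty \leq \gamma$, and then Lemma~\ref{lem:appx-truth-acc} yields $|S_j - S^\star_j| \leq 2\gamma$ for every $j$. (2) By Lemma~\ref{lemma:hoeff-conc} and a union bound over $j$, with probability $\geq 1-\delta/2$, $|Q_j - S_j| \leq \sqrt{\ln(2n/\delta)/(2m)}$ for all $j$. (3) By Lemma~\ref{lemma:lap-conc}, with probability $\geq 1-\delta/2$, $|Z_j - Q_j| \leq \tfrac{4}{m\gamma}\ln(2n/\delta)$ for all $j$. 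Condition on the intersection of these events, which has probability $\geq 1-\delta$.

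Next I would chain these bounds. For any $j$ with $S^\star_{i^\star} - S^\star_j > \epsilon$ (using Lemma~\ref{lemma:quad-accuracy} to convert accuracy gaps to expected-score gaps), the triangle inequality gives
\[
Z_{i^\star} - Z_j \;\geq\; (S^\star_{i^\star} - S^\star_j) - 4\gamma - 2\sqrt{\tfrac{\ln(2n/\delta)}{2m}} - \tfrac{8\ln(2n/\delta)}{m\gamma},
\]
where the $4\gamma$ absorbs two applications of the approximate-truthfulness bound (one on each side), and the Hoeffding and Laplace terms each appear twice. So it suffices to verify that $4\gamma + 2\sqrt{\ln(2n/\delta)/(2m)} + 8\ln(2n/\delta)/(m\gamma) < \epsilon$.

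Finally I would plug in the hypotheses $\gamma \leq \epsilon/14$ and $m \geq 28\ln(2n/\delta)/(\epsilon\gamma)$ and check the arithmetic: $4\gamma \leq 2\epsilon/7$; the Laplace term becomes $\leq 8\cdot\tfrac{\epsilon\gamma}{28\ln(2n/\delta)}\cdot\tfrac{\ln(2n/\delta)}{\gamma} = 2\epsilon/7$; and the Hoeffding term becomes $\leq 2\sqrt{\epsilon\gamma/56} \leq \epsilon/14$, using $\gamma \leq \epsilon/14$. Summing yields $9\epsilon/14 < \epsilon$, completing the selection bound. Setting $\gamma = \epsilon/14$ then turns the event-count requirement $m \geq 28\ln(2n/\delta)/(\epsilon\gamma)$ into $m^\star \leq 392\ln(2n/\delta)/\epsilon^2 = O(\log(n/\delta)/\epsilon^2)$. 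The proof involves no real obstacle beyond bookkeeping; the only delicate point is balancing the three error budgets so that the constant $14$ (and thus the constant $28$ in the final bound) is tight enough, which is why the proposition leaves the trade-off between $\gamma$ and $m$ explicit rather than simply setting $\gamma = \epsilon/14$ from the start.
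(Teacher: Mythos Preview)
Your proposal is correct and follows essentially the same approach as the paper: combine the $\gamma$-approximate truthfulness of ReportNoisyMax with a Hoeffding bound on $|Q_j - S_j|$ and the Laplace tail bound on $|Z_j - Q_j|$, then chain the inequalities to show every non-$\epsilon$-optimal $j$ has $Z_j < Z_{i^\star}$. The paper budgets $\epsilon/7$ for each of the three error sources and verifies the chain term-by-term, whereas you keep the raw bounds and sum to $9\epsilon/14$ at the end; the arithmetic checks out either way. One small slip: claiming the \emph{two-sided} Hoeffding bound holds for all $n$ forecasters with probability $1-\delta/2$ would actually cost $2n$ applications of Lemma~\ref{lemma:hoeff-conc} and hence probability $1-\delta$, not $1-\delta/2$; but your chain only uses one-sided deviations ($S_{i^\star} - Q_{i^\star}$ and $Q_j - S_j$), which is exactly what the paper does, so the $1-\delta/2$ budget is fine once you state only the one-sided bounds you need.
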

\begin{proof}
  Let $i$ be the best forecaster.
  By the truthfulness guarantee, in equilibrium, all of $j$'s reports are within $\gamma$ of her beliefs for all $j$.

  Ensure $\gamma \leq \frac{\epsilon}{14}$.
  By $2$-Lipschitzness of the quadratic score, this implies each $j$'s expected quadratic score, compared to if she were truthful, satisfies $|\expectedquadscore_j - \expectedtruescore_j| \leq 2 \gamma \leq \frac{\epsilon}{7}$.
  
  Next: ensure $m \geq \frac{28 \ln(2n/\delta)}{\epsilon \gamma}$.
  The guarantee of Lemma \ref{lemma:lap-conc} gives, with probability $1-\tfrac{\delta}{2}$, for all $j$, we have $|Z_j - \empiricalquadscore_j | \leq \frac{4}{m\gamma} \ln(\frac{2n}{\delta}) \leq \frac{\epsilon}{7}$.
  And because $m \geq \frac{49 \ln(2n/\delta)}{2 \epsilon^2}$, the Hoeffding bound of Lemma \ref{lemma:hoeff-conc} and a union bound gives, with probability $1 - \tfrac{\delta}{2}$, $\expectedquadscore_i - \empiricalquadscore_i \leq \frac{\epsilon}{7}$ and, for all $j \neq i$, $\empiricalquadscore_j - \expectedquadscore_j \leq \frac{\epsilon}{7}$.

  So with probability at least $1-\delta$, by a union bound, the guarantees of Lemma \ref{lemma:hoeff-conc} and Lemma \ref{lemma:lap-conc} both hold.
  Then for all $j$ with $a_j < a_i - \epsilon$,
  \begin{align*}
    Z_j &= \expectedtruescore_j + (\expectedquadscore_j - \expectedtruescore_j) + (\empiricalquadscore_j - \expectedquadscore_j) + (Z_j - \empiricalquadscore_j)  \\
      &\leq \expectedtruescore_j + \frac{3\epsilon}{7}  \\
    &\leq \expectedtruescore_i - \frac{4\epsilon}{7}  \\
    &=    \expectedtruescore_i - \frac{\epsilon}{7} - \frac{\epsilon}{7} - \frac{\epsilon}{7} - \frac{\epsilon}{7}  \\
    &\leq \expectedtruescore_i - \frac{\epsilon}{7} + (\expectedquadscore_i - \expectedtruescore_i) + (\empiricalquadscore_i - \expectedquadscore_i) + (Z_i - \empiricalquadscore_i)  \\
    &= Z_i - \frac{\epsilon}{7}  \\
    &< Z_i .
  \end{align*}
  So no non-$\epsilon$-optimal forecaster is selected.
\end{proof}

\section{Approximate Truthfulness for the Online Learning Setting} \label{sec:appendix-online-approx-truth}

This section proves the approximate truthfulness of Follow the Regularized Leader (FTRL), under some conditions.
We rely on the results developed in Section \ref{subsec:ftrl-approx-truth} and focus on how they adapt to the online learning setting.

\begin{proof}[Proof of Lemma \ref{lemma:online-approx-truth}]
  Fix reports $R_{-i}$ and consider a round $t$.
  We must show that if $r_{it} = r_{it}^t$ satisfies $|r_{it}^t - p_{it}| > \gamma$, it is dominated.
  To capture the sequential nature of the online learning setting, let $\vec y_{-t}^s$ be all outcomes from $1$ to $s-1$ except $t$, and similarly for $p_{i,-t}^s$, and let $U_{it}^s$ be as in eq.~\eqref{eq:bar-U} but with respect to reports $R_{1..s-1}$.
  On round $t$, expert $i$ seeks to maximize the following, in expectation over $\vec{y}_{t..T}$, fixing $\vec{y}_{1.t-1}$:
  \begin{align*}
    U_{it}
    &= \E \sum_{s=t+1}^T \uco_{it}^s M(R_{1..s-1},\vec y_{1..s-1})_i
    \\
    &= \sum_{s=t+1}^T \uco_{it}^s \E M(R_{1..s-1},\vec y_{1..s-1})_i
    \\
    &= \sum_{s=t+1}^T \uco_{it}^s \E U_{it}^s(r_{it}|\vec y_{-t}^s)~.
  \end{align*}
  We are therefore in the same situation as Theorem~\ref{thm:approx-truthful}: as a function of $r_{it}$, the utility $u_{it}$ is a nonnegative linear combination of strictly concave functions, each of which are optimized by a report within $\gamma$ of $p_{it}$, where $\gamma = (\beta+1)\eta$.
  Precisely the same proof gives that in particular if $|\hat r_{it}^t - p_{it}| > \gamma$, then it is strictly dominated.
\end{proof}

\end{document}